%
%
%
\documentclass[12pt]{UOthesis}

%
%
%
\usepackage{fancybox}
\usepackage{shadow}
\usepackage[all]{xy}
\usepackage{url}
\usepackage{rotating}
\usepackage{makeidx}
\makeindex


\allowdisplaybreaks

%
\NoDedications
\NoResume
\NoListoftables
\NoListoffigures
\NoListofsymbols
\NoIntro

%
\bibliographystyle{plain}

\setUOname{Damjan Kalajdzievski}         
\setUOcpryear{2012}               
\setUOtitle{Measurability Aspects of the Compactness Theorem for Sample Compression Schemes}  


\setUOsupone{Vladimir Pestov}                

\msc               


\setUOabstract{
~~


In 1998, it was proved by Ben-David and Litman that a concept space has a sample compression scheme of size $d$ if and only if every finite subspace has a sample compression scheme of size $d$. In the compactness theorem, measurability of the hypotheses of the created sample compression scheme is not guaranteed; at the same time measurability of the hypotheses is a necessary condition for learnability. 

In this thesis we discuss when a sample compression scheme, created from compression schemes on finite subspaces via the compactness theorem, have measurable hypotheses. We show that if $X$ is a standard Borel space with a $d$-maximum and universally separable concept class $\m{C}$, then $(X,\CC)$ has a sample compression scheme of size $d$ with universally Borel measurable hypotheses. Additionally we introduce a new variant of compression scheme called a copy sample compression scheme.

}


\setUOthanks{I would like to thank my supervisor Dr. Pestov for all that he has taught me and for his time and support. I am also grateful to have had the Ontario Graduate Scholarship as financial support, and the financial support of the Department of Mathematics at the University of Ottawa. I would like to express my thanks to both thesis examiners, Dr. J. Levy and Dr. S. Zilles, for their careful reading of the thesis, and suggesting numerous improvements.}

{\theorembodyfont{\rm}
  \newtheorem{rmk}[theo]{Remark}
  \newtheorem{egg}[theo]{Example}
  \newtheorem{notation}[theo]{Notation}
}
\newcommand{\RR}{\mathbb{R}}
\newcommand{\NN}{\mathbb{N}}
\newcommand{\m}[1]{\mathcal{#1}}
\newcommand{\M}[1]{\mathcal{#1}}

\newcommand{\CC}{\mathcal{C}}
\newcommand{\LL}{\mathcal{L}}
\newcommand{\FF}{\mathcal{F}}
\newcommand{\HH}{\mathcal{H}}
\newcommand{\hh}{\mathfrak{H}}

\newcommand{\A}{\mathfrak{A}}
\newcommand{\s}{\subseteq}
\newcommand{\0}{\sigma}
\newcommand{\e}{\varepsilon}

\DeclareMathOperator{\dom}{dom}
\DeclareMathOperator{\range}{range}

\DeclareMathOperator{\vc}{VC}

\begin{document}


\nonumchapter{Introduction}
~~

The common context for Statistical Learning Theory is the setting of a concept space. A concept space is a set $X$ and a family $\CC$ of subsets of $X$ called concepts. The goal of Statistical Learning Theory is to be able to, given a hidden concept $C$ in $\CC$, ``learn" the concept by sampling finite subsets of $X$ labelled according to their membership in $C$. Learning is usually provided by a function or algorithm which takes a labelled sample as an input, and outputs a subset of $X$ called a hypothesis. The function or algorithm is said to be consistent if the hypothesis, for a given sample labelled according to a concept, agrees with the labelling on this sample. The distance between the hypothesis and the target concept can be quantified in varying ways depending on the model of ``learning".

The PAC, or ``Probably Approximately Correct", model for learning was introduced by Valiant in the 80's \cite{Valiant:1984:TL:800057.808710}. A concept space $(X,\CC)$ is PAC learnable with respect to a family $\m{P}$ of probability measures on $X$ if there exists a function, called a ``learning rule", mapping labelled samples in $X$ to subsets of $X$, where for any $0<\e\leq1,\ 0<\delta\leq1$, there is a positive integer $m$ such that given any $C\in\CC$, $P\in\m{P}$, the probability (according to $P$) of sample of size greater than $m$ being mapped to a hypothesis with error from $C$ greater than $\e$, is less than $\delta$ (error of a hypothesis from a concept is meant to be the probability (according to $P$) of their symmetric difference). In particular if $\m{P}$ is the family of all probability measures on $X$, one speaks of distribution-free PAC learnability, and this is the context that will be of exclusive interest to us.

Learnability is intimately related to the concept of VC dimension, or ``Vapnik-Chervonenkis Dimension", introduced by Vapnik and Chervonenkis in \cite{vapnik:264}. VC dimension is a parameter quantifying the combinatorial complexity of a concept space defined from the idea of a subset of $X$ being "shattered" by concepts. $A\s X$ is shattered by $\CC$ if any subset of $A$ is equal to some concept in $\CC$ intersected with $A$; the VC dimension of a concept space is the supremum of the cardinalities of all finite subsets which are shattered by $\CC$. In 1986 Blumer et al. related VC dimension to PAC learnability by proving that (given a measurability condition called ``well behaved") a concept space has finite VC dimension if and only if it is PAC learnable if and only if any consistent learning rule which provides concepts as hypotheses, learns the concept space with sample sizes (sample complexity) bounded above by a formula involving the VC dimension of the concept space. The bounds mentioned are improved by Shawe-Taylor et al. in \cite{Shawe-taylor93boundingsample} who assumed stronger measurability conditions.
 
A natural class of consistent learning algorithms are sample compression schemes which were introduced by Littlestone and Warmuth in 1986 \cite{Littlestone86relatingdata}. A sample compression scheme of size $d$ maps subsets (labelled or unlabelled depending on the variant) of $X$ of size at most $d$, to hypotheses, such that each sample labelled according to a concept has a subsample of size at most $d$ which is mapped to a hypothesis agreeing with the concept on the initial sample. A sample compression scheme of size $d$ can be thought to save every sample labelled according to a concept to some subsample of size at most $d$. Also in \cite{Littlestone86relatingdata} it is shown that, given measurability conditions of the sample compression scheme and concept space, every sample compression scheme is a PAC learning rule with sample complexity bounded above by a formula involving the size of the sample compression scheme (the bounds are due to Floyd and Warmuth in \cite{Floyd95samplecompression}). For a concept space of VC dimension $d$, a sample compression scheme of size $d$ learns with bounds on sample complexity better than that of the bounds for general consistent learning rules provided by the VC dimension (illustrated in Figure 3.1 in chapter 3). In chapter 2 we define our own variant of sample compression scheme called ``copy sample compression schemes", of which sample compression schemes are a special case. A copy sample compression scheme can be thought of as an algorithm which checks sample compression schemes of varying sizes, and for different concept classes, and picks a hypothesis for the concepts in any of these different concept classes. Copy sample compression schemes also add other flexibilities as will be exhibited in chapter 3, and may have better bounds in some instances for sample complexity than that of sample compression schemes.

A natural open question posed in \cite{Littlestone86relatingdata} is whether or not a concept space with VC dimension $d$ also has a sample compression scheme of size $O(d)$. The question currently remains open. The existence of an unlabelled sample compression scheme of size $d$ does imply that the concept space has VC dimension at most $d$, and in some cases like that of maximum classes, VC dimension $d$ is enough to provide a sample compression scheme of size $d$ \cite{Kuzmin06unlabeledcompression}. 

In \cite{Ben-david98combinatorialvariability}, it was proved by Ben-David and Litman, using a proof based on the compactness theorem of predicate logic, that a concept space $(X,\CC)$ has a sample compression scheme of size $d$ if and only if every finite subspace has a sample compression scheme of size $d$ ($(Y,\CC')$ is a finite subspace if $Y\s X$ is finite and $\CC'=\{C\cap Y:C\in\CC\}$). We provide a different, and technically simpler, proof of this using an approach with ultralimits normally used in Analysis. In either case the result, named the ``compactness theorem" for sample compression schemes, did not take any measurability considerations into account, and as our example in chapter 4 shows the resulting hypotheses need not be measurable; chapter 4 of this thesis explores when the compression scheme resulting from the compactness theorem has measurable hypotheses. Perhaps the most useful of these results is that, when $X$ is a standard Borel space with a $d$-maximum and universally separable concept class $\m{C}$, then $(X,\CC)$ has a sample compression scheme of size $d$ with universally Borel measurable hypotheses. In the appendix B.1 we also collect differing measurability conditions rules from varying relevant papers. These conditions are defined to be utilized in proving different bounds for sample complexity of consistent learning rules and for sample complexity of sample compression schemes in these various papers.

This Thesis starts in chapter 1 by outlining VC dimension and important concepts such as the concepts of maximality due to \cite{welzl87rangespaces} and embeddability due to \cite{Ben-david98combinatorialvariability}. In chapter 2 sample compression schemes and their variants are presented and discussed. In Chapter 3 we introduce PAC learnability, and investigate sample complexity for spaces with finite VC dimension and sample complexity for spaces with sample compression schemes or copy sample compression schemes. Finally in chapter 4 we investigate measurability of hypotheses from sample compression schemes generated by the compactness theorem. We also include two appendices, with appendix A consisting of some preliminaries, and appendix B consisting of some excluded proofs and a list of varying technical measurability conditions which are referenced throughout the thesis.








\cleardoublepage

\chapter[Vapnik-Chervonenkis Dimension]{Vapnik-Chervonenkis Dimension}

\section{Vapnik-Chervonenkis Dimension}
~~~

We begin with the definitions of a concept space and the VC dimension associated to a concept space.

\begin{defn}
A {\bfseries concept space}\index{Concept space} is a pair $(X,\mathcal{C})$ consisting of a set $X$ equipped with a set $\mathcal{C}$ of subsets of $X$. $X$ is referred to as the {\bfseries domain}\index{Domain}, and $\CC$ is referred to as the {\bfseries concept class}\index{Concept class}. For a subset $A$ of $X$, denote $$\CC\sqcap A=\{C\cap A : C\in \mathcal{C}\},$$ and we say that $(Y,\mathcal{C}')$ is a {\bfseries subspace}\index{Subspace} of $(X,\mathcal{C})$ if $Y\subseteq X$ and $\mathcal{C}'=\CC\sqcap Y$.
\end{defn}

\begin{defn}[\cite{vapnik:264}]
We say that a subset $A$ of $X$ is {\bfseries shattered}\index{Shattered} by $\mathcal{C}$ if 
$\CC\sqcap A=2^A$.
\end{defn}

\begin{defn}[\cite{vapnik:264}]
The {\bfseries Vapnik-Chervonenkis dimension}\index{VC dimension} or {\bfseries VC-dimension} of $(X,\mathcal{C})$ (denoted $\vc(X,\mathcal{C})$, or $\vc(\mathcal{C})$ when $X$ is understood) is $$\vc(\mathcal{C})=\sup\{ |A| : A\subseteq X, \text{A is finite, A is shattered by } \mathcal{C}\}.$$ In particular if the value is infinite, we say $\vc(\mathcal{C})=\infty$.
\end{defn}

The following are some elementary or well known examples of VC dimension which can be found in every text on statistical learning.
\begin{egg}
Let $X$ be any infinite set and $\CC=2^X$, Then clearly $\vc(X,\CC)=\infty$ because every (finite) $A\s X$ has $\CC\sqcap A=2^A$ and so $A$ is shattered.
\end{egg}
\begin{egg}
Let $X$ be any totally ordered set with at least two elements, and let 
$$\M{C}=\{I_x:x\in X\}\cup\{\emptyset\},$$ where $I_x=\{y\in X:y\leq x\}$ is an initial segment of $(X,<)$. For any $x,y \in X$ where $x\neq y$, without loss of generality $y<x$, we have 
$$\M{C}\sqcap \{x,y\}=\{\emptyset,\{y\},\{x,y\}\},$$ hence $\{y\}$ is shattered, however $\{x\}\notin\M{C}\sqcap \{x,y\}$ and so $\{x,y\}$ is not shattered. Therefore $\vc(X,\M{C})=1$.
\end{egg}
\begin{egg}
Let $X=\RR^2$ and 
$$\M{C}=\{[a,b]\times[c,d]:a,b,c,d\in\RR\}.$$ Clearly $\M{C}$ shatters $\{(-1,0),(0,1),(0,1),(0,-1)\}$. Now let 
$$A=\{(a_1,a_2),(b_1,b_2),(c_1,c_2),(d_1,d_2),(e_1,e_2)\}$$ be given. Without loss of generality $(a_1,a_2)$ is the leftmost point, $(b_1,b_2)$ is the highest point, $(c_1,c_2)$ is the rightmost point, and $(d_1,d_2)$ is the lowest point. Since
$$\{(a_1,a_2),(b_1,b_2),(c_1,c_2),(d_1,d_2)\}\s[a,b]\times[c,d]\in\M{C},$$ we have 
$$(e_1,e_2)\in[a_1,c_1]\times[d_2,b_2]\s[a,b]\times[c,d],$$ and so $$\{(a_1,a_2),(b_1,b_2),(c_1,c_2),(d_1,d_2)\}\notin\M{C}\sqcap A.$$ Therefore $\vc(X,\mathcal{C})$=4.
\end{egg}



Unless otherwise specified, from now on we will consider $(X,\mathcal{C})$ to be our concept space, and $d,k,m,n\in\NN$.

\begin{defn}[\cite{vapnik:264}]
The {\bfseries n'th shatter coefficients}\index{Shatter coefficients} of $\CC$ are defined to be $$s(\CC,n)=max\{|\CC\sqcap A|: A\subseteq X,\ |A|=n\}.$$
\end{defn}
Note that $\vc(\CC)=\sup\{n\in\NN:s(\CC,n)=2^n\}$.
\begin{notation}
Let $\binom{n}{\leq d}$ denote $$\binom{n}{\leq d}=\sum_{i=0}^{d}\binom{n}{i}.$$
\end{notation}
\begin{theo}[Sauer-Shelah Lemma \cite{MR0307902}]\index{Sauer-Shelah lemma}
Let $\vc(\CC)=d$. Then $\forall n\in\NN$ $$s(\CC,n)\leq\binom{n}{\leq d}\leq\left(\frac{en}{d}\right)^d.$$
\end{theo}

We can consider $\CC$ as ``function class"; a family of $\{0,1\}$ valued functions on $X$:
Let 
$$\FF_{\CC}=\{\chi_C: C \in \CC\}$$ where $\chi_C$ is the indicator function of $C$ on $X$. Similarly, if $\FF$ is a family of $\{0,1\}$ valued functions on $X$ we can get a concept class 
$$\CC_{\FF}=\{C\in2^X:\chi_C=f,\text{ for some } f\in\FF\}.$$
Defining shattering for a function class $\FF$ as: $A\subseteq X$ is shattered by $\FF$ if $\{f_{|_A}:f\in\FF\}=2^A$. We can see that $\FF$ shatters $A$ iff $\CC_{\FF}$ shatters $A$, and $\CC$ shatters $A$ iff $\FF_{\CC}$ shatters $A$ so the two notions are equivalent.\\ In the future we will consider concepts as functions, but will still use set relations and operations on concepts, which will have the obvious meaning; for instance $x\in C$ will be the same as $C(x)=1$, $C\s C'$ the same as support$(C)\s$ support$(C')$, $C\cap C'$ the same as $\min\{C,C'\}$, etc.

\section{Maximum and Maximal Classes}
The following definitions are due to \cite{welzl87rangespaces}.
\begin{defn}
Let $d\in\NN$. A concept class $\m{C}$ is {\bfseries d-maximum}\index{Maximum} if for every $A\s X$ finite, 
$$|\m{C}\sqcap A|=\binom{|A|}{\leq d}.$$
\end{defn}
\begin{defn}
A concept class $\m{C}$ is {\bfseries d-maximal}\index{Maximal} if $\vc(\m{C})=d$,\\and for any $D\in2^X\setminus\m{C}$ we have $\vc(\m{C}\cup\{D\})>d$.
\end{defn}

Note that if $\m{C}$ is $d$-maximum, then $\vc(\m{C})=d$ because for $A\s X$, if $|A|=d$ then 
$$|\m{C}\sqcap A|=\binom{d}{\leq d}=2^d=2^{|A|},$$
 so $A$ is shattered, and if $|A|>d$ then 
 $$|\m{C}\sqcap A|=\binom{|A|}{\leq d}<2^{|A|},$$ so $A$ is not shattered. 

As a consequence of Zorn's Lemma every concept class of VC dimension $d$ is contained in a $d$-maximal concept class.

Maximum does not necessarily imply maximal and vice versa. Also note that if $(X,\CC)$ is $d$-maximum, any subspace of $(X,\CC)$ is $d$-maximum as well, but this is not necessarily the case for $d$-maximal. 
\begin{egg} 
Let $X=\{1,2,3,4\}$, \\$\CC=\{\{1\},\{2\},\{3\},\{1,2\},\{1,3\},\{2,3\},\{1,4\},\{2,4\},\{3,4\},\{1,2,3\}\}$. It is easy to check $\m{C}$ is $2$-maximal but not $2$-maximum since 
$$|\m{C}|=10<11=\binom{4}{\leq 2}.$$
\end{egg}
\begin{egg}[\cite{Floyd95samplecompression}]
Let $X=\{1,2,3,4\}$, \\$\CC=\{\{1\},\{2\},\{1,2\},\{1,3\},\{2,3\},\{1,4\},\{2,4\},\{3,4\},\{1,2,3\},\{1,2,4\}\}$. It is easy to check $\m{C}$ is $2$-maximal but not $2$-maximum since 
$$|\m{C}|=10<11=\binom{4}{\leq 2}.$$
\end{egg}
\begin{egg}
Let $X=\RR$ and $\m{C}=\{(-\infty,a):a\in\mathbb{Q}\}$. For any $A=\{x_1,..., x_n\}\s X$ finite, without loss of generality with $x_1<...<x_n$,  we have that $$|\m{C}\sqcap A|=|\{\emptyset,\{x_1\},\{x_1,x_2\},...,\{x_1,..., x_n\}\}|=|A|+1=\binom{|A|}{\leq 1},$$ thus $\m{C}$ is $1$-maximum. However, $\m{C}$ is not $1$-maximal since $\vc(\m{C}\cup \{X\})=1$. Note that any concept space where $X$ is totally ordered with no minimal element, and where $\M{C}$ is the set of all initial segments, is $1$-maximum. This is also the case if $X$ has at least two elements, where $\M{C}$ is the set of all initial segments and the empty set.
\end{egg}

\begin{rmk}
If $(X,\m{C})$ is finite, then $d$-maximum implies $d$-maximal.\\
If $\CC$ is $d$-maximum, then any $A\in2^X\setminus\m{C}$ has $$|\m{C}\cup\{A\}|=|\m{C}|+1=\binom{|X|}{\leq d}+1>\binom{|X|}{\leq d}$$ hence by Sauer's Lemma $\vc(\m{C}\cup\{A\})>d$, and therefore $\CC$ is $d$-maximal.
\end{rmk}
\begin{lem}[\cite{welzl87rangespaces}]
Let $(X,\m{C})$ be finite with VC-dimension $d$. For $x\in X$, there are at most $\binom{|X|-1}{\leq d-1}$ sets $C\in\m{C}$ such that $x\in C$ and $C\setminus\{x\}\in\m{C}$.
\end{lem}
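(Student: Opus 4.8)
The plan is to reduce the count to a single invocation of the Sauer--Shelah Lemma on the reduced ground set $X\setminus\{x\}$. First I would introduce the auxiliary family
$$\m{D}=\{C\setminus\{x\}:C\in\m{C},\ x\in C,\ C\setminus\{x\}\in\m{C}\},$$
regarded as a concept class on the domain $X\setminus\{x\}$ (note $\m{D}\s\m{C}$). The deletion map $C\mapsto C\setminus\{x\}$, restricted to the concepts $C$ being counted, is injective --- if $C_1\setminus\{x\}=C_2\setminus\{x\}$ and $x\in C_1\cap C_2$ then $C_1=C_2$ --- and maps onto $\m{D}$, so the quantity in the statement equals $|\m{D}|$ (if $\m{D}=\emptyset$ the bound is immediate, so I may assume $\m{D}\neq\emptyset$). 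Since every member of $\m{D}$ is contained in $X\setminus\{x\}$ and this set, of size $|X|-1$, is its own unique subset of that size, we get $s(\m{D},|X|-1)=|\m{D}\sqcap(X\setminus\{x\})|=|\m{D}|$.

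The crux is the claim that $\vc(\m{D})\leq d-1$, which I would prove by lifting shattered sets back up to $\m{C}$. Suppose $A\s X\setminus\{x\}$ is shattered by $\m{D}$; I claim $A\cup\{x\}$ is shattered by $\m{C}$. Fix $B\s A\cup\{x\}$ and put $B'=B\setminus\{x\}\s A$. Shattering of $A$ by $\m{D}$ gives $D\in\m{D}$ with $D\cap A=B'$; write $D=C\setminus\{x\}$ with $C\in\m{C}$, $x\in C$, $C\setminus\{x\}\in\m{C}$. If $x\notin B$, then using $D\s X\setminus\{x\}$ we get $D\cap(A\cup\{x\})=D\cap A=B'=B$ with $D\in\m{C}$; if $x\in B$, then using $x\in C$ and $x\notin A$ we get $C\cap(A\cup\{x\})=(C\cap A)\cup\{x\}=(D\cap A)\cup\{x\}=B'\cup\{x\}=B$. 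Either way $B\in\m{C}\sqcap(A\cup\{x\})$, so $A\cup\{x\}$ is shattered by $\m{C}$, forcing $|A|+1\leq\vc(\m{C})=d$, i.e. $|A|\leq d-1$.

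Finally I would assemble the pieces: applying the Sauer--Shelah Lemma to $\m{D}$ on the ground set $X\setminus\{x\}$ of cardinality $|X|-1$, and using $\vc(\m{D})\leq d-1$ together with the monotonicity of $k\mapsto\binom{n}{\leq k}$, we obtain
$$|\m{D}|=s(\m{D},|X|-1)\leq\binom{|X|-1}{\leq\vc(\m{D})}\leq\binom{|X|-1}{\leq d-1},$$
which is the asserted bound. The only step with real content is the VC-dimension estimate in the second paragraph; I expect the bookkeeping there --- correctly choosing whether to witness $B$ by $C$ or by $D=C\setminus\{x\}$ according to whether $x\in B$ --- to be the part that most needs care, while the injectivity of the deletion map and the appeal to Sauer--Shelah are routine.
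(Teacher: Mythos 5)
Your proof is correct and takes essentially the same route as the paper's: your family $\mathcal{D}$ is exactly the paper's $\mathcal{C}'\sqcap(X\setminus\{x\})$, the injectivity of the deletion map and the lifting of a shattered $A\subseteq X\setminus\{x\}$ to a shattered $A\cup\{x\}$ in $\mathcal{C}$ are the same two observations, and Sauer--Shelah does the same work. The only difference is organizational: you prove $\vc(\mathcal{D})\leq d-1$ directly and then apply Sauer--Shelah, whereas the paper assumes $|\mathcal{C}'|>\binom{|X|-1}{\leq d-1}$ and uses Sauer--Shelah to extract a shattered $(d+1)$-set for a contradiction.
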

\begin{proof}
Let $x_0\in X$, 
$Y=X\setminus\{x_0\},$ and 
$$\M{C}'=\{C\in\M{C}:x_0\in C\text{ and }C\setminus\{x_0\}\in\m{C}\}.$$ Suppose 
$$|\M{C}'|>\binom{|X|-1}{\leq d-1}.$$ Then $$|\M{C}'\sqcap Y|=|\M{C}'|>\binom{|X|-1}{\leq d-1}=\binom{|Y|}{\leq d-1},$$ thus by Sauer's Lemma $\vc(Y,\m{C}'\sqcap Y)>d-1$. Let $\{x_1,...,x_d\}$ be $d$ points in $Y$ shattered by $\m{C}'\sqcap Y\s\M{C}$, and let $A=\{x_0,x_1,...,x_d\}$. Now by the definition of $\M{C}'$, for each $C\in\M{C}'\sqcap \{x_1,...,x_d\}=2^{\{x_1,...,x_d\}}$ there is $C_{x_0}\in\M{C}$ such that $C_{x_0}=C\cup\{x_0\}$, hence $$\m{C}\sqcap A\supset2^{\{x_1,...,x_d\}}\cup \{B\cup\{x_0\}:B\in2^{\{x_1,...,x_d\}}\}=2^A,$$ contradicting $\vc(\m{C})=d$.
\end{proof}
\begin{theo}[\cite{welzl87rangespaces}]
Let $(X,\m{C})$ be finite with VC-dimension $d$. The concept space $(X,\m{C})$ is $d$-maximum if and only if 
$$|\m{C}|=\binom{|X|}{\leq d}.$$
\end{theo}
\begin{proof}
If $(X,\m{C})$ is $d$-maximum then by the definition 
$$|\m{C}|=\binom{|X|}{\leq d}.$$
For the converse, we will use induction on $|X|=n\geq d$.\\
If $n=d$, then $\m{C}=2^X$ is maximum and 
$$|\m{C}|=2^d=\binom{d}{\leq d}.$$
Assume the statement of the theorem is true for all $(X,\m{C})$ where $|X|\leq n$, and let $(X,\m{C})$ have $|X|=n+1$. Let $x_0\in X$ and let $Y=X\setminus\{x_0\}$. By the induction hypothesis, it suffices to show that 
$$|\M{C}\sqcap Y|=\binom{n}{\leq d}.$$ By lemma 1.2.7,\\$\M{C}'=\{C\in\M{C}:x_0\in C\text{ and }C
\setminus\{x_0\}\in\m{C}\}$ has size at most $\binom{n}{\leq d-1}$. Define $$\pi:\M{C}\setminus \M{C}'\rightarrow \M{C}\sqcap Y\text{ by }\pi(C)=C\cap Y.$$ We will show $\pi$ is injective. Suppose there is $C_1\neq C_2$ in $\M{C}\setminus \M{C}'$ such that $$\pi(C_1)=C_1\cap Y=C_2\cap Y=\pi(C_2).$$ If $x_0\in C_1\cap C_2$, then $$C_1=(C_1\cap Y)\cup\{x_0\}=(C_2\cap Y)\cup\{x_0\}=C_2,$$ and if $x_0\notin C_1\cup C_2$, then $$C_1=C_1\cap Y=C_2\cap Y=C_2,$$ so without loss of generality $x_0\in C_1\setminus C_2$. We get that $$C_1\setminus\{x_0\}=C_1\cap Y=C_2\cap Y=C_2\in\M{C}$$ hence $C_1\in\M{C}'$, a contradiction, therefore $\pi$ is injective. Finally, $$|\M{C}\sqcap Y|\geq|\M{C}\setminus \M{C}'|=|\M{C}|-|\M{C}'|\geq\binom{n+1}{\leq d}-\binom{n}{\leq d-1}=\binom{n}{\leq d}.$$
\end{proof}

\section{Concepts as Relations}
~

In this section we will look at concept spaces defined as a relation on a pair of sets. This will allow us to characterize useful notions of embeddings for concept spaces as found in \cite{Ben-david98combinatorialvariability}. It will also allow us to define the dual concept space of a concept space.

We can define a concept class on a domain $X$ via a relation $R\s X\times Y$ for some set $Y$, by $\M{C}_R=\{C_y:y\in Y\}$ where $C_y=\{x\in X: (x,y)\in R\}$. Similarly given $(X,\m{C})$, the corresponding space in the form $(X,Y,R)$ is $(X,\M{C},\in)$. A subclass of $(X,Y,R)$ is  $(A,B,R_{|_{A\times B}})$ where $A\s X$, and $B\s Y$. This is convenient for defining the idea of a dual to a concept space as follows:
\begin{defn}
Given a concept space $(X,Y,R)$, the {\bfseries dual concept space}\index{Dual concept space} of $(X,Y,R)$, denoted $$(X,Y,R)^*,$$ is 
$$(Y,X,R^*),\text{ where }R^*=\{(y,x):(x,y)\in R\}.$$ The dual concept space of a space represented as $(X,\m{C})$, can be thought of as 
$$(\m{C},\{\{C\in\M{C}:x\in C\}:x\in X\}).$$
\end{defn}

\begin{defn}[\cite{Ben-david98combinatorialvariability}]
Let $(X,Y,R)$, $(X',Y',R')$ be concept spaces. An {\bfseries embedding}\index{Embeddable}\index{Generalized embeddable} from $(X,Y,R)$ to $(X',Y',R')$ is a function $\pi:X\times Y\rightarrow X'\times Y'$ such that for every $$(x,y)\in X\times Y,\ (x,y)\in R\text{ iff }\pi((x,y))\in R'.$$\\A {\bfseries generalized embedding} from $(X,Y,R)$ to $(X',Y',R')$ is a function $\tau\in2^X$ and a function $\pi:X\times Y\rightarrow X'\times Y'$ such that for every $(x,y)\in X\times Y$,
\begin{align*} 
&\text{ if } \tau(x)=0 \text{ then }(x,y)\in R \text{ iff }\pi((x,y))\in R',\\
&\text{ if }\tau(x)=1\text{ then }(x,y)\in R \text{ iff }\pi((x,y))\notin R'.
\end{align*}

$(X,Y,R)$ is {\bfseries weakly (generalized) embeddable}\index{Weakly embeddable}\index{Weakly generalized embeddable} in $(X',Y',R')$ if every finite subclass $(A,B,R_{|_{A\times B}})$ of $(X,Y,R)$ is (generalized) embeddable in $(X',Y',R')$.
\end{defn}

The above notions partially order any set of concept spaces; if there exists an embedding or generalized embedding from $(X,Y,R)$ to $(X',Y',R')$, we will denote that $$(X,Y,R)\preceq_{emb}(X',Y',R')$$ or $$(X,Y,R)\preceq_{gemb}(X',Y',R')$$ respectively.\\If $(X,Y,R)$ is weakly embeddable in $(X',Y',R')$, or weakly generalized embeddable in $(X',Y',R')$, we will denote that $$(X,Y,R)\preceq^w_{emb}(X',Y',R')$$ or $$(X,Y,R)\preceq^w_{gemb}(X',Y',R')$$ respectively.

\begin{defn}
Let us say that $(X,Y,R)$ and $(X',Y',R')$ are {\bfseries bi-embeddable}\index{Bi-embeddable} if $(X,Y,R)\preceq_{emb}(X',Y',R')$ and $(X',Y',R')\preceq_{emb}(X,Y,R)$.
\end{defn}

A concept space $(X,Y,R)$ may have some redundant points in $X\times Y$ as far as $R$ is concerned, but we can reduce it to its essential information by setting:
\begin{align*}
&x\sim x' \text{ in }X \text{ iff }\forall y\in Y,\ (x,y)\in R \iff (x',y)\in R,\\
&y\sim y' \text{ in }Y \text{ iff }\forall x\in X,\ (x,y)\in R \iff (x,y')\in R.
\end{align*}
$$R_{\sim}=\{([x]_{\sim},[y]_{\sim})\in X/\mathord\sim\times Y/\mathord\sim: (x,y)\in R\}$$ separates the points of $X/\mathord\sim\times Y/\mathord\sim$ and $(X/\mathord\sim,Y/\mathord\sim,R_{\sim})$ is bi-embeddable to $(X,Y,R)$ via the quotient map for $(X,Y,R)\preceq_{emb}(X/\mathord\sim,Y/\mathord\sim,R_{\sim})$, and mapping each equivalence class to its (choose any) representative for \\ $(X/\mathord\sim,Y/\mathord\sim,R_{\sim})\preceq_{emb}(X,Y,R)$.

\begin{rmk}
~
\begin{description}
\item (1) $(X,Y,R)\preceq_{emb}(X',Y',R')\Rightarrow(X,Y,R)\preceq_{gemb}(X',Y',R')\Rightarrow(X,Y,R)\preceq^w_{gemb}(X',Y',R')$.
\item (2) $(X,Y,R)\preceq^w_{emb}(X',Y',R')\Rightarrow(X,Y,R)\preceq^w_{gemb}(X',Y',R')$.
\end{description}
\end{rmk}
\begin{notation}
In the proof of the next proposition and throughout the further text we use the notation $\triangle$ for symmetric difference of a set; i.e. $V\triangle W:=(V\setminus W)\cup(W\setminus V)$
\end{notation}
\begin{prop}[\cite{Ben-david98combinatorialvariability}]
If $(X,Y,R)\preceq^w_{gemb}(X',Y',R')$ then $\vc(X,Y,R)\leq\vc(X',Y',R')$.
\end{prop}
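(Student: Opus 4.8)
The plan is to localise the problem to a single finite shattered subset of $X$, together with finitely many labels realising all of its traces, feed that finite subclass into the weak generalized embeddability hypothesis, and transport the shattering into $(X',Y',R')$. Since $\vc$ is by definition the supremum of the cardinalities of finite shattered sets (the value $0$ is always attained, and $\vc(X,Y,R)=\infty$ is handled by letting the cardinality grow without bound), it is enough to prove the following: if $A\s X$ is finite, $|A|=n$, and $A$ is shattered by $\m{C}_R$, then $\vc(X',Y',R')\geq n$.

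So fix such an $A$. For each $S\s A$ choose $y_S\in Y$ with $C_{y_S}\cap A=S$, and set $B=\{y_S:S\s A\}$, a subset of $Y$ with $|B|\leq 2^n$. Then $(A,B,R_{|_{A\times B}})$ is a finite subclass of $(X,Y,R)$ in which every subset of $A$ occurs as a trace, so by hypothesis it is generalized embeddable in $(X',Y',R')$: there exist a sign function $\tau\in 2^A$ and coordinate maps $f\colon A\to X'$, $g\colon B\to Y'$ such that, for all $a\in A$ and $y\in B$, $f(a)\in C'_{g(y)}$ holds precisely when ($a\in C_y$ and $\tau(a)=0$) or ($a\notin C_y$ and $\tau(a)=1$).

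From here I would argue in two steps. \emph{Step 1 (injectivity).} The map $f$ is injective on $A$: if $f(a)=f(a')$ with $a\neq a'$, then testing the displayed equivalence at the label $y_{\emptyset}$ (whose trace on $A$ is $\emptyset$) gives $\tau(a)=\tau(a')$, while testing it at $y_{\{a\}}$ (whose trace on $A$ is $\{a\}$) gives $\tau(a)\neq\tau(a')$, a contradiction. Hence $A':=f(A)$ has exactly $n$ elements. \emph{Step 2 (shattering).} The set $A'$ is shattered by $\m{C}_{R'}$: given any $T'\s A'$, put $T=f^{-1}(T')\s A$ and $T^{*}=T\triangle\{a\in A:\tau(a)=1\}$, and take the label $y_{T^{*}}\in B$. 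Applying the displayed equivalence at $y_{T^{*}}$ to each $a\in A$ separately, splitting on whether $\tau(a)=0$ or $\tau(a)=1$, one obtains $f(a)\in C'_{g(y_{T^{*}})}$ if and only if $a\in T$; that is, $C'_{g(y_{T^{*}})}\cap A'=T'$. Thus every subset of $A'$ is the trace of a concept of $(X',Y',R')$, so $A'$ is shattered and $\vc(X',Y',R')\geq|A'|=n$, which proves the claim.

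The step I expect to be the real obstacle is Step 2, and inside it the bookkeeping around the sign function $\tau$: since a generalized embedding is free to invert membership along the rows indexed by $\{a:\tau(a)=1\}$, one cannot obtain a label witnessing a prescribed pattern $T'$ simply by pulling $T'$ back along $f$; the pattern must first be corrected to the $\tau$-twisted set $T^{*}$, and this correction is admissible only because $A$ was chosen to be \emph{fully} shattered, so that $T^{*}$ --- like every subset of $A$ --- is genuinely realised by some $y_S\in B$. A secondary point worth making explicit is why passing to a finite subclass is necessary at all: weak generalized embeddability provides embeddings of finite subclasses only, but the $n$ points of $A$ together with the at most $2^n$ labels $\{y_S:S\s A\}$ already carry all the combinatorial information needed to certify a shattered $n$-set in $X'$.
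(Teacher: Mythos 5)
Your proof is correct and follows essentially the same route as the paper: restrict to the finite subclass consisting of the shattered set $A$ and one label per trace, apply the weak generalized embedding, and show the image of $A$ is shattered. The only (minor) difference is in the last step: you prove injectivity of the point map directly and exhibit, for each target pattern $T'$, the witnessing label via the $\tau$-twisted set $T^{*}$, whereas the paper proves injectivity of the label map $b\mapsto C_{\pi_2(b)}\cap\pi_1(A)$ and concludes by counting $2^{|A|}$ distinct traces; both finishes are valid.
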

\begin{proof}
Let $A$ be a finite subset of $X$ that is shattered, let $$B=\{b_D\in Y:D\s A,\ C_{b_D}\cap A=D\},$$ and let $\tau$, $\pi=(\pi_1,\pi_2)$ be the generalized embedding from $(A,B,R)$ into $(X',Y',R')$. $\pi_2$ is injective because for $b_1,b_2\in B,\ b_1\neq b_2$, there exists $x\in C_{b_1}\setminus C_{b_2}\cup C_{b_2}\setminus C_{b_1}$. Without loss of generality $x\in C_{b_1}\setminus C_{b_2}$. We have:
\begin{align*}
&\text{ if }\tau(x)=0,\text{ then } x\in C_{b_1}\text{ implies }\pi_1(x)\in C_{\pi_2(b_1)}\text{ and }x\notin C_{b_2}\text{ implies }\pi_1(x)\notin C_{\pi_2(b_2)};\\
&\text{ if }\tau(x)=1,\text{ then } x\in C_{b_1}\text{ implies }\pi_1(x)\notin C_{\pi_2(b_1)}\text{ and }x\notin C_{b_2}\text{ implies }\pi_1(x)\in C_{\pi_2(b_2)}.
\end{align*}
In either case $\pi_1(x)\in C_{\pi_2(b_1)}\triangle C_{\pi_2(b_2)}$ and so $\pi_2(b_2)\neq\pi_2(b_1)$. This also shows that $C_b\mapsto C_{\pi_2(b)}\cap\pi_1(A)$ is injective, hence $$2^{|A|}\geq|\{C_{\pi_2(b)}\cap\pi_1(A):b\in B\}|\geq|\{C_{b}:b\in B\}|=2^{|A|}$$ and therefore $\pi_1(A)$ is shattered in $(X',Y',R')$.
\end{proof}
\begin{theo}[\cite{Laskowski92vapnik-chervonenkisclasses}]
For any class $(X,Y,R)$:
$$\log_2(\vc(X,Y,R))-1<\vc((X,Y,R)^*)<2^{\vc(X,Y,R)+1}.$$
\end{theo}
\begin{proof}
Since $((X,Y,R)^*)^*=(X,Y,R)$, it suffices to show the first inequality. Let $A$ be a set of cardinality $\lfloor\log_2(\vc(X,Y,R))\rfloor$. One has $(A,2^A,\in)\preceq_{emb}(2^A,2^{2^A},\in)^*$ via $\pi(x,y)=(\{B\s A: x\in B\},y)$. Noting that $(2^A,2^{2^A},\in)$ is embeddable in any class of the same or greater VC-dimension, $(2^A,2^{2^A},\in)\preceq_{emb}(X,Y,R)$, and thus $(2^A,2^{2^A},\in)^*\preceq_{emb}(X,Y,R)^*$. Therefore $(A,2^A,\in)\preceq_{emb}(X,Y,R)^*$ and so $\lfloor\log_2(\vc(X,Y,R))\rfloor\leq\vc((X,Y,R)^*)$.
\end{proof}
\begin{cor}
$\vc(X,Y,R)<\infty$ if and only if $\vc((X,Y,R)^*)<\infty$.
\end{cor}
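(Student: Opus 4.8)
The plan is to read the corollary straight off the two-sided estimate in the preceding theorem (the Laskowski bounds), using only the fact that dualization is an involution, namely $((X,Y,R)^*)^* = (X,Y,R)$. No new combinatorics is needed; the work has already been done in proving that theorem.

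First I would handle the forward direction. Suppose $\vc(X,Y,R) = d$ for some $d \in \NN$. The upper bound in the theorem gives $\vc((X,Y,R)^*) < 2^{d+1}$, which is finite; hence the dual concept space has finite VC dimension.

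For the converse, suppose $\vc((X,Y,R)^*) = e$ for some $e \in \NN$. I would apply the same theorem with the roles of the space and its dual interchanged, i.e. to the concept space $(X,Y,R)^*$ in place of $(X,Y,R)$. Its dual is $((X,Y,R)^*)^* = (X,Y,R)$, so the upper inequality of the theorem yields $\vc(X,Y,R) < 2^{e+1} < \infty$. (Alternatively one may invoke the lower inequality directly: from $\log_2(\vc(X,Y,R)) - 1 < \vc((X,Y,R)^*) = e$ one obtains $\log_2(\vc(X,Y,R)) < e+1$, hence $\vc(X,Y,R) < 2^{e+1}$.)

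Since each direction is a single substitution into an already-established inequality, there is no real obstacle. The only point worth flagging is presentational: one should organize the argument around the finite cases (writing $d, e \in \NN$) or equivalently take contrapositives, so as to avoid having to interpret how $\log_2$ and the bound $2^{(\cdot)+1}$ behave at $\infty$; keeping every quantity finite makes all the estimates literal.
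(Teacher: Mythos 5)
Your proof is correct and is exactly the intended derivation: the paper states this corollary without proof as an immediate consequence of the preceding theorem, and reading it off the two-sided bound together with $((X,Y,R)^*)^*=(X,Y,R)$ is the same route. Your remark about keeping all quantities finite (or arguing by contrapositive) to avoid interpreting the bounds at $\infty$ is a sensible precaution and does not change the substance.
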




\cleardoublepage

\chapter[Sample Compression Schemes]{Sample Compression Schemes}

\section{Introduction of Sample Compression Schemes}
~~~

Sample compression schemes, introduced by Littlestone and Warmuth (\cite{Littlestone86relatingdata}), are naturally arising algorithms which learn concepts by saving finite samples of concepts to subsets of size at most $d$.

The following notations will be used in the definitions of sample compression schemes, and throughout the text.
\begin{notation}
For $d\in\NN\cup\{\infty\}$ let $$[X]^{< d}=\{A\s X:|A|< d\},$$ let 
$$\CC_{|_A}=\{C_{|_A} :C\in\CC\},$$ where $A\s X$ and $C_{|_A}$ is the function $C$ restricted to the domain $A$, and let
$$\CC_{|_{[X]^{< d}}}=\{C_{|_A} :C\in\CC,\ A\subseteq X,\ |A|<d\}.$$
We can similarly define 
$$[X]^{\leq d},\ \CC_{|_{[X]^{\leq d}}},\ [X]^{= d},\text{ and }\CC_{|_{[X]^{= d}}}.$$
\end{notation}
\begin{notation}
For two functions $f$,$g$ with $\dom(g)\s \dom(f)$, let $$g\sqsubseteq f$$ be the notation for $f$ extending $g$.
\end{notation}
\begin{defn}
For $d\in\NN$, an {\bfseries unlabelled sample compression scheme of size d}\index{Unlabelled sample compression scheme} on $(X,\mathcal{C})$ is a function 
$$\m{H}:[X]^{\leq d}\rightarrow 2^X$$
with the property that 
$$\forall f\in\CC_{|_{[X]^{< \infty}}},\ \exists \0\in [\dom(f)]^{\leq d},\text{ such that }f\sqsubseteq \m{H}(\0).$$
A {\bfseries labelled sample compression scheme of size d}\index{Labelled sample compression scheme} on $(X,\mathcal{C})$ is a function 
$$\m{H}:\CC_{|_{[X]^{\leq d}}}\rightarrow 2^X$$
with the property that 
$$\forall f\in\CC_{|_{[X]^{< \infty}}},\ \exists g\in \CC_{|_{[X]^{\leq d}}},\text{ such that }g\sqsubseteq f\sqsubseteq \m{H}(g).$$
We will call the range of a sample compression scheme the {\bfseries hypothesis class }\index{Hypothesis class} and denote it by $\mathfrak{H}$.
\end{defn}

\begin{egg}
Let $X$ be any totally ordered set, and let $\M{C}=\{I_x:x\in X\}$ be the set of all initial segments of $X$. Defining
\begin{align*}
&\m{H}:\{x\}\mapsto I_x,\ \emptyset\mapsto\emptyset,\\
\text{ and }&\m{H'}:\{x\}\mapsto I_x\setminus\{x\},\ \emptyset\mapsto X,
\end{align*}
we will show $\HH$ and $\HH'$ are unlabelled sample compression schemes of size $1$ on $(X,\CC)$. Given a sample $f\in\CC_{|_{[X]^{< \infty}}}$, if $f=0$ on its domain then $\emptyset\in [\dom(f)]^{\leq 1}$ and $f \sqsubseteq \m{H}(\emptyset)=\emptyset$. Otherwise $x_f=\max\{x:f(x)=1\}$ exists, and so 
$$\{x_f\}\in [\dom(f)]^{\leq 1},\ f \sqsubseteq \m{H}(\{x_f\})=I_{x_f}.$$ Thus $\HH$ is a sample compression scheme of size $1$ on $(X,\CC)$.\\
Similarly for $\M{H}'$, if $f=1$ on its domain then $\emptyset\in [\dom(f)]^{\leq 1}$ and $f \sqsubseteq \m{H}(\emptyset)=X$. Otherwise $x_f=\min\{x:f(x)=0\}$ exists, and so 
$$\{x_f\}\in [\dom(f)]^{\leq 1},\ f \sqsubseteq \m{H}(\{x_f\})=I_{x_f}\setminus\{x_f\}.$$ Therefore $\HH'$ is also a sample compression scheme of size $1$ on $(X,\CC)$.

\end{egg}

\begin{prop}
If $(X,\mathcal{C})$ has an unlabelled compression scheme of size $d$, then $(X,\mathcal{C})$ has a labelled compression scheme of size $d$.
\end{prop}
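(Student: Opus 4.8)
The plan is to convert an unlabelled scheme $\m{H}:[X]^{\leq d}\rightarrow 2^X$ into a labelled one $\m{H}':\CC_{|_{[X]^{\leq d}}}\rightarrow 2^X$ by simply throwing away the labels. Given a labelled sample $g\in\CC_{|_{[X]^{\leq d}}}$, its domain $\dom(g)$ is a subset of $X$ of size at most $d$, so it lies in $[X]^{\leq d}$, and we may define $\m{H}'(g)=\m{H}(\dom(g))$. The content of the proof is then to check that this $\m{H}'$ satisfies the defining property of a labelled compression scheme.

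First I would fix $f\in\CC_{|_{[X]^{<\infty}}}$, so $f=C_{|_A}$ for some $C\in\CC$ and finite $A\s X$. Applying the unlabelled scheme's defining property to $f$, there is a subset $\0\in[\dom(f)]^{\leq d}=[A]^{\leq d}$ with $f\sqsubseteq\m{H}(\0)$. Now set $g=f_{|_\0}=C_{|_\0}$, which is an element of $\CC_{|_{[X]^{\leq d}}}$ since $|\0|\leq d$, and clearly $g\sqsubseteq f$. It remains to observe $\dom(g)=\0$, so $\m{H}'(g)=\m{H}(\dom(g))=\m{H}(\0)$, and hence $f\sqsubseteq\m{H}(\0)=\m{H}'(g)$. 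Chaining these, $g\sqsubseteq f\sqsubseteq\m{H}'(g)$, which is exactly what is required; the size bound is automatic because $|\dom(g)|=|\0|\leq d$.

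There is essentially no obstacle here — the argument is a direct unwinding of the two definitions. The one point deserving a word of care is that $\m{H}'$ must be \emph{well-defined} as a function on $\CC_{|_{[X]^{\leq d}}}$: two distinct labelled samples $g_1\neq g_2$ with the same domain get sent to the same hypothesis $\m{H}(\dom(g_1))=\m{H}(\dom(g_2))$, which is fine since a function is allowed to be non-injective; what matters is only that the value depends on $g$ in a well-defined way, and it does, through $\dom(g)$. So the forgetful construction works, and this also explains informally why unlabelled compression is the stronger notion: an unlabelled scheme must reconstruct the labelling from the subset alone, whereas a labelled scheme is handed the labels for free.
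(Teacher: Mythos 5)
Your proof is correct and follows essentially the same route as the paper: both convert the unlabelled scheme by sending a labelled sample to $\m{H}$ applied to the underlying set, and verify the compression property by restricting $f$ to the set $\0$ supplied by the unlabelled scheme. Your version $\m{H}'(g)=\m{H}(\dom(g))$ is if anything slightly cleaner, since it is defined everywhere at once rather than only on the restrictions $f_{|_{\0_f}}$ as in the paper.
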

\begin{proof}
Let $(X,\mathcal{C})$ have an unlabelled compression scheme $\M{H}$ of size $d$. For every $f\in\CC_{|_{[X]^{< \infty}}}$ there is $\0_f\in [\dom(f)]^{\leq d}$ such that $f\sqsubseteq \m{H}(\0_f)$, and so any function $\M{H}':\CC_{|_{[X]^{\leq d}}}\rightarrow 2^X$ where $\M{H}'(f_{|_{\0_f}})=\m{H}(\0_f)$ will be a labelled compression scheme of size $d$.
\end{proof}

From now on we will only be dealing with unlabelled sample compression schemes unless otherwise mentioned.

\begin{prop}[\cite{Ben-david98combinatorialvariability}]
If $(X,Y,R)\preceq^w_{gemb}(X',Y',R')$ and $(X',Y',R')$ has a (labelled or unlabelled) sample compression scheme of size $d$, then $(X,Y,R)$ also has a sample compression scheme of size $d$ and of the same type.
\end{prop}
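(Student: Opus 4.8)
The plan is to transport a sample compression scheme along a weak generalized embedding by working on finite subspaces and then invoking the compactness theorem for sample compression schemes, which lets us conclude from "every finite subspace has a compression scheme of size $d$" that the whole space does. So the real content is: if $(A,B,R_{|_{A\times B}})$ is a finite subclass of $(X,Y,R)$ and $\tau,\pi=(\pi_1,\pi_2)$ is a generalized embedding of it into $(X',Y',R')$, then the compression scheme on $(X',Y',R')$ of size $d$ induces one on $(A,\CC_R\sqcap A)$ of size $d$. First I would fix a finite $A\s X$ and consider a concept $f$ on $A$; I want to produce a compressed subsample and a hypothesis extending $f$. Since $\pi_1(A)$ is finite, the restriction of the $(X',Y',R')$-scheme $\m{H}'$ to subsets of $\pi_1(A)$ gives a compression scheme of size $d$ for the finite subspace $(\pi_1(A),\CC_{R'}\sqcap\pi_1(A))$ — any concept $f'$ on $\pi_1(A)$ has a subsample $\0'\in[\pi_1(A)]^{\leq d}$ with $f'\sqsubseteq\m{H}'(\0')$.

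The key step is then to translate a concept $f$ on $A$ into a concept $f'$ on $\pi_1(A)$ using the generalized embedding, apply $\m{H}'$ there, and pull the resulting hypothesis back through $\pi_1$ while correcting with $\tau$. Concretely, given $f=C_b{}_{|_A}$ for some $b\in B$, the embedding condition says that for $x\in A$ with $\tau(x)=0$ we have $x\in C_b\iff\pi_1(x)\in C_{\pi_2(b)}$, and for $\tau(x)=1$ we have $x\in C_b\iff\pi_1(x)\notin C_{\pi_2(b)}$. So I would define the "twisted" concept $f'$ on $\pi_1(A)$ by $f'(\pi_1(x))=f(x)\oplus\tau(x)$ (this needs $\pi_1$ restricted to $A$ to be well-defined as a function into the quotient, which it is), note $f'$ agrees on $\pi_1(A)$ with $C_{\pi_2(b)}$ so it is a genuine concept-restriction there, take its compressed subsample $\0'\s\pi_1(A)$ of size $\leq d$ with $f'\sqsubseteq\m{H}'(\0')$, and set the compressed subsample for $f$ to be $\0=\pi_1^{-1}(\0')\cap A$ — which has size $\leq d$ since $\pi_1{}_{|_A}$ need not be injective, but we can choose one preimage per point, so $|\0|\leq|\0'|\leq d$ — and define the hypothesis on $A$ by $\m{H}(\0)(x)=\m{H}'(\0')(\pi_1(x))\oplus\tau(x)$. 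Then $\m{H}(\0)$ extends $f$ on $A$ because $\m{H}'(\0')$ extends $f'$ on $\pi_1(A)$ and the $\tau$-twist cancels.

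There is a bookkeeping subtlety: the compression scheme on a subspace $(A,\CC_R\sqcap A)$ must be defined on all subsets of $A$ of size $\leq d$ (or all concept-restrictions of such subsets, in the labelled case), and the construction above only tells us what to do with those subsamples that actually arise as compressed images; for the remaining subsamples we may assign an arbitrary hypothesis, which is harmless. I would also handle the labelled and unlabelled cases in parallel, since the only difference is whether $\m{H}$ is defined on subsets or on labelled subsets, and the twist by $\tau$ respects labellings. Having produced, for every finite $A\s X$, a size-$d$ sample compression scheme of the required type on $(A,\CC_R\sqcap A)$, I invoke the compactness theorem (Ben-David–Litman) to obtain a size-$d$ scheme of that type on all of $(X,Y,R)$.

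The main obstacle I expect is purely notational rather than conceptual: keeping straight the three layers — the abstract relation form $(X,Y,R)$ versus the concept-class form, the possible non-injectivity of $\pi_1$ on $A$ (so that passing subsamples back and forth needs a careful choice of preimages and an argument that sizes do not grow), and the sign-juggling with $\tau$ to make sure the twisted hypothesis genuinely extends the original concept and lands in $2^X$ rather than only in $2^A$. Once the single-finite-subspace transfer is written cleanly, the appeal to the compactness theorem is immediate.
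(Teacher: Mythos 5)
Your proposal is correct, and in fact the thesis states this proposition without proof (it is simply cited from Ben-David--Litman), so there is no in-paper argument to compare against; the route you take --- transfer the scheme to each finite subspace through a generalized embedding of a suitable finite subclass, then apply the compactness theorem --- is essentially the intended one, since weak embeddability only supplies finite-level data and some compactness argument is unavoidable. Two points deserve explicit mention when you write it up. First, there is no circularity in invoking Theorem 2.2.1: you use only its sufficiency direction (finite subspaces $\Rightarrow$ whole space), which is proved by the ultrafilter argument independently of this proposition, whereas it is the necessity direction that cites Corollary 2.1.7. Second, for a finite $A\s X$ you must choose a finite $B\s Y$ large enough that $\{C_y\cap A: y\in B\}=\CC_R\sqcap A$ (possible since $\CC_R\sqcap A$ is finite) before appealing to weak generalized embeddability, so that every concept trace on $A$ is covered by the embedding. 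Beyond that, your bookkeeping is sound: the twisted sample $f'(\pi_1(x))=f(x)\oplus\tau(x)$ is well defined even when $\pi_1$ is not injective on $A$ (both values equal the indicator of $\pi_1(x)\in C_{\pi_2(b)}$), and the hypothesis should be defined intrinsically as $\m{H}(\0)(x)=\m{H}'(\pi_1(\0))(\pi_1(x))\oplus\tau(x)$, with $\0$ a transversal of preimages of $\0'$ (not the full preimage $\pi_1^{-1}(\0')\cap A$, which could exceed size $d$) --- a slip you already catch yourself making.
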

\begin{cor}
If $(X,\mathcal{C})$ has a sample compression scheme of size $d$, then every subspace has a sample compression scheme of size $d$.
\end{cor}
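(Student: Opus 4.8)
The plan is to deduce this directly from the preceding proposition; essentially all that needs checking is that a subspace always embeds into the whole concept space. First I would move to the relational picture of the previous section: if $Y\s X$ and $\CC'=\CC\sqcap Y$, then the subspace $(Y,\CC')$ is, in relational form, the subclass $(Y,\CC,R)$ of $(X,\CC,\in)$ where $R$ is the restriction of $\in$ to $Y\times\CC$, since the concepts this relation carves out of $Y$ are exactly $\{C\cap Y:C\in\CC\}=\CC'$.

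Next I would build an embedding $(Y,\CC',\in)\preceq_{emb}(X,\CC,\in)$. As $\CC'=\CC\sqcap Y$, choose (by the axiom of choice) a map $e\colon\CC'\to\CC$ with $e(D)\cap Y=D$ for every $D\in\CC'$, and define $\pi\colon Y\times\CC'\to X\times\CC$ by $\pi(y,D)=(y,e(D))$. For $y\in Y$ and $D\in\CC'$ we have $y\in D$ iff $y\in e(D)\cap Y$ iff $y\in e(D)$, the last equivalence holding because $y\in Y$; this is precisely the requirement that $(y,D)$ be in the membership relation iff $\pi(y,D)$ is, so $\pi$ is an embedding. By the remark that $\preceq_{emb}$ implies $\preceq_{gemb}$ implies $\preceq^w_{gemb}$, we conclude $(Y,\CC',\in)\preceq^w_{gemb}(X,\CC,\in)$.

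Finally I would apply the preceding proposition with $(X',Y',R')=(X,\CC,\in)$: since $(X,\CC)$ has a sample compression scheme of size $d$ and the subspace is weakly generalized embeddable in it, the subspace has a sample compression scheme of size $d$ of the same type, labelled or unlabelled.

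I do not anticipate any real obstacle here: the mathematical content lies entirely in the preceding proposition, and the corollary is just the observation that a subspace is an embedded subclass of the ambient space. The only points requiring care are the bookkeeping of the relational reformulation and noticing that plain embeddability already suffices. As a sanity check one can also give a direct construction bypassing the proposition: from an unlabelled scheme $\m{H}$ of size $d$ on $(X,\CC)$, the assignment $\0\mapsto\m{H}(\0)\cap Y$ is an unlabelled scheme of size $d$ on $(Y,\CC')$, because any finite sample of a concept in $\CC'$ is also a finite sample of a corresponding concept in $\CC$, its domain lies in $Y$, and intersecting the compressed hypothesis with $Y$ does not affect the extension property on that sample.
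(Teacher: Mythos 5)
Your proof is correct and follows exactly the route the paper intends: the corollary is stated as an immediate consequence of the preceding proposition, obtained by noting that a subspace $(Y,\CC\sqcap Y)$ embeds into $(X,\CC)$ (hence is weakly generalized embeddable in it) via a choice of extensions $e(D)\in\CC$ with $e(D)\cap Y=D$. Your supplementary direct construction $\0\mapsto\m{H}(\0)\cap Y$ is also valid and arguably more transparent, but the main argument matches the paper's.
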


\section{Compactness Theorem}

\begin{theo}[Compactness Theorem, Ben-David and Litman \cite{Ben-david98combinatorialvariability}]\index{Compactness theorem}
A concept space $(X,\CC)$ has a sample compression scheme of size $d$ if and only if every finite subspace of $(X,\CC)$ has a sample compression scheme of size $d$.
\end{theo}

The compactness theorem is true for both types of sample compression schemes and similarly for all forms of extended sample compression schemes given in a following section. We will provide the proof of the theorem for unlabelled sample compression schemes. The proof we provide is simpler and more direct than the proof in \cite{Ben-david98combinatorialvariability} which is based on the Compactness Theorem of Predicate Logic. We use an approach with ultralimits, normally used in Analysis. (For preliminary information on filters and ultrafilters, see appendix A.2)

\begin{proof}
\emph{Necessity}: By corollary 2.1.7 if $(X,\CC)$ has a sample compression scheme of size $d$ every (finite) 
 subspace of $(X,\CC)$ has a sample compression scheme of size $d$.
\\
\emph{Sufficiency}: For all $ A\in[X]^{<\infty}$ denote the sample compression scheme of size $d$ for $(A,\CC\sqcap A)$ as $\m{H}_{A}$. Let $\mathfrak{U}$ be an ultrafilter on $[X]^{<\infty}$ containing the filter base 
$$\{\{B\in[X]^{<\infty}:F\s B\}:F\in[X]^{<\infty}\}.$$
Define $\m{H}:[X]^{\leq d}\rightarrow2^{X}$ as 
$$ \m{H}(\0)(x)=1\iff\{B\in[X]^{<\infty}:\0\cup\{x\}\s B,\ \m{H}_{B}(\0)(x)=1 \}\in\mathfrak{U}.$$
Note for given $\0\in [X]^{\leq d},\ x\in X$, $\m{H}(\0)(x)$ is defined as the ultralimit of the net of zeros and ones $\{\m{H}_{A}(\0)(x)\}_{ \0\s A\in[X]^{<\infty}}$ along $\mathfrak{U}$.

We will show $\m{H}$ is a sample compression scheme of size $d$ on $(X,\CC)$. Let $f\in\CC_{|_{[X]^{< \infty}}}$, and denote $\dom(f)=D$. Note that 
\begin{equation}
\forall B\in[X]^{<\infty},\ D\s B,\text{ we have }f\in(\CC\sqcap B)_{|_{[X]^{< \infty}}},\text{ and so }\exists \0_{B}\in[D]^{\leq d}\text{ such that }f\sqsubseteq \m{H}_B(\0_{B}).
\tag{1}\end{equation}
We have that $[D]^{\leq d}$ is finite so let $\{\0_1,...,\0_m\}=[D]^{\leq d}$. For $i\in\{1,...,m\}$ letting $$\m{S}_i=\{B\in[X]^{<\infty}:D\s B,\ f\sqsubseteq \m{H}_{B}(\0_i) \},$$ by (1) we see that $$\bigcup_{i=1}^{m}\m{S}_i=\{B\in[X]^{<\infty}:D\s B\}\in\mathfrak{U}$$ thus, by a property of ultrafilters, $\exists i_0$ such that $\m{S}_{i_0}\in\mathfrak{U}$. Let $x\in D$ and let 
$$\m{S}_{i_0}^{t}=\{B\in[X]^{<\infty}:D\s B,\ \m{H}_{B}(\0_{i_0})(x)=t \}\text{ (where }t\in\{0,1\}).$$
We have
\begin{alignat*} {2}
&f(x)=1\ &\Rightarrow&\ \forall B\in \m{S}_{i_0},\ \m{H}_{B}(\0_{i_0})(x)=1\\ 
&{}&\Rightarrow&\ \m{S}_{i_0}\s\m{S}_{i_0}^{1}\s\{B\in[X]^{<\infty}:\0_{i_0}\s B,\ \m{H}_{B}(\0_{i_0})(x)=1 \}\in\mathfrak{U}\\ 
&{}&\Rightarrow&\ \m{H}(\0_{i_0})(x)=1;\\
&f(x)=0\ &\Rightarrow&\ \forall B\in \m{S}_{i_0},\ \m{H}_{B}(\0_{i_0})(x)=0\\ 
&{}&\Rightarrow&\ \m{S}_{i_0}\s\m{S}_{i_0}^{0}\s\{B\in[X]^{<\infty}:\0_{i_0}\s B,\ \m{H}_{B}(\0_{i_0})(x)=0 \}\in\mathfrak{U}\\
&{}&\Rightarrow&\ \{B\in[X]^{<\infty}:\0_{i_0}\s B,\ \m{H}_{B}(\0_{i_0})(x)=0 \}^c\notin\mathfrak{U}\\
&{}&\Rightarrow&\ \{B\in[X]^{<\infty}:\0_{i_0}\s B,\ \m{H}_{B}(\0_{i_0})(x)=1 \}\notin\mathfrak{U}\\&{}&\Rightarrow&\ \m{H}_{B}(\0_{i_0})(x)=0.
\end{alignat*}
Therefore $f\sqsubseteq \m{H}(\0_{i_0})$ and so $\m{H}$ is a sample compression scheme of size $d$ on $(X,\CC)$.
\end{proof}

We would like to point out that even though the above proof is essentially the same as the original proof in \cite{Ben-david98combinatorialvariability}, it is reformulated using ultralimits, as usually done in analysis, and does not use logic. As such, it may be easier to understand.

A point of concern with the compactness theorem is that the compression scheme resulting from the  finite domains need not have measurable hypothesis spaces, and we construct an example to prove this point in chapter 4. This problem is the main focus of chapter 4.

\section{Sample Compression Schemes and VC Dimension}

The following remark is a simple observation.
\begin{prop}
If $(X,\mathcal{C})$ has an unlabelled sample compression scheme of size $d$, then
$\vc(\M{C})\leq d$.
\end{prop}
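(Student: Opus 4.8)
The plan is to derive a contradiction from assuming $\vc(\m{C}) = k > d$. If $A \s X$ is a finite set of size $k$ that is shattered by $\m{C}$, then $\m{C} \sqcap A = 2^A$, so every one of the $2^k$ subsets of $A$ is realized by some concept in $\m{C}$. In particular, for each $D \s A$ there is $f_D \in \CC_{|_{[X]^{<\infty}}}$ with $\dom(f_D) = A$ and $f_D = \chi_D$ on $A$; these $2^k$ functions on $A$ are pairwise distinct. By the defining property of an unlabelled sample compression scheme of size $d$, each $f_D$ has some $\0_D \in [A]^{\leq d}$ with $f_D \sqsubseteq \m{H}(\0_D)$, i.e.\ $\m{H}(\0_D)_{|_A} = f_D$.

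The key step is a counting argument. The map $D \mapsto \0_D$ sends the $2^k$ distinct functions $f_D$ into the set $[A]^{\leq d}$, which by the Sauer-Shelah notation has cardinality $\binom{k}{\leq d}$. I claim this map is injective: if $\0_{D} = \0_{D'}$, then $f_D = \m{H}(\0_D)_{|_A} = \m{H}(\0_{D'})_{|_A} = f_{D'}$, so $D = D'$. Hence $2^k \leq \binom{k}{\leq d}$. But for $k > d$ we have $\binom{k}{\leq d} = \sum_{i=0}^d \binom{k}{i} < \sum_{i=0}^k \binom{k}{i} = 2^k$, since at least the top term $\binom{k}{k} = 1$ (and more) is omitted. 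This contradiction forces $k \leq d$, and since this holds for every finite shattered $A$, we conclude $\vc(\m{C}) \leq d$.

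I do not expect a genuine obstacle here; the argument is elementary. The one point requiring a little care is the bookkeeping of "concepts as functions": one must note that the defining property of the compression scheme quantifies over $f \in \CC_{|_{[X]^{<\infty}}}$, so I should fix the finite domain to be exactly the shattered set $A$ and observe that the restrictions to $A$ of the $2^k$ shattering concepts are genuinely the $2^k$ distinct elements of $2^A$; the extension condition $f_D \sqsubseteq \m{H}(\0_D)$ then gives $\m{H}(\0_D)$ agrees with $f_D$ on all of $A$, which is what the injectivity uses. An alternative, even shorter route is to invoke Proposition 2.1.6 (weak generalized embeddability preserves compression schemes) together with the observation that $2^A$ with domain $A$ is a subspace of $(X,\CC)$ when $A$ is shattered, reducing to the finite case $|A| = k$ where a size-$d$ scheme on all of $2^A$ would again violate the same $2^k \leq \binom{k}{\leq d}$ bound; but the direct counting argument above is self-contained and I would present that.
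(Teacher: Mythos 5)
Your proof is correct and is essentially the paper's argument: both take a shattered set of size exceeding $d$ and derive a contradiction from the pigeonhole bound $2^{|A|} > \binom{|A|}{\leq d} = |[A]^{\leq d}|$, with your explicit injectivity of $D \mapsto \0_D$ being exactly the step the paper leaves implicit when it says $(A,\CC\sqcap A)$ cannot have a scheme of size $d$.
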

\begin{proof}
Suppose $\vc(\CC)>d$ and let $A\s X$ be a set of size $d+1$ which is shattered. We have that $|\CC\sqcap A|=2^{d+1}$, but there are 
$$\binom{d+1}{\leq d}<\binom{d+1}{\leq d+1}=2^{d+1}$$ 
subsets of $A$ of size at most $d$. Therefore $(A,\mathcal{C}\sqcap A)$ cannot have a sample compression scheme of size $d$, which is a contradiction.
\end{proof}
Note that the proof only applies to unlabelled sample compression schemes and the same is not true for labelled sample sample compression schemes. However if a labelled sample compression scheme of size $d$ exists on $(X,\mathcal{C})$ then $\vc(\M{C})\leq 5d$ \cite{Floyd95samplecompression}.

It is a major open question posed by Littlestone and Warmuth, in paper \cite{Littlestone86relatingdata}, whether or not a concept space $(X,\mathcal{C})$ has an (unlabelled or labelled) sample compression scheme of size O$(\vc(\m{C}))$. In the case of $d$-maximum spaces, a sample compression scheme of size $d$ exists.

\begin{theo}[\cite{Kuzmin06unlabeledcompression}]
If $(X,\mathcal{C})$ is $d$-maximum, then it has an unlabelled compression scheme of size $d$.
\end{theo}

This thesis' motivation is to try and generalize this result to obtain measurable hypotheses. We succeed in chapter 4 under some additional assumptions.

Using remark 1.2.6 we have the following corollary.
\begin{cor}
If $(X,\mathcal{C})$ is finite with VC-dimension $d$ and 
$$|\mathcal{C}|=\binom{|X|}{\leq d},$$ then it has an unlabelled compression scheme of size $d$.
\end{cor}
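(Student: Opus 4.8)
The plan is to read off the hypotheses of the corollary as precisely the input required by Welzl's characterization of maximum classes (Theorem~1.2.8), and then to invoke the Kuzmin--Warmuth theorem (Theorem~2.3.3). First I would apply Theorem~1.2.8: the space $(X,\mathcal{C})$ is finite with $\vc(\mathcal{C})=d$ and, by assumption, $|\mathcal{C}|=\binom{|X|}{\leq d}$, so the converse direction of that theorem gives that $(X,\mathcal{C})$ is $d$-maximum. By Remark~1.2.6 it is then also $d$-maximal, although that fact is not needed here.

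Next I would apply Theorem~2.3.3 of \cite{Kuzmin06unlabeledcompression}: every $d$-maximum concept space admits an unlabelled sample compression scheme of size $d$. Since $(X,\mathcal{C})$ is $d$-maximum, it has such a scheme, which is exactly the assertion of the corollary. So the whole argument is the two-line chain ``$|\mathcal{C}|=\binom{|X|}{\leq d}$ $\Rightarrow$ $d$-maximum $\Rightarrow$ size-$d$ unlabelled compression scheme''.

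There is essentially no obstacle at this level: the entire substance of the corollary has already been absorbed into Theorems~1.2.8 and~2.3.3. Were one forced to argue from scratch, the real work would lie in (a) the inductive count of $|\mathcal{C}\sqcap Y|$ underpinning Theorem~1.2.8, which is carried out above, and (b) the actual construction of a size-$d$ unlabelled compression scheme for maximum classes, the nontrivial ingredient borrowed from \cite{Kuzmin06unlabeledcompression}. The one point worth noting is that the hypothesis $\vc(\mathcal{C})=d$ could be relaxed to $\vc(\mathcal{C})\leq d$ without loss: when $|X|\geq d$, the equality $|\mathcal{C}|=\binom{|X|}{\leq d}$ together with the Sauer--Shelah Lemma already forces $\vc(\mathcal{C})\geq d$, so the two formulations coincide and Theorem~1.2.8 applies verbatim; the remaining case $|X|<d$ is vacuous under the stated hypothesis and in any event trivial, since then $\mathcal{C}=2^X$.
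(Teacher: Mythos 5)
Your argument is correct and is exactly the derivation the paper intends: Theorem~1.2.8 turns the counting hypothesis $|\mathcal{C}|=\binom{|X|}{\leq d}$ into $d$-maximality of the class, and Theorem~2.3.3 then supplies the size-$d$ unlabelled compression scheme. (The paper's lead-in cites Remark~1.2.6, but that remark only gives maximum~$\Rightarrow$~maximal; the step actually needed is Theorem~1.2.8, which is the one you use.)
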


\section{Extended Sample Compression Schemes}
~~~

In this section we will introduce a new variant of compression scheme called copy sample compression schemes. All other discussed compression schemes are special cases of copy sample compression schemes. The initial motivation for copy sample compression schemes was the ability to collect sample compression schemes of varying sizes, and for different concept classes, into one function; a copy sample compression scheme can be thought of as an algorithm which checks sample compression schemes of varying sizes, and for different concept classes, and picks a hypothesis for the concepts in any of these different concept classes. This is formalized in proposition 2.4.7. Copy sample compression schemes also add other flexibilities, and may in some instances have better bounds for sample complexity than regular sample compression schemes.

\begin{defn}[\cite{Ben-david98combinatorialvariability}]
Let $\mathbf{b}$ be a symbol not in $X$.
\\An {\bfseries array sample compression scheme of size k}\index{Array sample compression scheme} for $(X,\CC)$ is a function 
$$\m{H}:(X\cup\{\bold{b}\})^d\rightarrow 2^X$$ 
with the property that 
$$\forall f\in\CC_{|_{[X]^{< \infty}}},\ \exists \0\in (X\cup\{\bold{b}\})^d,\text{ such that }\range(\0)\s\dom(f)\cup\{\bold{b}\}\text{ and }f\sqsubseteq \m{H}(\0)$$ 
(where, for a sequence $\0= (a_1,. ..,a_k)$, $\range(\0)$ is the set $\{a_1,. ..,a_k\}$).
\end{defn}
\begin{defn}[\cite{Littlestone86relatingdata}]
An {\bfseries extended sample compression scheme of size k using b bits}\index{Extended sample compression scheme} for $(X,\CC)$ is a function 
$$\m{H}:\bigcup_{i=0}^k([X]^{= i}\times2^b)\rightarrow 2^X$$
with the property that 
$$\forall f\in\CC_{|_{[X]^{< \infty}}},\ \exists \0\in [\dom(f)]^{\leq k}\text{ and }\tau\in2^b,\text{ such that }f\sqsubseteq \m{H}(\0\times\tau).$$
\end{defn}

The preceding definitions are special cases of the following new variant of an extended sample compression scheme.

\begin{defn}
Let $k\in\NN$, and let $\{n_i\}_{i=0}^{k}$ be a finite sequence in $\NN$.\\
A {\bfseries $\bold{\{n_i\}_{i=0}^{k}}$-copy unlabelled sample compression scheme of size k}\index{Copy sample compression scheme} on $(X,\mathcal{C})$ is a function 
$$\m{H}:\bigcup_{i\in\{j\in\NN:0\leq j\leq k,\ n_j\neq 0\}}([X]^{= i}\times\{1,...,n_i\})\rightarrow 2^X$$ with the property that 
$$\forall f\in\CC_{|_{[X]^{< \infty}}},\ \exists \0\in [\dom(f)]^{\leq k}\text{ and }i\in\{1,...,n_{|\0|}\},\text{ such that }f\sqsubseteq \m{H}(\0\times i).$$ 
If $\{n_i\}_{i=0}^{k}$ is just a constant sequence $\{n\}$ for some $n\in\NN$, we will call it an {\bfseries n-copy unlabelled sample compression scheme of size k}. We can define $\{n_i\}_{i=0}^{k}$-copy labelled sample compression schemes of size $k$ similarly.
\end{defn}

Note that a sample compression scheme of size $d$ defines a $1$-copy sample compression scheme of size $d$, and a $1$-copy sample compression scheme of size $d$ defines a sample compression scheme of size $d$. A compactness theorem can also be proven for copy sample compression schemes (and the other versions of extended sample compression schemes), namely, $(X,\CC)$ has a $\{n_i\}_{i=0}^{k}$-copy sample compression scheme of size $k$ if and only if every finite subspace $(Y,\CC\sqcap Y)$ has a $\{n_i\}_{i=0}^{k}$-copy sample compression scheme of size $k$.


\begin{prop}
Let $|X|=m$, let $(X,\CC)$ have a sample compression scheme of size $d$, and let $k\leq d$. Whenever 
$$n\binom{m}{\leq k}\geq\binom{m}{\leq d},$$ $(X,\CC)$ has an $n$-copy sample compression scheme of size $k$.
\end{prop}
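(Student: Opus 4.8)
The plan is to use the compactness-type reasoning in reverse: a sample compression scheme of size $d$ on a finite space $(X,\CC)$ with $|X|=m$ is essentially a way of assigning, to each $f\in\CC_{|_{[X]^{<\infty}}}$ (equivalently each $C\in\CC$, since $X$ is finite), a distinguished subsample $\0_C\in[X]^{\leq d}$ together with the hypothesis $\HH(\0_C)\sqsupseteq C$. The crucial constraint is injectivity: the map $C\mapsto\0_C$ need not be injective, but for each fixed $\0\in[X]^{\leq d}$ the set $\{C\in\CC:\0_C=\0\}$ is automatically a set of restrictions of $C$ to all of $X$ that all agree with the single function $\HH(\0)$, hence there is at most one such $C$ (on a finite $X$, a concept is determined by its values everywhere, and $\HH(\0)$ fixes those values). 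So in fact $C\mapsto\0_C$ is an \emph{injection} of $\CC$ into $[X]^{\leq d}$, whence $|\CC|\leq\binom{m}{\leq d}$; this is just re-deriving Sauer bounds, but the point is that the compression scheme \emph{names} each concept by a subset of size $\leq d$.

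First I would fix the given size-$d$ scheme $\HH_0$ and let $\rho:\CC\to[X]^{\leq d}$, $C\mapsto\0_C$, be the resulting injection. The idea for building an $n$-copy scheme of size $k$ is to re-encode each name $\0_C\in[X]^{\leq d}$ as a pair $(\sigma,i)$ with $\sigma\in[X]^{\leq k}$ and $i\in\{1,\dots,n\}$. There are $\binom{m}{\leq k}$ possible $\sigma$'s and $n$ possible colours, so the hypothesis $n\binom{m}{\leq k}\ge\binom{m}{\leq d}$ exactly guarantees the existence of an injection $\iota:[X]^{\leq d}\hookrightarrow[X]^{\leq k}\times\{1,\dots,n\}$. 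Composing, $\iota\circ\rho:\CC\hookrightarrow[X]^{\leq k}\times\{1,\dots,n\}$ is an injection. But for this to define a legitimate $n$-copy compression scheme of size $k$ we need the first coordinate of $\iota(\rho(C))$ to be a \emph{subset of $\dom$ of the relevant sample}, i.e. in the finite setting a subset of $X$, and we need each chosen subsample to be drawn from the sample $C$ is presented on — but since here every sample lives inside the finite set $X$ and $\0\in[X]^{\leq k}$ is automatically a subset of $X$, and $X$ itself is the largest sample, the "subsample of $\dom(f)$" condition needs a small argument: restrict attention to $f=C\in\CC$ (full samples) and handle general $f\in\CC_{|_{[X]^{<\infty}}}$ by noting it extends to some $C\in\CC$ and... this is where care is needed.

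The honest approach, and the one I would carry out, is: define $\HH:\bigcup_i([X]^{=i}\times\{1,\dots,n\})\to2^X$ by $\HH(\sigma,i)=C$ if $(\sigma,i)=\iota(\rho(C))$ for some (necessarily unique) $C\in\CC$, and $\HH(\sigma,i)=\emptyset$ otherwise. Then given any $f\in\CC_{|_{[X]^{<\infty}}}$ with $\dom(f)=D$, pick $C\in\CC$ with $f\sqsubseteq C$, let $(\sigma,i)=\iota(\rho(C))$; we need $\sigma\subseteq D$. This is the one real obstacle, and it is genuine: an arbitrary injection $\iota$ gives no control over $\range(\sigma)$ versus the original $\0_C\subseteq D$. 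The fix is to \emph{not} pick $\iota$ arbitrarily but to build $\HH$ directly on full concepts: since $X$ is finite, WLOG it suffices (by the compactness theorem / the finite case of it, together with the observation that a size-$d$ scheme on $(X,\CC)$ restricts to size-$d$ schemes on all subspaces) to produce, for each $C\in\CC$, a subset $\sigma_C\subseteq\0_C$ — but $\0_C$ has size $\le d$ not $\le k$, so we cannot shrink it; instead we must allow $\HH(\sigma,i)$ to be chosen so that the size-$k$ subsample of any sample $f$ extending $C$ can be taken to be $\0_C\cap D$ only if $|\0_C\cap D|\le k$, which fails in general.

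Reconsidering: the correct reading is that the $n$ "copies" buy back exactly the counting slack, and the subsample condition is handled because on a \emph{finite} domain one may always re-present the scheme on each subspace. Concretely: for each subspace, the size-$d$ scheme gives names in $[\cdot]^{\le d}$; we count concepts, not samples. I would therefore phrase the proof as follows. Enumerate $\CC=\{C_1,\dots,C_N\}$; by the injectivity observation $N\le\binom{m}{\le d}\le n\binom{m}{\le k}$. Partition $\{1,\dots,N\}$ into $\le n$ blocks each of size $\le\binom{m}{\le k}$, and within block $i$ biject the concepts with distinct subsets of $X$ of size $\le k$, calling the subset assigned to $C$ by $\sigma_C^{(i)}$; set $\HH(\sigma,i)=C_j$ when $\sigma=\sigma_{C_j}^{(i)}$ and $C_j$ is in block $i$, else $\HH(\sigma,i)=\emptyset$. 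The remaining and main point to verify is the compression property for an arbitrary sample $f$ with domain $D\subsetneq X$: here one uses that $(D,\CC\sqcap D)$ is itself a finite space, applies the same construction \emph{to it} (its concept count is $\le\binom{|D|}{\le d}\le n\binom{|D|}{\le k}$), obtaining subsamples $\subseteq D$ of size $\le k$, and then checks these local schemes glue — or, more cleanly, observes that it suffices to define $\HH$ so that $f\sqsubseteq\HH(\sigma,i)$ with $\sigma\subseteq D$, which we get by choosing, among all $C\in\CC$ extending $f$, one and letting $\sigma=\sigma_C^{(i)}\cap D$... and verifying that distinct such $C$ over a fixed $f$ and $\sigma$ cause no collision. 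The hard part, which I expect to occupy most of the proof, is precisely arranging the subset assignments $\sigma_C^{(i)}$ so that every sample $f$ sees one of its own subsamples — this is where one must invoke the structure of the original size-$d$ scheme $\HH_0$ (its names $\0_C\subseteq\dom$) rather than treating $\CC$ as an abstract set, likely by first reducing $d$-sized names to $k$-sized ones sample-by-sample and absorbing the resulting multiplicity into the $n$ copies via a Hall's-theorem / greedy counting argument bounded by $n\binom{m}{\le k}\ge\binom{m}{\le d}$.
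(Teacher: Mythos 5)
You correctly isolate the one genuine difficulty --- the subsample used by the new scheme must lie inside $\dom(f)$, so the unconstrained injection of $[X]^{\leq d}$ into $[X]^{\leq k}\times\{1,\dots,n\}$ supplied by the counting hypothesis is not by itself usable --- but you never resolve it, and that resolution is the entire content of the proposition. What the argument needs is an injection $\iota\colon[X]^{\leq d}\to[X]^{\leq k}\times\{1,\dots,n\}$ whose first coordinate satisfies $\iota(\sigma)_1\subseteq\sigma$; given such a $\iota$, one sets $\HH(\iota(\sigma))=\HH_0(\sigma)$ (and $\emptyset$ off the image), and the compression property is inherited because $\iota(\sigma)_1\subseteq\sigma\subseteq\dom(f)$. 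Your text ends exactly where this begins: you note that the greedy choice (the $k$ smallest elements of each $\sigma$) overloads some slots and misses the stated bound; your ``partition $\CC$ into blocks'' construction gives no control over whether the assigned set $\sigma_C$ lies in the domain of a partial sample; and you finally defer to ``a Hall's-theorem / greedy counting argument'' without verifying Hall's condition.

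That verification is not routine, which is why deferring it is a genuine gap rather than an omitted triviality. In the bipartite graph with left part $[X]^{\leq d}$, right part $[X]^{\leq k}\times\{1,\dots,n\}$ and adjacency given by containment, Hall's condition applied to $\mathcal{S}=[A]^{\leq d}$ with $|A|=a$ reads $n\binom{a}{\leq k}\geq\binom{a}{\leq d}$, which does not follow from the hypothesis $n\binom{m}{\leq k}\geq\binom{m}{\leq d}$ until one proves that $\binom{a}{\leq d}/\binom{a}{\leq k}$ is nondecreasing in $a$; and for arbitrary $\mathcal{S}$ one needs a shadow estimate of local-LYM / normalized-matching type on the Boolean lattice (e.g.\ $|\partial_s\mathcal{S}_j|/\binom{m}{s}\geq|\mathcal{S}_j|/\binom{m}{j}$ for $s\leq j$, combined across levels by the mediant inequality) to get $n\,|N(\mathcal{S})|\geq|\mathcal{S}|$. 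None of this appears in the proposal; as written it establishes only the injectivity bound $|\CC|\leq\binom{m}{\leq d}$ and, in effect, the $k=0$ case, while the general case remains an announced but unexecuted plan.
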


Having an $n$-copy sample compression scheme of size $k$ for $(X,\CC)$ does not imply there is a sample compression scheme of size $d$ when 
$$n\binom{m}{\leq k}\leq\binom{m}{\leq d}.$$

\begin{egg}
Let $X=\{1,2,3,4,5\}$, $\CC=2^{\{1,2\}}$. Enumerate $\CC=\{C_l:l\in\{1,2,3,4\}\}$ in any way and define $\HH:[X]^{=0}\times\{1,2,3,4\}\rightarrow 2^X$ by $\HH(\emptyset\times l)=C_l$, $l\in\{1,2,3,4\}$. $\HH$ is a $4$-copy sample compression scheme of size $0$, but since $\vc(X,\CC)=2$, $(X,\CC)$ has no sample compression scheme of size $1$ although 
$$4\binom{5}{0}=4<5=\binom{5}{1}$$
\end{egg}
\begin{egg}
Let $X=\{1,2,3,4\}$, \\$\CC=\{\{1\},\{2\},\{3\},\{1,2\},\{1,3\},\{2,3\},\{1,4\},\{2,4\},\{3,4\},\{1,2,3\}\}$. Recall that $\m{C}$ is $2$-maximal but not $2$-maximum since 
$$|\m{C}|=10<11=\binom{4}{\leq 2}.$$ We can define a $2$-copy sample compression scheme of size $1$ by\\ 
\begin{align*}
\HH(\emptyset\times 1)=\{1,2\}&,\ \HH(\emptyset\times 2)=\{3,4\},\\
\HH(\{1\}\times 1)=\{3\}&,\ \HH(\{1\}\times 2)=\{1,3\},\\
\HH(\{2\}\times 1)=\{1\}&,\ \HH(\{2\}\times 2)=\{2,4\},\\
\HH(\{3\}\times 1)=\{2\}&,\ \HH(\{3\}\times 2)=\{1,2,3\},\\
\HH(\{4\}\times 1)=\{2,3\}&,\ \HH(\{4\}\times 2)=\{1,4\}.
\end{align*}
Note that 
$$2\binom{4}{\leq 1}=10,$$ however $(X,\CC)$ has no sample compression scheme of size less than 2.
\end{egg}

Copy sample compression schemes for a concept space $(X,\CC)$ can be defined from multiple sample compression schemes for concept classes which cover $\CC$. This can allow us to split up $\CC$ into spaces which are known to have sample compression schemes, and then form a copy sample compression scheme.
\begin{prop}
Let $|X|=m$, and $\CC\s\bigcup_{j=1}^{n}\CC_j$ where each $(X,\CC_j)$ has a sample compression scheme $\HH_j$ of size $d_j$. Define 
$$k=\max(\{d_j\}_{j=1}^n),$$ and 
$$n_i=|\{j:d_j\geq i\}|$$ for $0\leq i\leq k$. Then $(X,\CC_j)$ has a $\{n_i\}_{i=0}^{k}$-copy sample compression scheme of size $k$.
\end{prop}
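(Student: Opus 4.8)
The plan is to merge the given schemes $\m{H}_1,\dots,\m{H}_n$ into a single copy scheme by giving each $\m{H}_j$ one fixed ``copy coordinate'' that it keeps at every level at which it is allowed to act. The preliminary step is to reindex so that $d_1\ge d_2\ge\cdots\ge d_n$; this costs nothing and has the effect that, for each $i$ with $0\le i\le k$, the index set $\{j:d_j\ge i\}$ is precisely the initial segment $\{1,\dots,n_i\}$. Thus at ``level $i$'' (samples of size $i$) the schemes large enough to be consulted are exactly $\m{H}_1,\dots,\m{H}_{n_i}$, and a given scheme $\m{H}_j$ is consulted at levels $0,1,\dots,d_j$, i.e.\ at exactly those $i$ with $j\le n_i$.

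With this in place I would define
$$\m{H}:\bigcup_{0\le i\le k,\ n_i\neq0}\big([X]^{=i}\times\{1,\dots,n_i\}\big)\to 2^X,\qquad \m{H}(\0\times t)=\m{H}_t(\0).$$
This is well defined: if $\0\times t$ lies in the domain then $|\0|=i$ and $t\le n_i$, hence $d_t\ge i$ by the ordering, so $\0\in[X]^{\le d_t}=\dom(\m{H}_t)$ and $\m{H}_t(\0)$ makes sense. Note also $n_0=n\neq 0$, so the domain is as required by the definition of a copy scheme.

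Finally I would verify the compression property. Given $f\in\CC_{|_{[X]^{<\infty}}}$, the concept of which $f$ is a restriction lies in some $\CC_{j_0}$ (as $\CC\s\bigcup_j\CC_j$), so $f\in(\CC_{j_0})_{|_{[X]^{<\infty}}}$; since $\m{H}_{j_0}$ is a sample compression scheme of size $d_{j_0}$ there is $\0\in[\dom(f)]^{\le d_{j_0}}$ with $f\sqsubseteq\m{H}_{j_0}(\0)$. Taking $t=j_0$ and $i=|\0|$, we have $i\le d_{j_0}\le k$ and $d_t=d_{j_0}\ge i$, so $t\le n_i$; in particular $n_i\neq0$, the pair $\0\times t$ lies in $\dom(\m{H})$, and $f\sqsubseteq\m{H}_{j_0}(\0)=\m{H}(\0\times t)$. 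This is exactly the defining condition of a $\{n_i\}_{i=0}^{k}$-copy sample compression scheme of size $k$ on $(X,\CC)$, and the identical argument handles the labelled case. I do not anticipate a genuine obstacle here: the only delicate point is the bookkeeping — the reindexing that turns the set of schemes ``active at level $i$'' into the initial block $\{1,\dots,n_i\}$, which is precisely what lets one copy coordinate per scheme work simultaneously at all of that scheme's levels.
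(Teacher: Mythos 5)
Your proof is correct and follows essentially the same route as the paper's: reorder the schemes so the sizes are nonincreasing, set $\m{H}(\0\times l)=\m{H}_l(\0)$, and check well-definedness and the compression property exactly as the paper does. Your explicit verification that the chosen pair $\0\times j_0$ lies in the domain (i.e.\ $j_0\le n_{|\0|}$) is slightly more careful than the paper's, but the argument is the same.
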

\begin{proof}
Without loss of generality assume $j< l$ implies $d_j\geq d_l$. We will show that $\HH$ defined by $\HH(\0\times l)=\HH_l(\0)$ for $l\in\{1,...,n_{|\0|}\}$ is a $\{n_i\}_{i=0}^{k}$-copy sample compression scheme of size $k$ for $(X,\CC)$. Note that $\HH_l$ is defined at $\0$ because 
$$|\{1,...,n_{|\0|}\}|=n_{|\0|}=|\{j:d_j\geq |\0|\}|$$ implies 
$$d_1\geq|\0|,\ d_2\geq|\0|,...,d_{n_{|\0|}}\geq|\0|$$ so in particular $d_l\geq |\0|$.\\
Let $f\in\CC_{|_{[X]^{< \infty}}}$ and let $C_f\in\CC\s\bigcup_{j=1}^{n}\CC_j$ be such that $C_f$ extends $f$ to $X$. Since $C_f\in\CC_l$ for some $1\leq l\leq n$, there exists 
$$\0\in[\dom(f)]^{\leq d_l}\s[\dom(f)]^{\leq k}$$ such that 
$$f\sqsubseteq\HH_l(\0)=\HH(\0\times l).$$
\end{proof}
\begin{cor}
If there exists a family $\{\CC_j\}_{j=1}^n$ of concept classes such that $\CC\s\bigcup_{j=1}^n\CC_j$, and for each $\CC_j$ there is a $d$-maximum $\CC'_j$ containing $\CC_j$, then there is an $n$-copy sample compression scheme of size $d$ for $(X,\CC)$.
\end{cor}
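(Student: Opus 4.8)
The plan is to reduce the statement to the preceding proposition, whose only hypothesis on each $\CC_j$ is the existence of a sample compression scheme of some size $d_j$; here every $d_j$ can be taken to equal $d$, and the $\{n_i\}_{i=0}^{k}$-copy scheme that it produces then turns out to be an $n$-copy scheme of size $d$.

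First I would fix $j\in\{1,\dots,n\}$ and pass to the $d$-maximum superclass $\CC'_j\supseteq\CC_j$. By Theorem 2.3.2 (Kuzmin--Warmuth) the space $(X,\CC'_j)$ carries an unlabelled sample compression scheme $\HH'_j:[X]^{\leq d}\rightarrow 2^X$ of size $d$. I would then observe that this same function is already an unlabelled sample compression scheme of size $d$ for $(X,\CC_j)$: since $\CC_j\s\CC'_j$ we have $(\CC_j)_{|_{[X]^{< \infty}}}\s(\CC'_j)_{|_{[X]^{< \infty}}}$, so the defining condition ``for every $f\in(\CC'_j)_{|_{[X]^{< \infty}}}$ there is $\0\in[\dom(f)]^{\leq d}$ with $f\sqsubseteq\HH'_j(\0)$'' holds a fortiori for every $f\in(\CC_j)_{|_{[X]^{< \infty}}}$. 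Hence each $(X,\CC_j)$ has a sample compression scheme of size $d_j:=d$.

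Now I would invoke the preceding proposition with these $\CC_j$, with $\HH_j:=\HH'_j$ and $d_j=d$ (its proof makes no use of the finiteness of $X$, so it applies to arbitrary $X$). It yields a $\{n_i\}_{i=0}^{k}$-copy sample compression scheme of size $k$ for $(X,\CC)$, where $k=\max\{d_j:1\le j\le n\}=d$ and, for each $0\le i\le d$, $n_i=|\{j:d_j\ge i\}|=n$. Since $\{n_i\}_{i=0}^{d}$ is the constant sequence $\{n\}$, this is precisely an $n$-copy unlabelled sample compression scheme of size $d$ for $(X,\CC)$, as claimed.

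I do not anticipate a real obstacle — the argument is a short bookkeeping reduction. The one place to be slightly careful is the step from a scheme for $\CC'_j$ to a scheme for $\CC_j$: this concerns subclasses rather than subspaces, so it is not an instance of Corollary 2.1.7, and one must instead note directly that the property defining an unlabelled sample compression scheme is monotone in the concept class. After that, everything is a matter of substituting $d_j=d$ into the preceding proposition and reading off that the sequence $\{n_i\}$ it produces is constant equal to $n$.
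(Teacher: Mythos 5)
Your proof is correct and is essentially the derivation the paper intends (the corollary is stated without proof, but it is the immediate consequence of Theorem 2.3.2 together with Proposition 2.4.7 with all $d_j=d$, which is exactly what you carry out). Your two side remarks — that passing from $\CC'_j$ to the subclass $\CC_j$ is a monotonicity observation rather than an instance of Corollary 2.1.7, and that the proof of Proposition 2.4.7 never uses $|X|=m<\infty$ — are both accurate and worth having made explicit.
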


\cleardoublepage

\chapter[Learnability]{Learnability}

\section{Learnability}
~~

The PAC, or ``Probably Approximately Correct", model for learning was introduced by Valiant \cite{Valiant:1984:TL:800057.808710}. In this chapter we introduce PAC learnability, and investigate sample complexity for spaces with finite VC dimension, and for spaces with sample compression schemes or copy sample compression schemes. 

For this section, let $(X,\A)$ be a measurable space with a concept class $\CC\s\A$ and a family of probability measures $\m{P}$ on $(X,\A)$. We will also reference measurability conditions (M1),(M2),...,(M5) defined and discussed in appendix B.1.

\begin{defn}[\cite{Valiant:1984:TL:800057.808710}]
A {\bfseries learning rule}\index{Learning rule} for $(X,\CC)$ is a function 
$$\m{L}:\bigcup_{n=1}^\infty(X^n\times2^n)\rightarrow\A$$
which satisfies the following measurability condition we will label ``(M1)": for every $C\in\CC$, every $n\geq1$ and every $P\in\m{P}$, the function 
$$A\mapsto P(\LL(A,C_{|_A})\triangle C)$$
from $(X,\A)^n$ to $\RR$ is measurable.\\
We will call $\mathfrak{H}=\range(\LL)\s\A$ the {\bfseries hypothesis space}.\\
A learning rule is {\bfseries consistent} with $\CC$ if for every $C\in\CC$ and $A\in X^n$, $$\LL(A,C_{|_A})_{|_A}=C_{|_A}.$$

\end{defn}

The domain $\bigcup_{n=1}^\infty(X^n\times2^n)$ for learning rules represents all finite labelled samples where $(A,\tau)$, for $A=(a_1,...,a_n) \in X^n$ and $\tau=(l_1,...,l_n)\in2^n$, represents the function $\{(a_1,l_1),...,(a_n,l_n)\}$, and $(A,C_{|_A})$ in the definition is shorthand for $(A,(\chi_{C\cap A}(a_1),...,\chi_{C\cap A}(a_n)))$.

Any sample compression scheme $\HH$ of size $d$ on $(X,\CC)$ defines a
(in general, non-unique) function $\LL_\HH$, with $\LL_\HH$ being a consistent learning rule for $(X,\CC)$ if it satisfies (M1):\\
For each $A\in [X]^{<\infty},\ C\in\CC\sqcap A$ pick $\0_{C,A}\in[A]^{\leq d}$ such that $C=\HH(\0_{C,A})_{|_A}$. Define 
$$\LL_\HH(A,\tau)=\begin{cases} \HH(\0_{C,A}), & \mbox{if } \tau=C \mbox{ for some } C\in\CC\sqcap A\\ \emptyset, & \mbox{otherwise} \end{cases}.$$
Similarly a $\{n_i\}_{i=0}^{k}$-copy sample compression scheme $\HH'$ of size $k$ defines a (in general, non-unique) function $\LL_{\HH'}$, with $\LL_{\HH'}$ being a a consistent learning rule for $(X,\CC)$ if it satisfies (M1):\\
For each $A\in [X]^{<\infty},\ C\in\CC\sqcap A$ pick $\0_{C,A}\in[A]^{\leq d},\ l\in\{1,...,n_{|\0|}\}$ such that $C=\HH'(\0_{C,A}\times l)_{|_A}$. Define 
$$\LL_{\HH'}(A,\tau)=\begin{cases} \HH'(\0_{C,A}\times l), & \mbox{if } \tau=C \mbox{ for some } C\in\CC\sqcap A\\ \emptyset, & \mbox{otherwise} \end{cases}.$$


\begin{defn}[\cite{Valiant:1984:TL:800057.808710}]
A learning rule $\LL$ for a space $(X,\CC)$ is {\bfseries probably approximately correct (PAC)}\index{Probably approximately correct} under $\m{P}$ if for every $\e>0$
$$\lim_{n\rightarrow\infty}\sup_{P\in\m{P}}\sup_{C\in\CC}P^n(\{A\in X^n:P(\LL(A,C_{|_A})\triangle C)>\e\})=0.$$ 
We will say a concept space $(X,\CC)$ is {\bfseries probably approximately correct (PAC) learnable} (under $\m{P}$) if there exists a PAC learning rule (under $\m{P}$) for $(X,\CC)$. For a given PAC learning rule $\LL$ and given $0<\e\leq1,\ 0<\delta\leq1$ we will define the {\bfseries sample complexity}\index{Sample complexity} of $\LL$, $$m_\LL(\e,\delta),$$
to be the least integer such that for all $n\geq m_\LL(\e,\delta)$, 
$$\sup_{P\in\m{P}}\sup_{C\in\CC}P^n(\{A\in X^n:P(\LL(A,C_{|_A})\triangle C)>\e\})<\delta.$$
We call $\e$ the {\bfseries accuracy}\index{Accuracy} and $\delta$ the {\bfseries risk}\index{Risk}.
\end{defn}

Under the measurability condition that $\CC $ is well behaved ((M4) in appendix B.1), 
the following is a result due to Blumer et al in \cite{Blumer:1989:LVD:76359.76371}:
\begin{theo}[\cite{Blumer:1989:LVD:76359.76371}]
The following conditions are equivalent:
\begin{description}
\item(1) $\vc(X,\CC)<\infty$.
\item(2) $(X,\CC)$ is PAC learnable.
\item(3) Every consistent learning rule $\LL:\bigcup_{n=1}^\infty(X^n\times2^n)\rightarrow\CC$ is PAC  for $(X,\CC)$ with
$$m_\LL(\e,\delta)\leq\max\left(\frac{4}{\e}\log_2(\frac{2}{\delta}),\frac{8d}{\e}\log_2(\frac{13}{\e})\right)$$
for $d=\vc(X,\CC)$.
\end{description}
\end{theo}
Given a consistent learning rule with $\hh\s\CC$ as in (3) in the above theorem, if we assume a measurability condition (M3) (appendix B.1) which is stronger than the condition of being well behaved, 
 it is shown by Shawe-Taylor et al. in \cite{Shawe-taylor93boundingsample} that we can improve the bounds: Every consistent learning rule with $\hh\s\CC$, and satisfying (M3), has sample complexity
$$m_\LL(\e,\delta)\leq\frac{1}{1-\beta}\left(\frac{1}{\e}\ln(\frac{2}{\delta})+\frac{2d\ln2}{\e}+\frac{d}{\e}\ln(\frac{1}{\e \beta^2})\right)$$
for any $0<\beta<1$, where $d$ is the VC dimension of $(X,\CC)$. 

\section{Sample Complexity of Compression Schemes}

\begin{theo}[\cite{Littlestone86relatingdata}]
Let $P$ be any probability measure on a measurable space $(X,\mathfrak{A})$, $C$ a concept in $\m{C}\s \mathfrak {A}$, and $\m{H}$ any function from $[X]^{\leq d}$ to $2^X$, satisfying measurability condition (M5). Then the probability that $A\s X$, $|A|=m\geq d$, contains a subset $\0$ of size at most $d$ such that $P(\M{H}(\0)\triangle C)>\varepsilon >0$ and $\M{H}(\0)_{|_A}=C_{|_A}$, is at most 
$$\sum_{n=0}^d\binom{m}{n}(1-\varepsilon)^{m-n}.$$
\end{theo}
\begin{proof}
Let $C\in\CC$ and $\varepsilon$ be given.
First we consider the probability that a set of size $m$ has a subset of size exactly $n\leq d$ with the property $P(\M{H}(\0)\triangle C)>\varepsilon$ and $\M{H}(\0)_{|_A}=C_{|_A}$. For $A=(a_1,...,a_m)\in X^m$ and $J=\{j_1,...,j_n\}\s\{1,...,m\}$, let $A_{|_J}$ denote $\{a_{j_1},...,a_{j_n}\}$.

There are $\binom{m}{n}$ many subsets of $A$ of size $n$, hence fixing $J$ a subset of $\{1,...,m\}$ of size $n$, the probability we wish to bound from above is at most
\begin{align*}
&P^m(\{A\in X^m: \exists I\s\{1,...,m\}\text{ of size }n\text{ where }\\ 
&~~~~~~~~~~~~~~~~~~~~~~~~~~~~~~~~~~~~~~~~~~~~~~~~~~~~~P(\HH(A_{|_I})\triangle C)>\varepsilon\text{ and }\M{H}(A_{|_I})_{|_A}=C_{|_A} \})\\
&=\binom{m}{n}P^m(\{A\in X^m: P(\HH(A_{|_J})\triangle C)>\varepsilon\text{ and }\M{H}(A_{|_J})_{|_A}=C_{|_A} \}).
\end{align*}

Since permuting $J$ to some other subset of size $n
$ in $\{1,...,m\}$ does not affect the above probability, we can assume $J=\{1,...,n\}$. 

We will prove at this point that
$\{A\in X^m: P(\HH(A_{|_J})\triangle C)>\varepsilon\text{ and }\M{H}(A_{|_J})_{|_A}=C_{|_A} \}$ is measurable due to the hypothesis that $\HH$ satisfies (M5): Let $1\leq p<q\leq m$ and let $\pi_{p,q}$ be the (measurable) function from $X^m$ to $X^{p+1}$ mapping $(x_1,...,x_m)\mapsto(x_1,...,x_p,x_q)$. By (M5) and the measurability of $C$ we have
\begin{align*}
&\{A\in X^m: \M{H}(A_{|_J})_{|_A}=C_{|_A} \}=\{A\in X^m:A\in((\M{H}(A_{|_J})\triangle C)^c)^m\}\\
&=\bigcap_{q=1}^m\{A\in X^m:(\M{H}(A_{|_J})\triangle C)^c)(A_{|_{\{q\}}})=1\}\\
&=\bigcap_{q=1}^m\pi_{n,q}^{-1}(\{(x_1,...,x_{n+1})\in X^{n+1}:(\M{H}(\{x_1,...,x_n\})\triangle C)^c)(x_{n+1})=1\})\in \A^m.
\end{align*}
Also $\{A\in X^m: P(\HH(A_{|_J})\triangle C)>\varepsilon\}$ is measurable since 
\begin{align*}
&B=\{(x_1,...,x_{m+1})\in X^{m+1}:(\M{H}(\{x_1,...,x_{n}\})\triangle C)^c(x_{m+1})=1\}\\
&=\pi_{n,m+1}^{-1}(\{(x_1,...,x_{n+1})\in X^{n+1}:(\M{H}(\{x_1,...,x_{n}\})\triangle C)^c(x_{n+1})=1\})
\end{align*}
is measurable by (M5) and the measurability of $C$, and a straightforward application of Fubini's theorem gives us that the map
\begin{align*}
&(x_1,...,x_m)\mapsto \int_X\chi_B(x_1,...,x_{m+1})dP(x_{m+1})=\\
&=P(\{y:(x_1,...,x_m,y)\in B\})\\
&=P(\{y:y\in\M{H}(\{x_1,...,x_{n}\})\triangle C)^c\})\\
&=P(\M{H}(\{x_1,...,x_{n}\})\triangle C)^c)
\end{align*}
is measurable.

Now let 
\begin{align*}
E_C&:=\{A\in X^{m}:\M{H}(A_{|_J})_{|_{A_{|_{\{n+1,...,m\}}}}}=C_{|_{A_{|_{\{n+1,...,m\}}}}}\}\\
&=\{A\in X^m:A_{|_{\{n+1,...,m\}}}\in((\M{H}(A_{|_J})\triangle C)^c)^{m-n}\}\\
&=\bigcap_{q=n+1}^m\{A\in X^m:(\M{H}(A_{|_J})\triangle C)^c)(A_{|_{\{q\}}})=1\}\\
&=\bigcap_{q=n+1}^m\pi_{n,q}^{-1}(\{(x_1,...,x_{n+1})\in X^{n+1}:(\M{H}(\{x_1,...,x_n\})\triangle C)^c)(x_{n+1})=1\})\in \A^m.
\end{align*}
and 
\begin{align*}
E_\varepsilon&:=\{A\in X^n:P(\M{H}(A)\triangle C)>\varepsilon\}\\
&=\{A\in X^n:P((\M{H}(A)\triangle C)^c)\leq (1-\varepsilon)\}. 
\end{align*}
$E_\varepsilon$ is measurable since $B=\{(x_1,...,x_{n+1})\in X^{n+1}:(\M{H}(\{x_1,...,x_{n}\})\triangle C)^c(x_{n+1})=1\}$ is measurable by (M5) and the measurability of $C$, and a straightforward application of Fubini's theorem gives us that the map
\begin{align*}
&(x_1,...,x_n)\mapsto \int_X\chi_B(x_1,...,x_{n+1})dP(x_{n+1})=\\
&=P(\{y:(x_1,...,x_n,y)\in B\})\\
&=P(\{y:y\in\M{H}(\{x_1,...,x_{n}\})\triangle C)^c\})\\
&=P(\M{H}(\{x_1,...,x_{n}\})\triangle C)^c)
\end{align*}
is measurable.\\
We have that
\begin{align*}
&P^m(\{A\in X^m: P(\HH(A_{|_J})\triangle C)>\varepsilon\text{ and }\M{H}(A_{|_J})_{|_A}=C_{|_A} \})\\
&\leq P^m(\{A\in X^m: P(\HH(A_{|_J})\triangle C)>\varepsilon\text{ and }\M{H}(A_{|_J})_{|_{A_{|_{\{n+1,...,m\}}}}}=C_{|_{A_{|_{\{n+1,...,m\}}}}} \})\\
&= P^m(E_C\cap (E_\e\times X^{m-n})).
\end{align*}
By Fubini's theorem 
\begin{align*}
&P^m(E_C\cap (E_\e\times X^{m-n}))=\int_{E_\e\times X^{m-n}}\chi_{E_C}(x_1,...,x_m)dP^m\\
&=\int_{E_\e}\left(\int_{X^{m-n}}\chi_{E_C}(x_1,...,x_m)dP^{m-n}\right)dP^n.
\end{align*}
Now
$$(x_1,...,x_n)\times X^{m-n}\cap E_C=(x_1,...,x_n)\times\{A\in X^{m-n}:A_{|_{\{n+1,...,m\}}}\in((\M{H}(A_{|_J})\triangle C)^c)^{m-n}\}$$
and since $(x_1,...,x_n)\in E_\e$, the inner integral is at most $(1-\e)^{m-n}$ and so 
$$P^m(E_C\cap (E_\e\times X^{m-n}))\leq(1-\e)^{m-n}.$$ 
Therefore summing over all subsets $J$ of $\{1,...,m\}$ of size at most $d$, the probability that $A\s X$, $|A|=m\geq d$, contains a subset $\0$ of size at most $d$ such that $P(\M{H}(\0)\triangle C)>\varepsilon >0$ and $\M{H}(\0)_{|_A}=C_{|_A}$, is at most 
$$\sum_{n=0}^d\binom{m}{n}(1-\varepsilon)^{m-n}.$$

\end{proof}
\begin{lem}[\cite{Floyd95samplecompression}]
Let $0<\e\leq1,\ 0<\delta\leq1$ and $m,d$ positive integers. If there is $0<\beta<1$ where 
$$ m\geq\frac{1}{1-\beta}\left(\frac{1}{\e}\ln(\frac{1}{\delta})+d+\frac{d}{\e}\ln(\frac{1}{\beta\e})\right),$$ then 
$$\sum_{n=0}^d\binom{m}{n}(1-\varepsilon)^{m-n}\leq\delta.$$
\end{lem}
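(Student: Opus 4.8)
The plan is to bound the tail sum $\sum_{n=0}^d \binom{m}{n}(1-\e)^{m-n}$ directly, using the hypothesis on $m$ to control each term. First I would dispense with the inequality $(1-\e)^{m-n}\le (1-\e)^{m-d}\le e^{-\e(m-d)}$, valid since $n\le d\le m$ and $1-\e\le e^{-\e}$; this replaces the $n$-dependent exponent by a uniform one and pulls it out of the sum. What remains is $e^{-\e(m-d)}\sum_{n=0}^d\binom{m}{n}$, and by the Sauer--Shelah estimate recorded earlier in the excerpt, $\sum_{n=0}^d\binom{m}{n}=\binom{m}{\le d}\le (em/d)^d$. So it suffices to show
\[
(em/d)^d \, e^{-\e(m-d)} \le \delta,
\]
equivalently, taking logarithms,
\[
d\ln(em/d) - \e(m-d) \le \ln\delta,
\qquad\text{i.e.}\qquad
d + d\ln(m/d) + \e(m-d)\cdot(-1) \le \ln\delta,
\]
which rearranges to $\e(m-d) \ge d + d\ln(m/d) + \ln(1/\delta)$.

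The heart of the argument is then to verify that the hypothesis $m\ge \frac{1}{1-\beta}\big(\frac{1}{\e}\ln(1/\delta)+d+\frac{d}{\e}\ln(1/(\beta\e))\big)$ implies this last inequality. I would split $m$ as $m = \beta m + (1-\beta)m$ and use the $(1-\beta)m$ portion against the lower bound from the hypothesis: $(1-\beta)m \ge \frac{1}{\e}\ln(1/\delta)+d+\frac{d}{\e}\ln(1/(\beta\e))$, so $\e(1-\beta)m \ge \ln(1/\delta)+\e d + d\ln(1/(\beta\e))$. It then remains to absorb the leftover $\e\beta m$ against the term $d\ln(m/d)$ (plus the small slack $\e d$ from replacing $m-d$ by $m$). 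The key elementary fact is the inequality $\ln t \le \beta\e t/d - \ln(e/(\beta\e))\cdot(\text{something})$ — more precisely one uses $d\ln(m/d) \le \e\beta m + d\ln(1/(\beta\e)) - d$, which is exactly the statement $\ln u \le u - 1$ applied to $u = \e\beta m/d$ (giving $\ln(\e\beta m/d)\le \e\beta m/d - 1$, hence $\ln(m/d)\le \e\beta m/d - 1 - \ln(\e\beta)$). Combining the two displayed bounds on $\e(1-\beta)m$ and $\e\beta m$ yields $\e m \ge d + d\ln(m/d) + \ln(1/\delta)$, which dominates the required $\e(m-d)\ge d+d\ln(m/d)+\ln(1/\delta)$ since $\e d\ge 0$. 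Reading the chain backwards gives $\sum_{n=0}^d\binom{m}{n}(1-\e)^{m-n}\le\delta$.

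The main obstacle I anticipate is the bookkeeping in the last step: matching the precise shape of the hypothesis (the $1/(\beta\e)$ inside the log, the bare additive $d$, the $1/(1-\beta)$ prefactor) against what the $\ln u\le u-1$ trick produces, and making sure the slack terms ($\e d$ versus $d$, and the sign conventions in $m-d$ versus $m$) all point the right way. It may be cleaner to avoid the Sauer bound $(em/d)^d$ and instead use $\sum_{n=0}^d\binom{m}{n}\le m^d$ (for $m\ge 2$, say) or keep the sum as $\binom{m}{\le d}$ and estimate $\ln\binom{m}{\le d}\le d\ln(em/d)$; I would try both and take whichever makes the constants align with the stated form most transparently. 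No deep idea is needed beyond $1-\e\le e^{-\e}$, Sauer--Shelah, and $\ln u \le u-1$ — the difficulty is purely in the careful choice of how to partition $m$ and which elementary bound to feed the logarithm.
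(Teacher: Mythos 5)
Your proposal is correct and follows essentially the same route as the paper's proof: bound $(1-\e)^{m-n}$ by $e^{-\e(m-d)}$, apply the Sauer--Shelah estimate $\binom{m}{\le d}\le(em/d)^d$, and use $\ln u\le u-1$ with $u=\beta\e m/d$ (the paper states this as $\ln(m)\le-\ln(\alpha)-1+\alpha m$ with $\alpha=\beta\e/d$) together with the split $m=\beta m+(1-\beta)m$ to verify $\e(m-d)\ge d+d\ln(m/d)+\ln(1/\delta)$. The bookkeeping you worry about does close up exactly as you sketch it.
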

\begin{proof}
Let $\e,\delta,\beta,m,d$ be as in the statement of the lemma. Then 
$$\frac{1}{\e}\ln(\frac{1}{\delta})+d+\frac{d}{\e}\left(-\ln(d)+\ln(\frac{d}{\beta\e})-1+\frac{\beta\e}{d}m+1\right)$$
$$=\beta m + \frac{1}{\e}\ln(\frac{1}{\delta})+d+\frac{d}{\e}\ln(\frac{1}{\beta\e})\leq m.$$
We will use the fact that $\ln(m)\leq-\ln(\alpha)-1+\alpha m$ for all $\alpha>0$. 
With $\alpha=\frac{\beta\e}{d}$ we get $$\ln(m)\leq\ln(\frac{d}{\beta\e})-1+\frac{\beta\e}{d}m$$ thus 
$$\frac{1}{\e}\ln(\frac{1}{\delta})+d+\frac{d}{\e}\left(-\ln(d)+\ln(m)+1\right)$$
$$\leq\frac{1}{\e}\ln(\frac{1}{\delta})+d+\frac{d}{\e}\left(-\ln(d)+\ln(\frac{d}{\beta\e})-1+\frac{\beta\e}{d}m+1\right)\leq m.$$ Hence we have $$\ln(\frac{1}{\delta})+d(-\ln(d)+\ln(m)+1)\leq \e(m-d)$$ and so 
$$\left(\frac{em}{d}\right)^d\leq e^{\e(m-d)}\delta.$$ Therefore since $m\geq d$, $$\sum_{i=0}^d\binom{m}{i}(1-\varepsilon)^{m-i}\leq\binom{m}{\leq d}(1-\varepsilon)^{m-d}\leq\left(\frac{em}{d}\right)^d(1-\varepsilon)^{m-d}\leq\left(\frac{em}{d}\right)^de^{-\e(m-d)}\leq\delta.$$
\end{proof}

Theorem 3.2.1 and the last lemma lead to bounds for the sample complexity of sample 
compression schemes. We illustrate in Figure 3.1 on the next page, that these bounds for sample compression schemes of size $d$ are better than the ones for general consistent learning rules on a space with VC dimension $d$. Note that 0.05 is one of the standard choices for risk in statistics.
\begin{theo}[\cite{Floyd95samplecompression}]
If $(X,\CC)$ has a sample compression scheme $\HH$ of size $d$ satisfying (M5), then for $0<\e\leq1,\ 0<\delta\leq1$, if $\LL_{\HH}$ is a learning rule (if $\LL_{\HH}$ satisfies (M2)) it has sample complexity at most
$$m_{\LL_{\HH}}(\e,\delta)\leq\frac{1}{1-\beta}\left(\frac{1}{\e}\ln(\frac{1}{\delta})+d+\frac{d}{\e}\ln(\frac{1}{\beta\e})\right),$$
for any $0<\beta<1$.
\end{theo}
\begin{proof}
Let $\HH$ be as in the statement of the theorem, fix $0<\e\leq1,\ 0<\delta\leq1,\ 0<\beta<1,\ C\in\CC,\ P\in \m{P}$, and let 
$$m\geq\frac{1}{1-\beta}\left(\frac{1}{\e}\ln(\frac{1}{\delta})+d+\frac{d}{\e}\ln(\frac{1}{\beta\e})\right).$$
Since $\HH$ is consistent and satisfies (M5), by theorem 3.2.1
\begin{align*}
&P^m(\{A\in X^m:P(\LL_{\HH}(A,C_{|_A})\triangle C)>\e\})\\
&\leq P^m(\{A\in X^m:\exists \0\in[A]^{\leq d} \text{ where } P(\M{H}(\0)\triangle C)>\varepsilon\})\\
&=P^m(\{A\in X^m:\exists \0\in[A]^{\leq d} \text{ where } P(\M{H}(\0)\triangle C)>\varepsilon \text{ and }\M{H}(\0)_{|_A}=C_{|_A}\})\\
&\leq\sum_{n=0}^d\binom{m}{n}(1-\varepsilon)^{m-n},
\end{align*}
and by lemma 3.2.2 
$$\sum_{n=0}^d\binom{m}{n}(1-\varepsilon)^{m-n}\leq\delta.$$
\end{proof}

\begin{figure}
\centering
\includegraphics{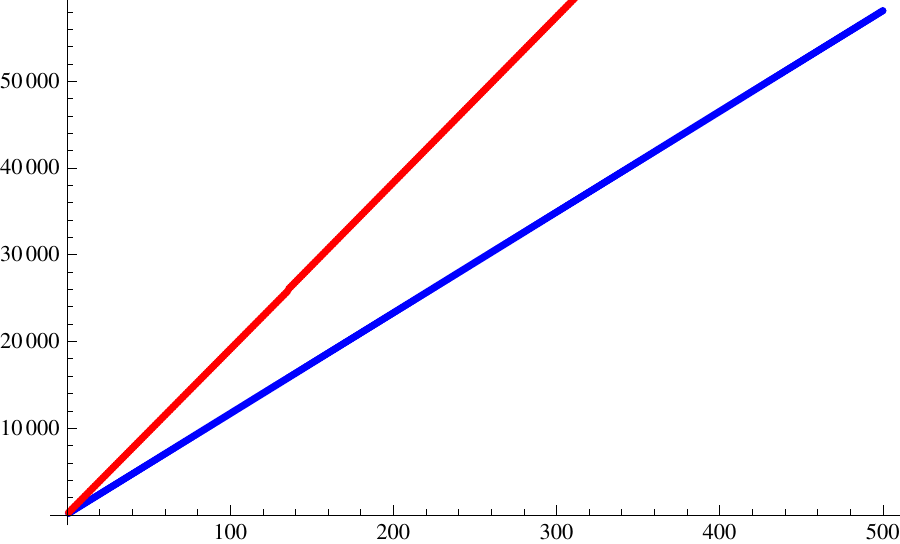}
\caption{
Given $ \e=0.05,\ \delta=0.05$, we plot a graph with $d$ on the x-axis and:\newline
$f(d)=\frac{1}{1-\beta}\left(\frac{1}{\e}\ln(\frac{1}{\delta})+d+\frac{d}{\e}\ln(\frac{1}{\beta\e})\right)$
in blue, and \newline
$g(d)=\frac{1}{1-\beta}\left(\frac{1}{\e}\ln(\frac{2}{\delta})+\frac{2d\ln2}{\e}+\frac{d}{\e}\ln(\frac{1}{\e \beta^2})\right)$
in red, \newline
where we optimize $\beta$ in each function for each value of $d$.
}
\label{fig:1}
\end{figure}
~~~~~~

~~~~~~

The sample complexity for our copy sample compression schemes can be bounded in similar fashion to that for sample compression schemes. The proofs are very similar to those for sample compression schemes and can be found in the appendix B.2.
\begin{theo}
Let $P$ be any probability measure on a measurable space $(X,\mathfrak{A})$, $C$ a concept in $\m{C}\s \mathfrak {A}$, $\{n_i\}_{i=0}^{k}\s\NN$, and $\m{H}$ any function from\\
$\bigcup_{i\in\{j\in\NN:0\leq j\leq k,\ n_j\neq 0\}}([X]^{= i}\times\{1,...,n_i\})$ to $2^X$ satisfying measurability condition (M5). Then the probability that $A\s X$, $|A|=m\geq k$, contains a subset $\0$ of size at most $k$, and $l\in\{1,...,n_{|\0|}\}$ such that $P(\M{H}(\0\times l)\triangle C)>\varepsilon >0$ and $\M{H}(\0\times l)_{|_A}=C_{|_A}$, is at most 
$$\sum_{i=0}^k n_i\binom{m}{i}(1-\varepsilon)^{m-i}.$$
\end{theo}

\begin{lem}
Let $0<\e\leq1,\ 0<\delta\leq1$, $m,k$ positive integers, and $n=\max(\{n_i\}_{i=0}^{d})$. if there is $0<\beta<1$ where 
$$m\geq\frac{1}{1-\beta}\left(\frac{1}{\e}\ln(\frac{n}{\delta})+k+\frac{k}{\e}\ln(\frac{1}{\beta\e})\right),$$ then 
$$\sum_{i=0}^kn_i\binom{m}{i}(1-\varepsilon)^{m-i}\leq\delta.$$
\end{lem}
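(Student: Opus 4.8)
\emph{Approach.} The plan is to follow the proof of the sample--compression analogue, Lemma 3.2.2, essentially verbatim, carrying the extra multiplicative constant $n=\max(\{n_i\})$ through every estimate; one checks that it merely replaces $1/\delta$ by $n/\delta$ wherever the latter occurs. I will assume $n\ge1$ (if $n=0$ the left side is $0$ and there is nothing to prove).

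\emph{Reduction.} First I would collapse the weighted sum into the form handled in Lemma 3.2.2. Since $n_i\le n$ for all $i$, and since $m\ge k$ (justified in the last paragraph), for $0\le i\le k$ one has $(1-\e)^{m-i}\le(1-\e)^{m-k}$, whence
$$\sum_{i=0}^k n_i\binom{m}{i}(1-\e)^{m-i}\ \le\ n\,(1-\e)^{m-k}\sum_{i=0}^k\binom{m}{i}\ =\ n\,(1-\e)^{m-k}\binom{m}{\leq k}\ \le\ n\left(\frac{em}{k}\right)^{k}(1-\e)^{m-k},$$
using the Sauer--Shelah bound $\binom{m}{\leq k}\le(em/k)^k$ and then $(1-\e)^{m-k}\le e^{-\e(m-k)}$. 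So it suffices to show $n(em/k)^{k}e^{-\e(m-k)}\le\delta$, i.e., after taking logarithms, $\ln(n/\delta)+k(\ln m-\ln k+1)\le\e(m-k)$, noting that $k(\ln m-\ln k+1)=\ln\!\big((em/k)^k\big)$.

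\emph{Extracting the inequality from the hypothesis.} This is the same manipulation as in Lemma 3.2.2. I would rewrite the hypothesis as $m\ge\beta m+\frac1\e\ln(n/\delta)+k+\frac k\e\ln\frac1{\beta\e}$, and use the elementary inequality $\ln m\le-\ln\alpha-1+\alpha m$ with $\alpha=\beta\e/k$, which gives $\ln m\le\ln\frac{k}{\beta\e}-1+\frac{\beta\e}{k}m$. Substituting this bound,
$$\frac1\e\ln(n/\delta)+k+\frac k\e\big(\ln m-\ln k+1\big)\ \le\ \frac1\e\ln(n/\delta)+k+\frac k\e\ln\frac1{\beta\e}+\beta m\ \le\ m.$$
Multiplying through by $\e$ and rearranging yields exactly $\ln(n/\delta)+k(\ln m-\ln k+1)\le\e(m-k)$, which together with the reduction above proves the lemma.

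\emph{The one bookkeeping point.} The only thing to verify carefully (and essentially the only possible obstacle, though it is routine) is that $m\ge k$, needed both for the monotonicity step $(1-\e)^{m-i}\le(1-\e)^{m-k}$ and for the Sauer--Shelah bound. Since $n\ge1$ and $\delta\le1$ we have $\frac1\e\ln(n/\delta)\ge0$, and since $\beta\e<1$ we have $\frac k\e\ln\frac1{\beta\e}>0$; hence the bracket in the hypothesis is at least $k$, and $\frac1{1-\beta}\ge1$ then forces $m\ge k$. Everything else is the identical chain of elementary inequalities used in Lemma 3.2.2.
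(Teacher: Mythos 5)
Your proof is correct and follows essentially the same route as the paper's: the same bound $\sum_i n_i\binom{m}{i}(1-\e)^{m-i}\le n\binom{m}{\leq k}(1-\e)^{m-k}\le n(em/k)^k e^{-\e(m-k)}$, and the same use of $\ln m\le-\ln\alpha-1+\alpha m$ with $\alpha=\beta\e/k$ to extract $\ln(n/\delta)+k(\ln m-\ln k+1)\le\e(m-k)$ from the hypothesis. Your explicit verification that $m\ge k$ is a small improvement, since the paper merely asserts it.
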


Theorem 3.2.4 and the last lemma lead to bounds for the sample complexity of copy sample 
compression schemes.
\begin{theo}
If $(X,\CC)$ has a $\{n_i\}_{i=0}^{k}$-copy sample compression scheme $\HH$ of size $k$ satisfying (M5), and $n=\max(\{n_i\}_{i=0}^{d})$, then for $0<\e\leq1,\ 0<\delta\leq1$, if $\LL_{\HH}$ is a learning rule (if $\LL_{\HH}$ satisfies (M2)) it has sample complexity at most
$$m_{\LL_{\HH}}(\e,\delta)\leq\frac{1}{1-\beta}\left(\frac{1}{\e}\ln(\frac{n}{\delta})+k+\frac{k}{\e}\ln(\frac{1}{\beta\e})\right),$$
for any $0<\beta<1$.
\end{theo}

Compared to the bounds for sample complexity of sample compression schemes, copy sample compression schemes may be better in some instances. For example, in any concept space with $|X|=884$ and a sample compression scheme of size $7$ there exists (by proposition 2.4.4) a $18418$-copy sample compression scheme of size $5$. In this case, if we wish to learn with
accuracy $\e=0.05$ and risk $\delta=0.05$, the previous theorem guarantees the $18418$-copy sample compression scheme of size $5$ will achieve this with sample size $879$, but the bounds for sample complexity of sample compression schemes (for all $\beta\in (0,1)$) exceed $884$.

\cleardoublepage

\chapter[Measurability of Sample Compression Schemes and Learning Rules]{Measurability of Sample Compression Schemes and Learning Rules}

\section{Measurability of Compression Schemes}
~~~

Given a sample compression scheme $\HH$ of size $d$ and $\0\in[X]^{\leq d}$, the corresponding hypothesis $H(\0)$ is not necessarily measurable with respect to a given sigma algebra on $X$. If we are creating our compression scheme via the compactness theorem we would like to be able to see when the resulting compression scheme will have $H(\0)$ measurable $\forall\0\in[X]^{\leq d}$. This condition is necessary for sample compression schemes to be considered as a learning rule, but it is not sufficient and in the appendix B.1 we will discuss a sufficient condition ``(M5)" which allowed for the sample complexity bounds in theorem 3.2.3.

\begin{notation} 
Let $(X,\mathfrak{A})$ be a measurable space, $Pr(X)$ the set of all probability measures on $(X,\mathfrak{A})$, and let $\mathfrak{A}(X)$ be the set of all real valued bounded functions on $X$ which are
measurable with respect to $\mathfrak{A}$.
\end{notation}
In the future we will fix $(X,\mathfrak{A})$ to be our measurable space unless otherwise stated.

The following remark is obvious.
\begin{rmk}
Note that if a net of $\{0,1\}$-valued functions $\{f_{\alpha}\}_{\alpha\in I}$ converges pointwise to some $f\in\RR^X$, then 
$$\forall A\in [X]^{<\infty}\ \exists \alpha_0\in I\ \forall\alpha\in I,\ \alpha\geq\alpha_0\ \Rightarrow\ f_{\alpha |_A}=f_{|_A}$$ and in particular $f\in2^X$.
\end{rmk}

\begin{lem}[\cite{dudley1999uniform}]
Let $\m{F}\s2^X$ be $d$-maximal for some $d$. Then $\m{F}$ is closed under the topology of pointwise convergence of nets on $\RR^X$, the product topology on $\RR^X$. That is, if $\{f_{\alpha}\}_{\alpha\in I}$ is a net of functions in $\m{F}$ converging pointwise to some real valued function $f$, then $f\in \m{F}$ (for the remainder of this section when we write $f_{\alpha}\rightarrow f$, we mean that the convergence is pointwise unless mentioned otherwise).
\end{lem}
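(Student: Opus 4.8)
The plan is a short proof by contradiction combining the $d$-maximality of $\m{F}$ with the preceding remark on pointwise limits of $\{0,1\}$-valued nets. First I would record what the remark supplies: since every $f_{\alpha}$ is $\{0,1\}$-valued and $f_{\alpha}\rightarrow f$ pointwise, the limit $f$ lies in $2^X$, and for each finite $A\s X$ there is an index $\alpha_0$ (depending on $A$) with $f_{\alpha|_A}=f_{|_A}$ for all $\alpha\geq\alpha_0$. In particular $f\in2^X$, so it makes sense to speak of $\vc(\m{F}\cup\{f\})$ and to apply the defining property of $d$-maximality to $\m{F}\cup\{f\}$.

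Next, suppose toward a contradiction that $f\notin\m{F}$. Then $f\in2^X\setminus\m{F}$, so $d$-maximality gives $\vc(\m{F}\cup\{f\})>d$. Choose $A\s X$ shattered by $\m{F}\cup\{f\}$; passing to a subset of a shattered set, we may assume $|A|=d+1$, so that $(\m{F}\cup\{f\})\sqcap A=2^A$. Since $\vc(\m{F})=d<|A|$, the set $A$ is \emph{not} shattered by $\m{F}$ alone, i.e. $\m{F}\sqcap A$ is a proper subset of $2^A$. As $(\m{F}\sqcap A)\cup\{f_{|_A}\}=2^A$ while $\m{F}\sqcap A\neq2^A$, the difference $2^A\setminus(\m{F}\sqcap A)$ is nonempty and contained in $\{f_{|_A}\}$, hence equals $\{f_{|_A}\}$; in particular $f_{|_A}\notin\m{F}\sqcap A$.

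To finish, I would apply the remark once more, now to this specific finite $A$: there is $\alpha_0$ with $f_{\alpha_0|_A}=f_{|_A}$. But $f_{\alpha_0}\in\m{F}$, so $f_{|_A}=f_{\alpha_0|_A}\in\m{F}\sqcap A$, contradicting the previous paragraph. Therefore $f\in\m{F}$, and since this holds for every pointwise-convergent net in $\m{F}$, the class $\m{F}$ is closed in the product topology on $\RR^X$.

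There is no real obstacle here; the argument is essentially a one-line reduction. The only two points deserving a moment's care are (i) confirming via the remark that the pointwise limit is genuinely $\{0,1\}$-valued, so that $d$-maximality can legitimately be invoked for $\m{F}\cup\{f\}$, and (ii) observing that a size-$(d+1)$ shattered set for $\m{F}\cup\{f\}$ pins down $f_{|_A}$ as the unique trace on $A$ missing from $\m{F}$ — precisely the feature that makes the eventual agreement $f_{\alpha|_A}=f_{|_A}$ yield a contradiction.
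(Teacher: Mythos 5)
Your proof is correct and follows essentially the same route as the paper's: establish via the remark that $f\in2^X$ and $f_{|_A}\in\m{F}\sqcap A$ for every finite $A$, then use $d$-maximality to produce a shattered set of size $d+1$ and derive a contradiction with $\vc(\m{F})=d$. The only cosmetic difference is that you phrase the contradiction as $f_{|_A}$ both lying and not lying in $\m{F}\sqcap A$, whereas the paper phrases it as $\m{F}\sqcap A=2^A$ violating the VC bound; these are the same argument.
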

\begin{proof}
Let $\{f_{\alpha}\}_{\alpha\in I}$ be a net in $\m{F}$ such that $f_{\alpha}\rightarrow f\in\RR^X$. For $A\in [X]^{<\infty}$, by \\remark 4.1.2 $f\in2^X$. Also $\exists \alpha\in I$ such that $f_{\alpha |_A}
=f_{|_A}$ and so 
$$f_{|_A}\in\{f_{\alpha |_A}\}_{\alpha\in I}\s\m{F}_{|_A}.$$ 
Now suppose $f\notin\m{F}$. Since $\m{F}$ is $d$-maximal, $\vc(\m{F}\cup\{f\})>d$ and so $\exists A\in [X]^{<\infty}$ of cardinality $d+1$ such that $2^A=\m{F}_{|_A}\cup \{f_{|_A}\}=\m{F}_{|_A}$, contradicting $\vc(\m{F})=d$.
\end{proof}

One would wonder whether in a measurable space $(X,\mathfrak{A})$ with $\m{C}\s\mathfrak{A}$ and\\$\vc(\m{C})<\infty$, the closure of $\m{C}$ in $\RR^X$ lies within $\mathfrak{A}$. The answer is negative in general as shown by our following example (the definition of universally measurable is found in appendix A.1):

\begin{egg}
Let $(X,\m{B})$ be an uncountable standard Borel space and $\mathfrak{A}$ the sigma algebra of universally Borel measurable sets. Let $\kappa$ be the cardinal
$$\kappa=\min\{\kappa':\exists A\s X \text{ non }\mathfrak{A}\text{-measurable and } |A|=\kappa'\}.$$
Of course, $\aleph_0<\kappa\leq \mathfrak{c}$, but the value of $\kappa$ cannot be specified in ZFC and would require additional set theoretic axioms; for example under Martin's Axiom $\kappa=\mathfrak{c}$.
Fix $Y\s X$ non $\mathfrak{A}$-measurable with $|Y|=\kappa$.
Well order $Y$ with $\prec$ such that each initial segment $I_x^{\prec}=\{y:y\preceq x\}$ of $Y$ has cardinality strictly less than $Y$ (in other words fix a minimal well ordering on $Y$). By the definition of $\kappa$, each $I_x^{\prec}$ is universally measurable. Now set for $x\in Y$, 
$$f_x=\chi_{I_x^{\prec}}.$$
One has $\{f_x\}_{x\in Y}\s\mathfrak{A}$ and $\vc(\{f_x\}_{x\in Y})=1$ ($(Y,\{f_x\}_{x\in Y})$ is $1$-maximum). We have that $f_x\rightarrow\chi_Y$ as a net with $Y$ directed by $\prec$ because for every $A\in[X]^{<\infty}$, letting $x\succeq a=\max_\prec A\cap Y$ (or pick any $a \in Y$ if $A\cap Y=\emptyset$), we have 
$$f_{x |_A}=\chi_{I_x^{\prec}\cap Y |_A}=\chi_{I_x^{\prec}\cap A\cap Y |_A}=\chi_{A\cap Y |_A}=\chi_{Y |_A}.$$

\end{egg}
	
We can use the previous example to show that the compactness theorem, using sample compression schemes on finite subdomains, can create a sample compression scheme returning a nonmeasurable hypothesis.
\begin{egg}
In the previous example let 
$$\CC=\{I_x^{\prec}: I_x^{\prec} \text{ is an initial segment of} Y\}.$$
Note that $(Y,\CC)$ is $1$-maximum
and by example 2.1.4, 
$$\m{H}(\{x\})=I_x^{\prec}\setminus\{x\},\ \m{H}(\emptyset)=Y$$ is a sample compression scheme of size $1$ for $(Y,\CC)$. 
We have that $\HH(\emptyset)=Y$ is not measurable even though $\CC$ consists only of measurable sets.\\
Note however by example 2.1.4
$$\m{H}(\{x\})=I_x^{\prec},\ \m{H}(\emptyset)=\emptyset$$
is also a sample compression scheme of size 1 for $(Y,\CC)$, and has measurable hypotheses. However, in this thesis we are unable to find a concept space with VC dimension $d$ and measurable concepts, in which any sample compression scheme of size $d$ must have a nonmeasurable hypothesis.
\end{egg}

\begin{lem}
Let $(X,\CC)$ be a concept space. For every finite subspace $(A,\CC\sqcap A)$, let $\m{H}_A$ be a compression scheme of size $d$ on $(A,\CC\sqcap A)$, let $\m{H}$ be the compression scheme of size $d$ on $X$ defined by $\{\m{H}_A\}_{A\in[X]^{< \infty}}$ via the compactness theorem (theorem 2.2.1), and let $\0\in[X]^{\leq
 d}$. If for all $A\in[X]^{<\infty}$ with $\0\s A$, a function $f_{A,\0}\in2^X$ is any extension of $\m{H}_A(\0)$ to $X$, we have that $\M{H}(\0)$ is a cluster point of the net $\{f_{A,\0}\}_{\0\s A\in[X]^{<\infty}}$ indexed by $A$ (where $\{A\in[X]^{<\infty}:\0\s A\}$ is directed by inclusion) in the topology of pointwise convergence; the product topology on $2^X$.
\end{lem}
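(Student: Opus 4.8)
The plan is to unwind the definition of $\m{H}(\0)$ from the proof of the compactness theorem (theorem 2.2.1) and recall that, for each $x\in X$, the value $\m{H}(\0)(x)$ was defined as the ultralimit along $\mathfrak{U}$ of the net $\{\m{H}_A(\0)(x)\}_{\0\cup\{x\}\s A\in[X]^{<\infty}}$. I want to show $\m{H}(\0)$ is a \emph{cluster} (= accumulation) point of the net $\{f_{A,\0}\}_{\0\s A}$ in the product topology on $2^X$. A net has $g$ as a cluster point iff it is \emph{frequently} in every basic neighbourhood of $g$; a basic neighbourhood of $\m{H}(\0)$ in the product topology is determined by a finite set $F\s X$ and requires agreement with $\m{H}(\0)$ on $F$. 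So the goal reduces to: for every finite $F\s X$ and every $A_0\in[X]^{<\infty}$ with $\0\s A_0$, there exists $A\supseteq A_0$ (finite, containing $\0$) with $f_{A,\0}{}_{|_F}=\m{H}(\0)_{|_F}$, equivalently $\m{H}_A(\0)_{|_{F\cap A}}=\m{H}(\0)_{|_{F\cap A}}$ and $F\s A$ (the latter we can just arrange, enlarging $F$ to $F\cup\0\cup A_0$, since $f_{A,\0}$ agrees with $\m{H}_A(\0)$ on $A\supseteq F$).

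First I would fix such an $F$ (WLOG $\0\cup A_0\s F$) and consider, for each $x\in F$, the set
\[
S_x=\{B\in[X]^{<\infty}:\0\cup\{x\}\s B,\ \m{H}_B(\0)(x)=\m{H}(\0)(x)\}.
\]
By the very definition of the ultralimit, $S_x\in\mathfrak{U}$ for every $x\in F$. Since $F$ is finite and $\mathfrak{U}$ is a filter, $\bigcap_{x\in F}S_x\in\mathfrak{U}$; in particular it is nonempty, and moreover it meets the filter-base element $\{B\in[X]^{<\infty}:F\s B\}$, which also lies in $\mathfrak{U}$ (it contains the generator indexed by the finite set $F$). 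Therefore there is a finite $A$ with $F\s A$ and $\m{H}_A(\0)(x)=\m{H}(\0)(x)$ for all $x\in F$. Since $F\supseteq\0\cup A_0$, this $A$ satisfies $\0\s A$, $A\supseteq A_0$, and $f_{A,\0}$ agrees with $\m{H}(\0)$ on $F$. That exhibits the net $\{f_{A,\0}\}$ as frequently in the chosen neighbourhood, so $\m{H}(\0)$ is a cluster point.

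The only mild subtlety — the part worth stating carefully rather than any real obstacle — is matching the two index sets: the net in the lemma is indexed by finite $A\supseteq\0$ and the ultrafilter construction naturally speaks of finite $B$ containing $\0\cup\{x\}$. Once one observes that enlarging the target finite set $F$ to absorb $\0$ and the tail index $A_0$ is harmless, the intersection-of-finitely-many-$\mathfrak{U}$-sets argument closes everything. (One should also note $\m{H}(\0)\in2^X$, which is immediate since each $\m{H}_B(\0)(x)\in\{0,1\}$ and an ultralimit of such values is again in $\{0,1\}$.) No appeal to lemma 4.1.3 or $d$-maximality is needed here; that will presumably enter a later result where one wants the cluster point to actually be a \emph{limit}, or to lie in the closure of a maximal class.
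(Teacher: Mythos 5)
Your proposal is correct and is essentially the paper's own argument: the sets you call $S_x$ are exactly the sets $\m{A}_x$ in the paper, and both proofs conclude by intersecting finitely many of them (together with the tail-index constraint) inside the ultrafilter $\mathfrak{U}$ to produce a suitable $A$. The only place the paper is more explicit is in verifying $S_x\in\mathfrak{U}$ when $\m{H}(\0)(x)=0$, which needs the small complement argument ($\{B:\0\cup\{x\}\s B,\ \m{H}_B(\0)(x)=1\}\notin\mathfrak{U}$ forces its relative complement in $\{B:\0\cup\{x\}\s B\}$ to lie in $\mathfrak{U}$) rather than following literally ``by definition.''
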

\begin{proof}
Let $\0\in[X]^{\leq d}$, let $\{f_{A,\0}\}_{\0\s A\in[X]^{<\infty}}$, $\{\m{H}_A\}_{A\in[X]^{< \infty}}$, $\m{H}$ be as in the statement of the theorem, and let $\mathfrak{U}$ be the ultrafilter on $[X]^{< \infty}$ defined as in our proof of the compactness theorem.
Let $x\in X$, 
$$\m{A}_x=\{A\in[X]^{< \infty}: \0\cup\{x\}\s A,\ \m{H}_A(\0)(x)=\m{H}(\0)(x)\}.$$ 
$\m{A}_x\in\mathfrak{U}$ because 
$$\m{H}(\0)(x)=1\Rightarrow\m{A}_x=\{A\in[X]^{< \infty}: \0\cup\{x\}\s A,\ \m{H}_A(\0)(x)=1=\m{H}(\0)(x)\}\in\mathfrak{U}$$ and 
$$\m{H}(\0)(x)=0\Rightarrow\m{A}_x^c=\{A\in[X]^{< \infty}: \0\cup\{x\}\s A,\ \m{H}_A(\0)(x)=1\neq\m{H}(\0)(x)\}\notin\mathfrak{U}\Rightarrow\m{A}_x\in\mathfrak{U}.$$ 
Now fix a basic open neighbourhood 
$$U=\{\m{H}(\0)(x_1)\}_{x_1}\times...\times\{\m{H}(\0)(x_n)\}_{x_n}\times\prod_{x\notin\{x_1,...,x_n\}}\{0,1\}_x$$ of $\m{H}(\0)$ in $2^X$, and let $B\in[X]^{< \infty}$. There is 
\begin{align*}
&B'\in\bigcap_{i=1}^{i=n}\m{A}_{x_i}\cap\bigcap_{b\in B}\m{A}_b=\\
&=\{A\in[X]^{< \infty}: \0\cup\{x_1,...,x_n\}\cup B\s A,\ \m{H}_A(\0)_{|_{\{x_1,...,x_n\}}}=\m{H}(\0)_{|_{\{x_1,...,x_n\}}}\}\in\mathfrak{U},
\end{align*}
hence $\0\cup\{x_1,...,x_n\}\cup B\s B'$ and $\m{H}_{B'}(\0)_{|_{\{x_1,...,x_n\}}}=\m{H}(\0)_{|_{\{x_1,...,x_n\}}}$. There is $f_{B',\0}$ such that $f_{B',\0|_{B'}}=\m{H}_{B'}(\0)$ thus $f_{B',\0|_{\{x_1,...,x_n\}}}=\m{H}(\0)_{|_{\{x_1,...,x_n\}}}$ and therefore $f_{B',\0}\in U$.
\end{proof}

\begin{lem}[Measurable Compactness Lemma]
Let $(X,\mathfrak{A})$ be a measurable space. $(X,\CC)$ has a sample compression scheme of size $d$ with measurable hypotheses if and only if every finite subspace $(A,\CC\sqcap A)$ of $(X,\CC)$ has a sample compression scheme of size $d$, $\m{H}_A$, where for all $ \0 \in [X]^{\leq d}$, there is $d_{\0}\in\NN$ and $\m{M}_{\0}\s\mathfrak{A}$ $d_{\0}$-maximal, such that $$\{\m{H}_A(\0)\}_{\0\s A\in[X]^{< \infty}}\s\{f_{|_A}:f\in\mathcal{M}_{\0},\ \0\s A\in[X]^{< \infty}\}.$$
\end{lem}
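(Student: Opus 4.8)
The plan is to prove both directions by combining the ultrafilter construction of the compactness theorem (Theorem 2.2.1) with the cluster-point characterization of Lemma 4.1.6 and the closure property of $d$-maximal classes (Lemma 4.1.3). For the \emph{sufficiency} direction, suppose each finite subspace $(A,\CC\sqcap A)$ has a compression scheme $\m{H}_A$ of size $d$ satisfying the stated condition: for every $\0\in[X]^{\leq d}$ there is $d_\0\in\NN$ and a $d_\0$-maximal $\m{M}_\0\s\mathfrak{A}$ with $\{\m{H}_A(\0)\}_{\0\s A\in[X]^{<\infty}}\s\{f_{|_A}:f\in\m{M}_\0,\ \0\s A\}$. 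Build $\m{H}$ on $X$ from the $\m{H}_A$ via the compactness theorem. Fix $\0\in[X]^{\leq d}$. For each $A\supseteq\0$, by hypothesis there is $g_{A,\0}\in\m{M}_\0$ with $(g_{A,\0})_{|_A}=\m{H}_A(\0)$; take $f_{A,\0}:=g_{A,\0}\in2^X$ as the extension of $\m{H}_A(\0)$ in Lemma 4.1.6. That lemma says $\m{H}(\0)$ is a cluster point of the net $\{f_{A,\0}\}_{\0\s A\in[X]^{<\infty}}$ in the product topology on $2^X$. Since a cluster point of a net is the limit of some subnet, there is a subnet of $\{g_{A,\0}\}$ converging pointwise to $\m{H}(\0)$; as every $g_{A,\0}\in\m{M}_\0$ and $\m{M}_\0$ is $d_\0$-maximal, Lemma 4.1.3 gives $\m{H}(\0)\in\m{M}_\0\s\mathfrak{A}$. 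Hence $\m{H}(\0)$ is measurable for every $\0$, proving $(X,\CC)$ has a compression scheme of size $d$ with measurable hypotheses.

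For the \emph{necessity} direction, suppose $(X,\CC)$ has a compression scheme $\m{H}$ of size $d$ with $\m{H}(\0)\in\mathfrak{A}$ for all $\0\in[X]^{\leq d}$. By Corollary 2.1.7, restricting $\m{H}$ to each finite subspace gives a compression scheme $\m{H}_A$ of size $d$ on $(A,\CC\sqcap A)$; concretely, for $\0\in[A]^{\leq d}$ set $\m{H}_A(\0)=\m{H}(\0)_{|_A}$ (and on the finitely many $\0\in[A]^{\leq d}$ not needed, any choice works — one checks the compression property survives restriction exactly as in Corollary 2.1.7). Now for each $\0\in[X]^{\leq d}$ I must supply $d_\0$ and a $d_\0$-maximal $\m{M}_\0\s\mathfrak{A}$ containing all the restrictions $\{\m{H}_A(\0)\}_{\0\s A}=\{\m{H}(\0)_{|_A}\}_{\0\s A}$. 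These are all restrictions of the single measurable set $\m{H}(\0)\in\mathfrak{A}$; so it suffices to take any $d_\0$-maximal family of measurable sets containing the fixed concept $\m{H}(\0)$ — e.g. $\{\m{H}(\0)\}$ has finite VC dimension $0$, hence by Zorn's Lemma (as noted after the definition of $d$-maximal) extends to a $0$-maximal class $\m{M}_\0$; but one must check this extension can be done \emph{inside} $\mathfrak{A}$, which follows since the Zorn's-Lemma argument only ever adds sets, and we may restrict the ambient $2^X$ to $\mathfrak{A}$ when $\mathfrak{A}$ itself has VC-relevant richness, or more simply observe that $\{\emptyset, X, \m{H}(\0)\}\cup\{\m{H}(\0)\triangle\{x\}\cap\cdots\}$ — this is the step needing care.

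The main obstacle is exactly this last point: producing, for the necessity direction, a $d_\0$-\emph{maximal} class of \emph{measurable} sets containing a prescribed measurable set $\m{H}(\0)$. Maximality is a strong "no room to grow" condition and one wants to grow only within $\mathfrak{A}$, not within $2^X$; the Zorn's Lemma argument after Definition 1.2.9 produces a maximal class in $2^X$, which need not be a subset of $\mathfrak{A}$. The fix is to work with the partial order of measurable concept classes of a fixed VC dimension ordered by inclusion — a chain of such has union again of bounded VC dimension and consisting of measurable sets, so Zorn's Lemma applies \emph{within} $\mathfrak{A}$; the resulting class is maximal among measurable classes of that VC dimension, and one checks (using Lemma 1.2.3's consequences and the fact that adding a measurable set either raises VC dimension or the class was already maximal in this restricted sense) that for the purposes of Lemma 4.1.3's closure argument this "$\mathfrak{A}$-maximal" notion coincides with honest $d_\0$-maximality when $\mathfrak{A}$ is a $\sigma$-algebra rich enough to contain the relevant pointwise limits — and indeed the definition of the lemma only requires that the net of restrictions land in restrictions of $\m{M}_\0$, so even taking $\m{M}_\0$ to be the $d$-maximal \emph{hull} of $\{\m{H}(\0)\}$ in $2^X$ and then intersecting appropriately suffices once one notes the only element whose measurability is at stake, $\m{H}(\0)$, is measurable by hypothesis. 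I would present the necessity direction in this streamlined form: $d_\0=\vc(\{\m{H}(\0)\})$, $\m{M}_\0$ any $d_\0$-maximal class in $2^X$ containing $\m{H}(\0)$, observing that the containment condition only constrains restrictions and makes no measurability demand on $\m{M}_\0$ beyond $\m{H}(\0)\in\mathfrak{A}$ — so the condition as literally stated ($\m{M}_\0\s\mathfrak{A}$) is what forces the extra argument, and that is where I would spend the care.
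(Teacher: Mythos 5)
Your sufficiency direction is exactly the paper's argument: choose the extensions $f_{A,\0}$ of $\m{H}_A(\0)$ inside $\m{M}_\0$, invoke Lemma 4.1.6 to see that $\m{H}(\0)$ is a cluster point of that net in the product topology, and use Lemma 4.1.3 (maximal classes are closed under pointwise convergence) to conclude $\m{H}(\0)\in\overline{\m{M}_\0}=\m{M}_\0\s\mathfrak{A}$. No issues there.

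The necessity direction is where you have a genuine gap, though it has a one-line fix. You correctly set $\m{H}_A(\0)=\m{H}(\0)_{|_A}$, so every restriction in sight comes from the single measurable set $\m{H}(\0)$; but you then convince yourself that producing a $d_\0$-maximal class inside $\mathfrak{A}$ containing $\m{H}(\0)$ requires a Zorn's Lemma extension that might escape $\mathfrak{A}$, and you end without actually exhibiting a legal $\m{M}_\0$. The resolution is that the singleton $\m{M}_\0=\{\m{H}(\0)\}$ is already $0$-maximal: it has VC dimension $0$, and for any $D\in 2^X\setminus\{\m{H}(\0)\}$ there is $x\in D\triangle\m{H}(\0)$, so $\{\m{H}(\0),D\}\sqcap\{x\}=2^{\{x\}}$ and the VC dimension jumps to $1$. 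Hence $d_\0=0$ and $\m{M}_\0=\{\m{H}(\0)\}\s\mathfrak{A}$ satisfy the hypothesis verbatim (the containment of restrictions is immediate), with no extension, no ``$\mathfrak{A}$-maximal'' variant, and no appeal to richness of $\mathfrak{A}$. This is precisely what the paper does. Your closing suggestions --- taking a $d$-maximal hull in $2^X$ and intersecting with $\mathfrak{A}$, or arguing the measurability demand on $\m{M}_\0$ is vacuous --- should be dropped: the lemma literally requires $\m{M}_\0\s\mathfrak{A}$, and the intersection of a maximal class with $\mathfrak{A}$ need not be maximal, so neither route closes the argument.
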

\begin{proof}
\emph{Necessity}: Let $\m{H}$ be a sample compression scheme of size $d$ with measurable hypotheses on $(X,\CC)$, and let $\0\in[X]^{\leq d}$. Now set $\m{M}_{\0}=\{\m{H}(\0)\}$, noting that $\m{M}_{\0}$ is $0$-maximal. Let $ A\s X$ be a finite subset, we have that 
$$\m{H}_A:[A]^{\leq d}\rightarrow 2^A,\ \0\mapsto\m{H}(\0)_{|_A}\in\m{M}_{\0|_A}$$ is a sample compression scheme of size $d$ on $A$.
\\
\emph{Sufficiency}: Let $\{\m{H}_A\}_{A\in[X]^{< \infty}}$, $\{d_\0\}_{\0\in[X]^{\leq d}}$, $\{\m{M}_{\0}\}_{\0\in[X]^{\leq d}}$ be as in the statement of the theorem, and let $\m{H}$ be the sample compression scheme of size $d$ on $(X,\CC)$ defined by $\{\m{H}_A\}_{A\in[X]^{< \infty}}$ via the compactness theorem (theorem 2.2.1). Let $\0\in[X]^{\leq d}$ be given. By lemma 4.1.3, $\m{M}_
{\0}$ is closed in $2^X$ since it is maximal. Since $\m{H}_A(\0)\in\m{M}_{\0|_A}$ for all $\0\s A\in[X]^{<\infty}$,
by lemma 4.1.6, 
$$\m{H}(\0)\in\overline{\m{M}_{\0}}=\m{M}_{\0}\s\mathfrak{A}.$$
\end{proof}
\begin{theo}
If $(X,\mathfrak{A})$ is a measurable space and $\m{C}\s\mathfrak{A}$ is a $d$-maximum and $d$-maximal class, then
$(X,\CC)$ has a sample compression scheme of size $d$ with measurable hypotheses. In this case, the sample compression scheme maps to concepts in $\m{C}$.
\end{theo}
\begin{proof}
$(X,\CC)$ is $d$-maximum implies that: $(X,\CC)$ has a sample compression scheme $\m{H}$ of size $d$, and every finite subspace $(A,\CC\sqcap A)$ is $d$-maximum. For every finite subspace $(A,\CC\sqcap A)$, we have $\m{H}_A:[A]^{\leq d}\rightarrow 2^A$, $\0\mapsto\m{H}(\0)_{|_A}$ is a sample compression scheme of size $d$ on $A$. Since $(A,\CC\sqcap A)$ is $d$-maximum, for all $\0\in[A]^{\leq d}$, $\m{H}_A(\0)\in\CC\sqcap A$. This shows that the hypothesis of lemma 4.1.7 in the 
converse direction holds for $\m{M}_{\0}=\CC$ and so $(X,\CC)$ has a sample compression scheme of size $d$ with measurable hypotheses. In particular, from the proof of the theorem, we see that 
$$\{\m{H}(\0):\0\in[X]^{\leq d}\}\s\bigcup_{\0\in[X]^{\leq d}}\m{M}_{\0}=\M{C}.$$
\end{proof}

\begin{notation}
$\bold{C}(X)$ is notation for the set of continuous functions from $X$ to $\RR$, where $\RR$ has the usual topology and the topology on $X$ will be clear in the context.
\end{notation}

The following theorem is an important result from \cite{Bourgain_Fremlin_Talagrand_1978} (theorem 2F p. 854 ), which will allow us to drop the maximality condition of the corollary in some circumstances.

\begin{theo}
Let $X$ be any Hausdorff space and $\m{C}\s \bold{C}(X)$ a pointwise bounded set. Then $\m{C}$ is relatively compact in the collection of universally Borel measurable real valued functions on $X$ under the topology of pointwise convergence if and only if for every $K\s X$ compact, every $\{C_n\}_{n\in\NN}\s\m{C}$, and every $\alpha<\beta$ in $\RR$, there exists $I\s\NN$ such that 
$$\{x\in K: C_n(x)\leq\alpha\ \forall n\in I,\ C_n(x)\geq\beta\ \forall n\in \NN\setminus I\}=\emptyset.$$
\end{theo}
\begin{lem}
Let $X$ be any Hausdorff space with a $d$-maximum concept class $\m{C}$ consisting of clopen sets. Then $(X,\CC)$ has a sample compression scheme of size $d$ with universally Borel measurable hypotheses.
\end{lem}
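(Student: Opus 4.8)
The plan is to pass from $\CC$ to its closure $\overline{\CC}$ in the product topology on $\RR^X$ and then invoke Theorem 4.1.8. Concretely, I would take $\mathfrak{A}$ to be the $\sigma$-algebra of universally Borel measurable subsets of $X$ and prove three facts: (a) $\overline{\CC}\s\mathfrak{A}$; (b) $\overline{\CC}$ is $d$-maximum; (c) $\overline{\CC}$ is $d$-maximal. Granting these, Theorem 4.1.8 provides a sample compression scheme $\m{H}$ of size $d$ for $(X,\overline{\CC})$ all of whose hypotheses are universally Borel measurable; and since $\CC\s\overline{\CC}$, so that $\CC_{|_{[X]^{<\infty}}}\s\overline{\CC}_{|_{[X]^{<\infty}}}$, the same $\m{H}$ is automatically a sample compression scheme of size $d$ for $(X,\CC)$. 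By Remark 4.1.2, $\overline{\CC}\s 2^X$, so its members genuinely are indicators of subsets of $X$.

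For (a) I would apply the Bourgain--Fremlin--Talagrand criterion, Theorem 4.1.10. Since every concept in $\CC$ is clopen its indicator lies in $\bold{C}(X)$, and $\CC$ is pointwise bounded by $1$, so Theorem 4.1.10 applies. To check its combinatorial hypothesis, fix a compact $K\s X$, a sequence $\{C_n\}_{n\in\NN}\s\CC$, and reals $\alpha<\beta$. If $\alpha<0$ take $I=\NN$, and if $\beta>1$ take $I=\emptyset$; in both cases the associated set is empty. Otherwise $0\le\alpha<\beta\le1$, and, the $C_n$ being $\{0,1\}$-valued, for each $I\s\NN$ the associated set equals $E_I:=\{x\in K:C_n(x)=0\ \forall n\in I,\ C_n(x)=1\ \forall n\in\NN\setminus I\}$, and we must exhibit $I$ with $E_I=\emptyset$. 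If $C_i=C_j$ for some $i\neq j$, then $E_{\{i\}}=\emptyset$ and we are done; otherwise the $C_n$ are distinct, and if every $E_I$ were nonempty then, picking a point of $E_J$ for each finite $F\s\NN$ and each $J\s F$, one sees that every subset of $\{C_n:n\in F\}$ is realized in the dual space $(X,\CC)^*$, so $\{C_n:n\in F\}$ is shattered in $(X,\CC)^*$ for every finite $F$, forcing $\vc((X,\CC)^*)=\infty$ and contradicting Corollary 1.3.8 because $\vc(\CC)=d<\infty$. Hence some $E_I$ is empty, the criterion of Theorem 4.1.10 holds, and $\overline{\CC}$ consists of universally Borel measurable functions; since $\overline{\CC}\s 2^X$, each member is the indicator of a set in $\mathfrak{A}$.

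For (b): any $(d+1)$-element set shattered by $\overline{\CC}$ would, by Remark 4.1.2, already be shattered by $\CC$, so $\vc(\overline{\CC})\le d$ (and $=d$, as $\CC\s\overline{\CC}$); the Sauer--Shelah lemma then gives $|\overline{\CC}\sqcap A|\le\binom{|A|}{\le d}$ for every finite $A$, while $\CC\s\overline{\CC}$ with $\CC$ $d$-maximum gives the reverse inequality, so $\overline{\CC}$ is $d$-maximum. For (c), let $D\in 2^X\setminus\overline{\CC}$; since $\overline{\CC}$ is closed there is a finite $A_0$ with $D\cap A_0\notin\overline{\CC}\sqcap A_0=\CC\sqcap A_0$, so $(\CC\sqcap A_0)\cup\{D\cap A_0\}$ has $\binom{|A_0|}{\le d}+1$ members and hence, by the Sauer--Shelah lemma, has VC dimension $>d$ as a class on $A_0$; a shattered $(d+1)$-subset $B\s A_0$ is then shattered by $(\CC\sqcap B)\cup\{D\cap B\}\s(\overline{\CC}\cup\{D\})\sqcap B$, so $\vc(\overline{\CC}\cup\{D\})>d$. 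As $D$ was arbitrary, $\overline{\CC}$ is $d$-maximal, completing the input to Theorem 4.1.8.

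The main obstacle is step (a), and within it the dual-shattering argument: one must handle possible repetitions in the sequence $\{C_n\}$ and translate ``every $E_I$ is nonempty'' correctly into shattering of arbitrarily large finite subsets in $(X,\CC)^*$ before Corollary 1.3.8 (finiteness of the dual VC dimension) can be invoked. The remaining ingredients --- Remark 4.1.2, the Sauer--Shelah lemma, Theorem 4.1.10 and Theorem 4.1.8 --- fit together routinely.
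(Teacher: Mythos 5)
Your proposal is correct, and its first half --- verifying the Bourgain--Fremlin--Talagrand criterion of Theorem 4.1.10 via finiteness of the dual VC dimension (Corollary 1.3.8) to conclude $\overline{\CC}\s\mathfrak{A}$ --- is exactly the paper's argument, done rather more carefully (you handle the ranges of $\alpha,\beta$, possible repetitions among the $C_n$, and the passage from ``every $E_I$ nonempty'' to dual shattering of arbitrary finite sets, all of which the paper elides). Where you diverge is the second half. The paper does not pass to $\overline{\CC}$ as a concept class: it takes the finite-subspace schemes $\m{H}_A$ with values in $\CC\sqcap A$ (available because finite $d$-maximum spaces have such schemes), runs the compactness-theorem construction, and uses the cluster-point Lemma 4.1.6 ``similarly to Lemma 4.1.7'' to place each hypothesis $\m{H}(\0)$ in $\overline{\CC}$ directly. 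You instead verify that $\overline{\CC}$ itself is $d$-maximum (via Remark 4.1.2, which gives $\overline{\CC}\sqcap A=\CC\sqcap A$ for finite $A$) and $d$-maximal (via the product topology and Sauer--Shelah), and then invoke Theorem 4.1.8 as a black box, noting that a scheme for $(X,\overline{\CC})$ restricts to one for $(X,\CC)$. The two routes rest on the same machinery --- Theorem 4.1.8 is itself proved through Lemmas 4.1.6 and 4.1.7 --- but yours buys a cleaner reduction that avoids the paper's informal ``similarly to Lemma 4.1.7'' step, at the cost of the extra (routine) verification that the pointwise closure of a $d$-maximum class is $d$-maximum and $d$-maximal. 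Both arguments are sound.
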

\begin{proof}
Clearly $\m{C}\s \bold{C}(X)$, and $\CC$ is a pointwise bounded set of functions. 
Suppose there is $\{C_n\}_{n\in\NN}\s\m{C}$ such that $\forall I\s\NN$, there is $x\in X$ where $x\notin C_n\ \forall n\in I$ and $x\in C_n\ \forall n\in \NN\setminus I$. This implies that $\vc((X,\CC)^*)=\infty$, which contradicts $\vc(X,\CC)<\infty$. Thus for every $\{C_n\}_{n\in\NN}\s\m{C}$, there exists $I\s\NN$ such that 
$$\{x\in X: C_n(x)=0\ \forall n\in I,\ C_n(x)=1\ \forall n\in \NN\setminus I\}=\emptyset,$$ and so by the previous theorem $\m{C}$ is relatively compact in the collection of universally Borel measurable real valued functions on $X$. 
Since $\CC$ is $d$-maximum, for all $\0\in[X]^{\leq d},\ \0\s A\in[X]^{< \infty}$, we have $\m{H}_A(\0)\in\CC_{|_A}$, and so using lemma 4.1.6 similarly to lemma 4.1.7
we have that $(X,\CC)$ has a sample compression scheme $\m{H}$ of size $d$ such that $\{\m{H}(\0):\0\in[X]^{\leq d}\}\s\overline{\M{C}}$. Therefore $\{\m{H}(\0):\0\in[X]^{\leq d}\}$ consists of universally Borel measurable sets.
\end{proof}

The following definitions can be found in reference \cite{pollard1984convergence} pp. 38-39.
\begin{defn}
A family $\CC$ of functions has a subset $\m{D}$ which is {\bfseries universally dense}\index{Universally dense} in $\M{C}$ if every $C\in\M{C}$ is the pointwise limit of a sequence in $\m{D}$.\\
A concept space $(X,\CC)$ is {\bfseries universally separable}\index{Universally separable} if there exists a countable universally dense subset $\m{D}$ of $\M{C}$.
\end{defn}
\begin{rmk}
$\m{D}'$ is universally dense in $\m{D}$, is equivalent to $\m{D}'$ is dense in $\m{D}$ in the topology generated by the $L^1(P)$ seminorm for every $P\in Pr(X)$.
\end{rmk} 
\begin{rmk}
Note that for every $A\in[X]^{<\infty}$, if $\m{D}$ is universally dense in $\M{C}$, then $\m{D}\sqcap A=\M{C}\sqcap A$. This implies that $\M{H}$ is a sample compression scheme of size $d$ on $(X,\m{D})$ if and only if it is a sample compression scheme of size $d$ on $(X,\m{C})$. Therefore, to prove something about the sample compression schemes for a universally separable concept class, we only need to consider sample compression schemes for countable concept classes.
\end{rmk}

We will use the following classical result in descriptive set theory (which can be found in \cite{kechris1995classical} pp. 83) about standard Borel spaces to get that all $d$-maximum universally separable concept spaces, in which the countable universally dense subset consists of Borel sets, have a sample compression scheme of size $d$ with universally Borel measurable hypotheses:
\begin{lem}
Let $X$ be a standard Borel space, and let $\m{C}$ be a countable family of Borel sets. There is a refinement of the topology on $X$ to a Polish topology generating the same Borel sigma algebra in which $\m{C}$ consists of clopen sets.
\end{lem}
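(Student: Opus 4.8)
The statement is a standard fact from descriptive set theory; the strategy is to prove it first for a \emph{single} Borel set and then amalgamate countably many topologies. Fix a Polish topology $\tau_0$ on $X$ whose Borel $\sigma$-algebra equals the given one, call it $\m{B}$ (this exists because $X$ is standard Borel). I would isolate two auxiliary facts. First, \emph{(open-set refinement)}: if $\tau$ is any Polish topology on $X$ and $U\in\tau$, then the topology $\tau'$ generated by $\tau\cup\{X\setminus U\}$ is Polish, has the same Borel $\sigma$-algebra as $\tau$, and makes $U$ clopen — indeed $(X,\tau')$ is the disjoint topological sum of the $\tau$-open subspace $U$ and the $\tau$-closed subspace $X\setminus U$, each of which is Polish, and $\tau'$ is obtained from $\tau$ by adjoining the single Borel set $X\setminus U$. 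Second, \emph{(countable amalgamation)}: if $\{\tau_n\}_{n\in\NN}$ are Polish topologies on $X$ with $\tau_0\s\tau_n$ and $\mathrm{Borel}(\tau_n)=\m{B}$ for all $n$, then the topology $\tau_\infty$ they generate is Polish with Borel $\sigma$-algebra $\m{B}$: the diagonal $x\mapsto(x,x,\dots)$ maps $(X,\tau_\infty)$ homeomorphically onto $\{(x_n):x_n=x_m\ \forall n,m\}$, which is closed (each $\tau_n$ is Hausdorff, and disjoint neighbourhoods can be found already inside the common subtopology $\tau_0$) in the Polish product $\prod_n(X,\tau_n)$, so $(X,\tau_\infty)$ is Polish; being second countable it has a countable basis refining $\bigcup_n\tau_n$, so every $\tau_\infty$-open set is a countable union of finite intersections of sets from $\bigcup_n\tau_n\s\m{B}$, whence $\mathrm{Borel}(\tau_\infty)\s\m{B}$, and the reverse inclusion is immediate from $\tau_0\s\tau_\infty$.

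Next I would prove the single-set version: \emph{for every $B\in\m{B}$ there is a Polish topology $\tau_B\supseteq\tau_0$ with Borel $\sigma$-algebra $\m{B}$ in which $B$ is clopen}. Let $\Sigma$ be the collection of all such $B$. Applying the open-set refinement to $\tau_0$ shows $\tau_0\s\Sigma$; $\Sigma$ is closed under complements because ``$B$ clopen'' is symmetric in $B$ and its complement; and $\Sigma$ is closed under countable unions because, given $B_n\in\Sigma$ with witnessing topologies $\tau_n$, the amalgamation $\tau_\infty$ is a Polish topology with Borel $\sigma$-algebra $\m{B}$ in which every $B_n$ is clopen, so $\bigcup_nB_n$ is $\tau_\infty$-open, and one further open-set refinement (now applied to $\tau_\infty$) turns $\bigcup_nB_n$ into a clopen set. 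Hence $\Sigma$ is a $\sigma$-algebra containing $\tau_0$, so $\Sigma=\m{B}$.

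Finally, enumerate $\m{C}=\{C_n:n\in\NN\}$; by the single-set version pick for each $n$ a Polish topology $\tau_n\supseteq\tau_0$ with Borel $\sigma$-algebra $\m{B}$ in which $C_n$ is clopen, and let $\tau$ be the topology generated by $\bigcup_n\tau_n$. By countable amalgamation $\tau$ is Polish with Borel $\sigma$-algebra $\m{B}$, and each $C_n$ — being $\tau_n$-clopen with $\tau_n\s\tau$ — is $\tau$-clopen. This is the required refinement.

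The main obstacle is the amalgamation lemma: one must correctly realize the amalgamated space as a closed (hence Polish) subspace of the countable product of the given Polish spaces via the diagonal, and then verify that adjoining countably many subbasic Borel sets cannot enlarge the Borel $\sigma$-algebra (this is where second countability of the amalgam is used). The open-set refinement and the closure of $\Sigma$ under complements are routine bookkeeping with topological sums and the fact that open and closed subspaces of Polish spaces are Polish.
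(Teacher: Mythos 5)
Your proposal is correct and is essentially the classical argument (single-set refinement via topological sums, then countable amalgamation through the diagonal embedding into a product) found in Kechris, \emph{Classical Descriptive Set Theory}, \S 13 — which is exactly the source the paper cites for this lemma without reproducing a proof. The only step worth double-checking, the closedness of the diagonal, is handled properly by separating points with $\tau_0$-open sets, so nothing is missing.
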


By combining the last two lemmas (lemmas 4.1.11 and 4.1.15) and noting remark 4.1.14, we get the 
 following potentially useful result:
\begin{theo}
Let $X$ be a standard Borel space with a $d$-maximum and universally separable concept class $\m{C}$. Then $(X,\CC)$ has a sample compression scheme of size $d$ with universally Borel measurable hypotheses.
\end{theo}

\cleardoublepage

\nonumchapter{Open Questions}
~~~

We have raised the following open question: even assuming that a sample compression scheme exists for every finite concept subspace of a given space, can we conclude that such a scheme consisting of measurable hypotheses will exist for a concept class consisting of Borel sets on a standard Borel domain? A stronger version of this question can also be asked for the measurability condition (M5), that one needs for the proof of the main sample complexity bound for sample compression schemes: Given that a sample compression scheme exists for every finite concept subspace of a given space, can we conclude that such a scheme also satisfying (M5) will exist for a concept class consisting of Borel sets on a standard Borel domain? Finally, one can ask if there is validity of either question under a stronger assumption on the concept class: e.g., image admissible Suslin, or universally separable.

It is an open question of whether or not there exists a sample compression scheme of size $O(d)$ for a concept space of VC dimension $d$. We may ask similar but weaker questions of copy sample compression schemes that may help in clarifying methods to prove, or disprove, the original question. For example: Given a concept space of VC dimension $d$ is there an $O(d)$-copy sample compression scheme of size $O(d)$?

\cleardoublepage








\appendix

\chapter{}

\section{Measure Theory Preliminaries}
~~

The following are some preliminary measure theoretic definitions, and can be found in \cite{billingsley2012probability} and \cite{kechris1995classical}.
\begin{defn}
A {\bfseries sigma algebra}\index{Sigma algebra} $\A$ on a set $X$, is a set of subsets of $X$ satisfying:
\begin{description}
\item (1) $X\in\A$
\item (2) $A\in\A$ implies $A^c\in\A$
\item (3) $\{A_i\}_{i=0}^{\infty}\s\A$ implies $\bigcup_{i=0}^{\infty}A_i\in\A$.

\end{description}
A {\bfseries measurable space}\index{Measurable space} $(X,\A)$ is a set $X$ equipped with a sigma algebra $\A$.
\end{defn}

It is easy to see that an intersection of sigma algebras is also a sigma algebra.
\begin{defn}
A {\bfseries sigma algebra generated}\index{Sigma algebra generated} by a family $\mathfrak{B}\s 2^X$, denoted $\0(\mathfrak{B})$, is the intersection of all sigma algebras on $X$ containing $\mathfrak{B}$.
\end{defn}
\begin{defn}
A {\bfseries measure}\index{Measure} on measurable space $(X,\A)$, is a function \\$\mu:\A\rightarrow[0,\infty]$ satisfying:
\begin{description}
\item (1) $\mu(\emptyset)=0$
\item (2) if $\{A_i\}_{i\in I}\s\A$ is a countable pairwise disjoint family, then $$\mu(\bigcup_{i\in I}A_i)=\sum_{i\in I}\mu(A_i).$$

\end{description}
A {\bfseries probability measure}\index{Probability measure} on measurable space $(X,\A)$, is a measure $P$ also satisfying $P(X)=1$. We denote the set of all probability measures on $(X,\A)$ by $Pr(X)$.
A {\bfseries measure space}\index{Measure space} $(X,\A,\mu)$ is a measurable space $(X,\A)$ equipped with a measure $\mu$.
\end{defn}

\begin{defn}
The {\bfseries completion}\index{Completion of a measure space} of a measure space $(X,\A,\mu)$ is the measure space $(X,\A_{\mu},\widehat{\mu})$ where 
$$\A_{\mu}=\0(\A\cup\{B\in 2^X:\exists A\in \A,\ B\s A\text{ and } \mu(A)=0\}),$$
and 
$$\widehat{\mu}:\A_{\mu}\rightarrow[0,\infty];B\mapsto \inf\{\mu(A):A\in\A,\ B\s A\}.$$
\end{defn}
It is true that $\widehat{\mu}$ is the unique measure extending $\mu$ to $\A_{\mu}$, and any $B\in\A_{\mu}$ is of the form $A\cup C$ where $A\in\A,\ C\in\{Y\in 2^X:\exists Y'\in \A,\ Y\s Y'\text{ and } \mu(Y')=0\}$.

\begin{defn}
For a measure space $(X,\A)$ the sigma algebra of {\bfseries universally measurable}\index{Universally measurable} subsets of $X$ (with respect to $\A$) is 
$$\A^*=\bigcap_{P\in Pr(X)}\A_{P}.$$
We say a set is universally measurable (with respect to $\A$), if it is an element of $\A^*$.
\end{defn}

\begin{defn}
The {\bfseries Borel sigma algebra}\index{Borel sigma algebra} of a topological space $(X,\m{T})$ is the sigma algebra $\m{B}=\0(\M{T})$.
A {\bfseries Borel space}\index{Borel space} is a measurable space $(X,\M{B})$ on a topological space, where the sigma algebra $\m{B}$ is the Borel sigma algebra on $X$.
\end{defn}

\begin{defn}
A {\bfseries Polish space}\index{Polish space} is a topological space $(X,\M{T})$ that is metrizable by a complete metric, and separable.
\end{defn}

\begin{defn}
A {\bfseries standard Borel space}\index{Standard Borel space} is a Borel space associated to a Polish topological space.
\end{defn}

Every uncountable Borel space $(X,\m{B})$ is isomorphic as a measure space to $[0,1]$ with the Borel sigma algebra (ie. there is a bijection $f:X\rightarrow[0,1]$ such that for every $A\s X$, $A$ is measurable in $X$ iff $f(A)$ is measurable in $[0,1]$) \cite{kechris1995classical}.


\section{Nets and Filters Preliminaries}
~~~

The following are some preliminary definition that can be found in \cite{munkres2000topology} and \cite{bell2006models}

\begin{defn}
A {\bfseries directed set}\index{Direct set} $I$ is a set equipped with a partial order such that 
$$\forall\alpha,\beta\in I,\ \exists\gamma\in I,\ \gamma\geq\beta\text{ and }\gamma\geq\alpha.$$
A subset $J$ of a directed set $I$ is {\bfseries cofinal}\index{Cofinal} in $I$ if
$$\forall\alpha\in I,\ \exists\beta\in J,\ \beta\geq\alpha.$$
\end{defn}
\begin{defn}
A {\bfseries net}\index{Net} in $X$ is a function mapping a directed set to $X$.
A {\bfseries subnet}\index{Subnet} of a net $f:I\rightarrow X$ is the function $f\circ g$, where $g$ is a nondecreasing function from a directed set $J$ to $I$ with $g(J)$ cofinal in $I$.
\end{defn}

\begin{defn}
A point $x$ of is a {\bfseries cluster point}\index{Cluster point} of a net $\{x_\alpha\}_{\alpha\in I}$ in a topological space, if for every neighbourhood $U$ of $x$ the set $\{\alpha\in I:x_\alpha\in U\}$ is cofinal in $I$.\\
A net $\{x_\alpha\}_{\alpha\in I}$ in a topological space {\bfseries converges}\index{Convergence of nets} to a point $x$ if for every neighbourhood $U$ of $x$ 
$$\exists \alpha\in I,\ \forall \beta\geq\alpha,\ x_{\beta}\in U.$$ 
\end{defn}

A point $x$ is a cluster point of a net if and only if there is a subnet converging to $x$.

\begin{defn}
A {\bfseries filter }\index{Filter} $\M{F}$ on a set $X$ is a set of subsets of $X$ satisfying:
\begin{description}
\item (1) $\emptyset\notin\M{F}$
\item (2) $A\in\M{F}$ and $A\s B$ imply $B\in\M{F}$
\item (3) $A,B\in\M{F}$ implies $A\cap B\in\M{F}$.
\end{description}
A {\bfseries filter base}\index{Filter base} $\M{F}$ on a set $X$ is a set of subsets of $X$ satisfying:
\begin{description}
\item (1) $\emptyset\notin\M{F}$
\item (2) $A,B\in\M{F}$ implies $A\cap B\in\M{F}$.
\end{description}
An {\bfseries ultrafilter}\index{Ultrafilter} $\mathfrak{U}$ on a set $X$ is a filter on $X$ where every $A\s X$ has either $A\in\mathfrak{U}$ or $A^c\in\mathfrak{U}$.
\end{defn}

Every filter base $\M{F}$ generates a unique filter that is equal to the intersection of all filters containing $\M{F}$, and every filter is contained in an ultrafilter as a consequence of Zorn's Lemma.

There are two types of ultrafilters, principal and free.
\begin{defn}
An ultrafilter $\mathfrak{U}$ on $X$ is {\bfseries principal}\index{Principal ultrafilter} if it has a least element under set inclusion, and is {\bfseries free}\index{Free ultrafilter} if it is not principal.
\end{defn}
Principal ultrafilters are of the form $\mathfrak{U}_A=\{B\in2^X:A\s B\}$ for some $A\s X$.

\cleardoublepage

\chapter{}

\section{Well Behaved Hypothesis Spaces and Other Forms of Measurability}
~~

The subject of this section will be additional conditions of measurability for learning rules. 

In the definition of a learning rule we had required that the hypothesis class is measurable, and that the function 
$$A\mapsto P(\LL(A,C_{|_A})\triangle C)$$ is measurable.
We will call these conditions (M2)\index{M1}\index{M2} and (M1) respectively. Note that (M1) implies (M2) and the consideration of chapter 4 was the (M2) condition for sample compression schemes.\\
Fix a measurable space $(X,\A)$.
\begin{notation}
Fix a set C, and an $\e>0$. Define
$$\m{D}_C=\{H\triangle C: H\in \mathfrak{H}\}.$$
When $\{C\}\cup\mathfrak{H}\s\mathfrak{A}$, for a measure $P\in Pr(X)$ define
$$\m{D}_C^\e=\{D\in\m{D}_C: P(D)>\e\}.$$
Note that for $m\in\NN$, $\A^m$ is notation for the product sigma algebra on $X^m$.
\end{notation}
\begin{defn}[\cite{Shawe-taylor93boundingsample}, \cite{Blumer:1989:LVD:76359.76371}]
Let $\{C\}\cup\hh\s\A$ and $\e>0$. Define 
$$Q^m_\e=\{A\in X^m:\exists D\in \m{D}_C^\e,\text{ such that } A\in (D^c)^m\}$$ and define
$$J^{m+k}_{\e,r}=$$
\begin{align*}
&=\{A\in X^{m+k}:\exists D\in \m{D}_C^\e, A_{|_{\{1,..,m\}}}\in (D^c)^m\text{ and }\exists I\s\{m+1,...,m+k\}, \\
&~~~~~~~~~~~~~~~~~~~~~~~~~~~~~~~~~~~~~~~~~~~~~~~~~~~~~~~~~~\text{such that }|I|\geq kr\e\text{ and }A_{|_I}\in D^{|I|} \}.
\end{align*}
We will say a hypothesis space satisfies {\bfseries(M3)}\index{M3} if:\\
 $$\text{ for all }m\geq 1,\ \e>0,\ C\in\A,\ P\in Pr(X,\A),$$ we have $$Q^m_\e\in\A^m,$$ and $$\text{ for all }m\geq \frac{4d}{\e},\ \e>0,\ k=m(\frac{\e r m}{d}-1),\ r=1-\sqrt{\frac{2}{\e k}},\ C\in\A,\ P\in Pr(X,\A)$$ such that $k$ is in $\NN$, we have $$J^{m+k}_{\e,r}\in\A^{m+k}.$$\\
(Note that $r=1-\sqrt{\frac{2}{\e k}}$, $k=m(\frac{\e r m}{d}-1)$ have a solutions in $\RR^+$ for $m\geq \frac{4d}{\e}$.)\\
We will say a hypothesis space satisfies {\bfseries(M4)}\index{M4}, or is {\bfseries well behaved} if:\\ $$\text{ for all }m\geq 1,\ \e>0,\ C\in\A,\ P\in Pr(X,\A),$$ we have $$Q^m_\e\in\A^m,$$ and $$\text{ for all }m\geq1,\ \e>0,\ k=m,\ r=\frac{m}{2},\ C\in\A,\ P\in Pr(X,\A),$$ we have $$J^{m+k}_{\e,r}\in\A^{m+k}.$$
\end{defn}

\begin{prop}[\cite{Blumer:1989:LVD:76359.76371}]
If $\mathfrak{H}$ is universally separable and (M2) then $\mathfrak{H}$ satisfies (M3) and (M4).
\end{prop}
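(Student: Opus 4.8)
The plan is to exploit universal separability to replace the uncountable existential quantifier ``$\exists D\in\m{D}_C^\e$'' in the definitions of $Q^m_\e$ and $J^{m+k}_{\e,r}$ by a countable union, thereby reducing each set to a countable union of visibly measurable sets. Fix $m,k,\e,r,C,P$ in the relevant parameter regime, and let $\mathfrak{D}\s\mathfrak{H}$ be a countable universally dense subset furnished by universal separability, so every $H\in\mathfrak{H}$ is the pointwise limit of a sequence from $\mathfrak{D}$. The single observation driving both cases is that if $H_n\to H$ pointwise (all functions being $\{0,1\}$-valued, hence bounded by $1$), then on the one hand, at any fixed finite set of points the $H_n$ eventually agree with $H$, and on the other hand, by the dominated convergence theorem, $P(H_n\triangle C)\to P(H\triangle C)$. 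Thus a pointwise-approximating sequence simultaneously controls the finitely many sample-point conditions and the measure condition $P(\,\cdot\,)>\e$. Throughout, (M2) gives $\mathfrak{H}\s\mathfrak{A}$, so each $H\triangle C\in\mathfrak{A}$.

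I would first treat $Q^m_\e$ and establish the identity
\[
Q^m_\e=\bigcup_{\substack{H\in\mathfrak{D}\\ P(H\triangle C)>\e}}\bigl((H\triangle C)^c\bigr)^m .
\]
The inclusion $\supseteq$ is immediate since $\mathfrak{D}\s\mathfrak{H}$. For $\subseteq$, given $A\in Q^m_\e$ with witness $H\in\mathfrak{H}$, $D=H\triangle C$, pick $H_n\in\mathfrak{D}$ with $H_n\to H$ pointwise. Since $A\in(D^c)^m$ means $H$ agrees with $C$ on the $m$ coordinates of $A$, pointwise convergence forces the same agreement for $H_n$ for all large $n$, i.e.\ $A\in((H_n\triangle C)^c)^m$; and $P(H_n\triangle C)\to P(H\triangle C)>\e$ forces $P(H_n\triangle C)>\e$ for all large $n$. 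Choosing a single large $n$ exhibits $A$ in the right-hand union. As this is a countable union of the $\mathfrak{A}^m$-measurable sets $((H\triangle C)^c)^m$, we conclude $Q^m_\e\in\mathfrak{A}^m$.

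Next I would handle $J^{m+k}_{\e,r}$ by the same scheme after unwinding its inner existential. For fixed $D$ and $A$, the clause ``$\exists I\s\{m+1,\dots,m+k\}$ with $|I|\geq kr\e$ and $A|_I\in D^{|I|}$'' holds precisely when $|\{j:m<j\leq m+k,\ a_j\in D\}|\geq kr\e$, i.e.\ when at least $\lceil kr\e\rceil$ of the last $k$ coordinates lie in $D$; for fixed $D\in\mathfrak{A}$ this equals $\bigcup_{|I|=\lceil kr\e\rceil}\bigcap_{j\in I}\{A:a_j\in D\}$, a finite union of measurable sets. The density argument then yields
\[
J^{m+k}_{\e,r}=\bigcup_{\substack{H\in\mathfrak{D}\\ P(H\triangle C)>\e}}\Bigl\{A\in X^{m+k}: A|_{\{1,\dots,m\}}\in\bigl((H\triangle C)^c\bigr)^m,\ \bigl|\{j>m:a_j\in H\triangle C\}\bigr|\geq kr\e\Bigr\},
\]
the nontrivial inclusion again following because a pointwise-approximating $H_n$ eventually reproduces, at all $m+k$ coordinates of $A$, both the avoidance pattern on the first $m$ coordinates and the count on the last $k$, while $P(H_n\triangle C)>\e$ holds for large $n$. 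Each set in this countable union is measurable by the displayed reduction together with the $Q$-case computation, so $J^{m+k}_{\e,r}\in\mathfrak{A}^{m+k}$.

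Finally, since (M3) and (M4) require exactly these two families of sets to be measurable and differ only in the admissible values of $(k,r)$ (and the constraint $k\in\NN$), and since the argument above is uniform in those parameters, establishing measurability of $Q^m_\e$ and $J^{m+k}_{\e,r}$ in this generality yields both conditions at once. The main obstacle is conceptual rather than computational: one must see that pointwise approximation from the countable dense set is strong enough to preserve \emph{simultaneously} the finitely many coordinate conditions (by eventual pointwise agreement) and the measure-theoretic condition $P(\,\cdot\,)>\e$ (by dominated convergence), which is precisely what collapses the a priori uncountable unions into countable ones.
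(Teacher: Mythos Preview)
Your proof is correct and is in fact cleaner than the one in the paper. Both arguments use universal separability to replace the uncountable existential over $\mathfrak{H}$ by a countable union, but the paper takes a more indirect route: it works with a countable universally dense subset $\m{D}'$ of $\m{D}_C$, introduces auxiliary sequences $\e_i\downarrow\e$ and $\delta_j\downarrow 0$, defines the doubly indexed families $\m{D}'_{i,j}=\{D'\in\m{D}':\exists D\in\m{D}_C,\ P(D)\geq\e_i,\ P(D'\triangle D)\leq\delta_j\}$, and represents $Q^m_\e$ as $\bigcup_i\bigcap_j\bigcup_{D'\in\m{D}'_{i,j}}(D'^c)^m$ (and similarly for $J^{m+k}_{\e,r}$). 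Your approach bypasses this machinery by observing directly that if $H_n\to H$ pointwise then dominated convergence gives $P(H_n\triangle C)\to P(H\triangle C)$, while eventual pointwise agreement on the finitely many sample coordinates transfers both the ``avoid $D$'' and the ``hit $D$ at least $\lceil kr\e\rceil$ times'' conditions verbatim to $H_n$ for large $n$. This yields the simpler one-layer identity $Q^m_\e=\bigcup\{((H\triangle C)^c)^m:H\in\mathfrak{D},\ P(H\triangle C)>\e\}$, and likewise for $J^{m+k}_{\e,r}$. The paper's extra $\e_i,\delta_j$ layers are essentially doing the same dominated-convergence step by hand; your formulation makes the mechanism transparent and avoids the bookkeeping.
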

\begin{proof}
Fix $m\geq 1,\ \e>0,\ C\in\A, \ P\in Pr(X,\A)$. Since $\hh$ is universally separable, $\m{D}_C$ is universally separable. Let $\m{D}'$ be a countable universally dense subset of $\m{D}_C$, let $\{\delta_i\}_{i=1}^{\infty}\s\RR$ be a decreasing sequence converging to zero, let $\{\e_i\}_{i=1}^{\infty}\s\RR$ be a decreasing sequence converging to $\e$, and let
$$\m{D}'_{i,j}=\{D'\in \m{D}': \exists D\in\m{D}_C\text{ with }P(D)\geq\e_i\text{ and } P(D'\triangle D)\leq \delta_j\}$$
We will show
$$Q^m_\e=\bigcup_{i=0}^{\infty}\bigcap_{j=0}^{\infty}\bigcup_{D'\in\m{D}'_{i,j}}(D'^c)^m,$$
and hence $Q^m_\e\in\A^m$.\\
Let $A\in\bigcup_{i=0}^{\infty}\bigcap_{j=0}^{\infty}\bigcup_{D'\in\m{D}'_{i,j}}(D'^c)^m$. Then $A\in(D'^c)^m$ for some $D'\in\m{D}'_{i,j}$ where $\e_i - \delta_i>\e$. This implies $D'\in\m{D}_C^\e$ and so $A\in Q^m_\e$.\\
Now let $A\in Q^m_\e$. Then there is $D\in\m{D}_C^\e$ and $i\in\NN$ such that $A\in (D^c)^m$ and $P(D)\geq \e_i$. Picking any sequence of sets $\{D_i'\}_{i=0}^{\infty}$ in $\m{D}'$ converging pointwise to $D$, we have also have $\{(D_i'^c)^m\}_{i=0}^{\infty}$ converges pointwise to $(D^c)^m$, and so for every $j\in\NN$ there is $D_i'$ where $P(D_i'\triangle D)\leq \delta_j$ and $A\in(D_i'^c)^m$. Thus $A\in\bigcup_{i=0}^{\infty}\bigcap_{j=0}^{\infty}\bigcup_{D'\in\m{D}'_{i,j}}(D'^c)^m$.
Using similar arguments we can show, given $k$ and $r$ such that $J^{m+k}_{\e,r}$ is defined, we have
$$J^{m+k}_{\e,r}=\bigcup_{i=0}^{\infty}\bigcap_{j=0}^{\infty}\bigcup_{D'\in\m{D}'_{i,j}}((D'^c)^m\times X^k\cap\bigcup_{l=\lceil kr\e\rceil}^{k}\{A\in X^k:\text{ exactly }l\text{ coordinates of } A \text{ are in }D'^c\}).$$
\end{proof}

\begin{defn}
Let $\{n_i\}_{i=0}^{k}$ be a finite sequence in $\NN$. We will say that a function 
$$\m{H}:\bigcup_{i\in\{j\in\NN:0\leq j\leq k,\ n_j\neq 0\}}([X]^{= i}\times\{1,...,n_i\})\rightarrow 2^X$$
satisfies {\bfseries (M5)}\index{M5} if for every $1\leq i \leq k$ and $l\in\{1,...,n_i\}$,
$$\{(x_1,...,x_{i+1})\in X^{i+1}:\HH(\{x_1,...,x_i\}\times\min(l,n_{|\{x_1,...,x_i\}|}))(x_{i+1})=1\}\in\A^{i+1}.$$
In particular, a function
$$\m{H}:[X]^{\leq d}\rightarrow 2^X$$
satisfies (M5) if for every $1\leq i \leq d$
$$\{(x_1,...,x_{i+1})\in X^{i+1}:\HH(\{x_1,...,x_i\})(x_{i+1})=1\}\in\A^{i+1}.$$
\end{defn}



\section{Sample Complexity of Copy Sample Compression Schemes}
~~

Here we include the omitted proofs from section 3.2 regarding sample complexity of copy sample compression schemes.
\begin{theo}
Let $P$ be any probability measure on a measurable space $(X,\mathfrak{A})$, $C$ a concept in $\m{C}\s \mathfrak {A}$, $\{n_i\}_{i=0}^{k}\s\NN$, and $\m{H}$ any function from \\
$\bigcup_{i\in\{j\in\NN:0\leq j\leq k,\ n_j\neq 0\}}([X]^{= i}\times\{1,...,n_i\})$ to $2^X$, satisfying measurability condition (M5). Then the probability that $A\s X$, $|A|=m\geq k$, contains a subset $\0$ of size at most $k$, and $l\in\{1,...,n_{|\0|}\}$ such that $P(\M{H}(\0\times l)\triangle C)>\varepsilon >0$ and $\M{H}(\0\times l)_{|_A}=C_{|_A}$, is at most 
$$\sum_{i=0}^k n_i\binom{m}{i}(1-\varepsilon)^{m-i}.$$
\end{theo}
\begin{proof}
Let $C\in\CC$ and $\varepsilon$ be given.
First we consider the probability that a set of size $m$ has a subset of size exactly $i\leq d$ with the property $P(\M{H}(\0\times l)\triangle C)>\varepsilon$ and $\M{H}(\0\times l)_{|_A}=C_{|_A}$ for some $l\in\{1,...,n_{|\0|}\}$. For $A=(a_1,...,a_m)\in X^m$ and $J=\{j_1,...,j_i\}\s\{1,...,m\}$, let $A_{|_J}$ denote $\{a_{j_1},...,a_{j_i}\}$.

There are $\binom{m}{i}$ many subsets of $A$ of size $i$, hence fixing $J$ a subset of $\{1,...,m\}$ of size $i$, the probability we wish to bound above is at most
\begin{align*}
&P^m(\{A\in X^m: \ \exists I\s\{1,...,m\}\text{ of size }i,\ \exists t\in\{1,...,n_i\},\\&~~~~~~~~~~~~~~~~~~\text{ such that } P(\HH(A_{|_I}\times t)\triangle C)>\varepsilon\text{ and }\M{H}(A_{|_I}\times \min(t,n_{|A_{|_I}|}))_{|_A}=C_{|_A} \})\\
&=\sum_{l=1}^{n_i}\binom{m}{i}P^m(\{A\in X^m: \ P(\HH(A_{|_J}\times \min(l,n_{|A_{|_J}|}))\triangle C)>\varepsilon\\ 
&~~~~~~~~~~~~~~~~~~~~~~~~~~~~~~~~~~~~~~~~~~~~~~~~~~~~~~~~~~~~\text{ and }\M{H}(A_{|_J}\times\min(l,n_{|A_{|_J}|}))_{|_A}=C_{|_A} \}). 
\end{align*}
Since permuting $J$ to some other subset of size $i
$ in $\{1,...,m\}$ does not affect the above probability, we can assume $J=\{1,...,i\}$. Fix $l\in\{1,...,n_i\}$.

We will prove at this point that\\
$\{A\in X^m: \ P(\HH(A_{|_J}\times \min(l,n_{|A_{|_J}|}))\triangle C)>\varepsilon\text{ and }\M{H}(A_{|_J}\times\min(l,n_{|A_{|_J}|}))_{|_A}=C_{|_A} \}$ is measurable due to the hypothesis that $\HH$ satisfies (M5): Let $1\leq p<q\leq m$ and let $\pi_{p,q}$ be the (measurable) function from $X^m$ to $X^{p+1}$ mapping $(x_1,...,x_m)\mapsto(x_1,...,x_p,x_q)$. By (M5) and the measurability of $C$ we have
\begin{align*}
&\{A\in X^m: \M{H}(A_{|_J}\times\min(l,n_{|A_{|_J}|}))_{|_A}=C_{|_A} \}\\
&=\{A\in X^m:A\in((\M{H}(A_{|_J}\times\min(l,n_{|A_{|_J}|}))\triangle C)^c)^m\}\\
&=\bigcap_{q=1}^m\{A\in X^m:(\M{H}(A_{|_J}\times\min(l,n_{|A_{|_J}|})))\triangle C)^c)(A_{|_{\{q\}}})=1\}\\
&=\bigcap_{q=1}^m\pi_{i,q}^{-1}(\{(x_1,...,x_{i+1})\in X^{i+1}:(\M{H}(\{x_1,...,x_i\}\times\min(l,n_{|\{x_1,...,x_i\}|}))\triangle C)^c)(x_{i+1})=1\})\\
&\in \A^m.
\end{align*}
Also $\{A\in X^m: P(\HH(A_{|_J}\times\min(l,n_{|A_{|_J}|}))\triangle C)>\varepsilon\}$ is measurable since 
\begin{align*}
&B=\{(x_1,...,x_{m+1})\in X^{m+1}:(\M{H}(\{x_1,...,x_{i}\}\times\min(l,n_{|\{x_1,...,x_i\}|}))\triangle C)^c(x_{m+1})=1\}\\
&=\pi_{i,m+1}^{-1}(\{(x_1,...,x_{i+1})\in X^{i+1}:(\M{H}(\{x_1,...,x_{i}\}\times\min(l,n_{|\{x_1,...,x_i\}|}))\triangle C)^c(x_{i+1})=1\})
\end{align*}
is measurable by (M5) and the measurability of $C$, and a straightforward application of Fubini's theorem gives us that the map
\begin{align*}
&(x_1,...,x_m)\mapsto \int_X\chi_B(x_1,...,x_{m+1})dP(x_{m+1})=\\
&=P(\{y:(x_1,...,x_m,y)\in B\})\\
&=P(\{y:y\in\M{H}(\{x_1,...,x_{i}\}\times\min(l,n_{|\{x_1,...,x_i\}|}))\triangle C)^c\})\\
&=P(\M{H}(\{x_1,...,x_{i}\}\times\min(l,n_{|\{x_1,...,x_i\}|}))\triangle C)^c)
\end{align*}
is measurable.

Now let 
\begin{align*}
&E_C=\{A\in X^{m}:\M{H}(A_{|_J}\times\min(l,n_{|A_{|_J}|}))_{|_{A_{|_{\{i+1,...,m\}}}}}=C_{|_{A_{|_{\{i+1,...,m\}}}}}\}\\
&=\{A\in X^m:A_{|_{\{i+1,...,m\}}}\in((\M{H}(A_{|_J}\times\min(l,n_{|A_{|_J}|}))\triangle C)^c)^{m-i}\}\\
&=\bigcap_{q=i+1}^m\{A\in X^m:(\M{H}(A_{|_J}\times\min(l,n_{|A_{|_J}|})))\triangle C)^c)(A_{|_{\{q\}}})=1\}\\
&=\bigcap_{q=i+1}^m\pi_{i,q}^{-1}(\{(x_1,...,x_{i+1})\in X^{i+1}:(\M{H}(\{x_1,...,x_i\}\times\min(l,n_{|\{x_1,...,x_i\}|}))\triangle C)^c)(x_{i+1})=1\})\\
&\in \A^m.
\end{align*} 
and 
\begin{align*}
E_\varepsilon&=\{A\in X^i:P(\M{H}(A\times\min(l,n_{|A_{|_J}|}))\triangle C)>\varepsilon\}\\
&=\{A\in X^i:P((\M{H}(A\times\min(l,n_{|A_{|_J}|}))\triangle C)^c)\leq (1-\varepsilon)\}.
\end{align*} 
$E_\varepsilon$ is measurable since \\$B=\{(x_1,...,x_{i+1})\in X^{i+1}:(\M{H}(\{x_1,...,x_{i}\}\times\min(l,n_{|\{x_1,...,x_i\}|}))\triangle C)^c(x_{i+1})=1\}$ is measurable by (M5) and the measurability of $C$, and a straightforward application of Fubini's theorem gives us that the map
\begin{align*}
&(x_1,...,x_i)\mapsto \int_X\chi_B(x_1,...,x_{i+1})dP(x_{i+1})=\\
&=P(\{y:(x_1,...,x_i,y)\in B\})\\
&=P(\{y:y\in\M{H}(\{x_1,...,x_{i}\}\times\min(l,n_{|\{x_1,...,x_i\}|}))\triangle C)^c\})\\
&=P(\M{H}(\{x_1,...,x_{i}\}\times\min(l,n_{|\{x_1,...,x_i\}|}))\triangle C)^c)
\end{align*}
is measurable.\\
We have that
\begin{align*}
&P^m(\{A\in X^m: P(\HH(A_{|_J}\times\min(l,n_{|A_{|_J}|}))\triangle C)>\varepsilon\\
&~~~~~~~~~~~~~~~~~~~~~~~~~~~~~~~~~~~~~~~~~~~~~~~~~~~~~~~~~~~~~\text{ and }\M{H}(A_{|_J}\times\min(l,n_{|A_{|_J}|}))_{|_A}=C_{|_A} \})\\
&\leq P^m(\{A\in X^m: P(\HH(A_{|_J}\times\min(l,n_{|A_{|_J}|}))\triangle C)>\varepsilon\\
&~~~~~~~~~~~~~~~~~~~~~~~~~~~~~~~~~~~~~~~~\text{ and }\M{H}(A_{|_J}\times\min(l,n_{|A_{|_J}|}))_{|_{A_{|_{\{i+1,...,m\}}}}}=C_{|_{A_{|_{\{i+1,...,m\}}}}} \})\\
&= P^m(E_C\cap (E_\e\times X^{m-i})).
\end{align*}
By Fubini's theorem
\begin{align*}
&P^m(E_C\cap (E_\e\times X^{m-i}))=\int_{E_\e\times X^{m-i}}\chi_{E_C}(x_1,...,x_m)dP^m\\
&=\int_{E_\e}\left(\int_{X^{m-i}}\chi_{E_C}(x_1,...,x_m)dP^{m-i}\right)dP^i.
\end{align*}
Now 
\begin{align*}
&(x_1,...,x_i)\times X^{m-i}\cap E_C\\
&=(x_1,...,x_i)\times\{A\in X^{m-i}:A_{|_{\{i+1,...,m\}}}\in((\M{H}(A_{|_J}\times\min(l,n_{|A_{|_J}|}))\triangle C)^c)^{m-i}\}
\end{align*}
and since $(x_1,...,x_i)\in E_\e$, the inner integral is at most $(1-\e)^{m-i}$ and so 
$$P^m(E_C\cap (E_\e\times X^{m-i}))\leq(1-\e)^{m-i}.$$ 
Therefore summing over all subsets $J$ of $\{1,...,m\}$ of size at most $k$, the probability that $A\s X$, $|A|=m\geq k$, contains a subset $\0$ of size at most $k$, and $l\in\{1,...,n_{|\0|}\}$, such that $P(\M{H}(\0\times l)\triangle C)>\varepsilon >0$ and $\M{H}(\0\times l)_{|_A}=C_{|_A}$, is at most $$\sum_{i=0}^k\sum_{l=1}^{n_i}\binom{m}{i}(1-\varepsilon)^{m-i}=\sum_{i=0}^kn_i\binom{m}{i}(1-\varepsilon)^{m-i}.$$
\end{proof}

\begin{lem}
Let $0<\e\leq1,\ 0<\delta\leq1$, $m,k$ positive integers, and $n=\max(\{n_i\}_{i=0}^{k})$. If there is $0<\beta<1$ where 
$$m\geq\frac{1}{1-\beta}\left(\frac{1}{\e}\ln(\frac{n}{\delta})+k+\frac{k}{\e}\ln(\frac{1}{\beta\e})\right),$$ then 
$$\sum_{i=0}^kn_i\binom{m}{i}(1-\varepsilon)^{m-i}\leq\delta.$$
\end{lem}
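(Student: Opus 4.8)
The plan is to deduce this from Lemma 3.2.2 (the Floyd--Warmuth estimate) by factoring the constant $n$ out of the sum. First note that the hypothesis forces $m\geq k$: since $0<\beta<1$ we have $\frac{1}{1-\beta}>1$, and each of $\frac1\e\ln(n/\delta)$ and $\frac k\e\ln(1/(\beta\e))$ is nonnegative (because $n\geq1\geq\delta$ and $\beta\e<1$), so the right-hand side is at least $k$. Consequently, for $0\leq i\leq k$ we have $m-i\geq m-k\geq 0$, hence $(1-\e)^{m-i}\leq(1-\e)^{m-k}$, and using $n_i\leq n$ together with the notation $\binom{m}{\leq k}=\sum_{i=0}^k\binom mi$ we get
$$\sum_{i=0}^k n_i\binom mi(1-\e)^{m-i}\ \leq\ n(1-\e)^{m-k}\sum_{i=0}^k\binom mi\ =\ n\binom{m}{\leq k}(1-\e)^{m-k}.$$
So it suffices to show $n\binom{m}{\leq k}(1-\e)^{m-k}\leq\delta$, which is exactly the content of Lemma 3.2.2 with $d$ replaced by $k$ and $\ln(1/\delta)$ replaced by $\ln(n/\delta)$; one then re-runs that lemma's argument, which I outline next.

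By the Sauer--Shelah Lemma $\binom{m}{\leq k}\leq(em/k)^k$, and $1-\e\leq e^{-\e}$, so it is enough to prove $n\bigl(\frac{em}{k}\bigr)^{k}e^{-\e(m-k)}\leq\delta$, which, on taking logarithms, dividing by $\e$, and using $\ln\bigl((em/k)^k\bigr)=k+k\ln(m/k)$, is equivalent to
$$\frac1\e\ln\!\Bigl(\frac n\delta\Bigr)+\frac k\e+\frac k\e\ln\!\Bigl(\frac mk\Bigr)+k\ \leq\ m.$$
Now apply the elementary inequality $\ln t\leq-\ln\alpha-1+\alpha t$ (valid for all $t,\alpha>0$, already used in the proof of Lemma 3.2.2) with $t=m$ and $\alpha=\beta\e/k$, which yields $\frac k\e\ln(m/k)\leq\frac k\e\ln(1/(\beta\e))-\frac k\e+\beta m$. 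Substituting this, the $\pm\frac k\e$ terms cancel and the desired inequality reduces to
$$\frac1\e\ln\!\Bigl(\frac n\delta\Bigr)+k+\frac k\e\ln\!\Bigl(\frac1{\beta\e}\Bigr)+\beta m\ \leq\ m,$$
i.e. to $m\geq\frac1{1-\beta}\bigl(\frac1\e\ln(n/\delta)+k+\frac k\e\ln(1/(\beta\e))\bigr)$, which is precisely the hypothesis.

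I do not expect a genuine obstacle: this is a routine adaptation of the proof of Lemma 3.2.2. The only step requiring a moment of care is the first reduction — verifying that replacing the sequence $\{n_i\}$ by the single bound $n$, and each exponent $m-i$ by $m-k$, really dominates the sum — which in turn rests on checking (as above) that the hypothesis on $m$ already forces $m\geq k$, so that those two replacements are legitimate. After that, the chain Sauer--Shelah $\to$ $1-\e\leq e^{-\e}$ $\to$ the $\ln t$ inequality is identical to the argument already given, with $d\mapsto k$ and $\ln(1/\delta)\mapsto\ln(n/\delta)$.
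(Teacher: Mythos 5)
Your proof is correct and follows essentially the same route as the paper's: bound the sum by $n\binom{m}{\leq k}(1-\e)^{m-k}$, apply Sauer--Shelah and $1-\e\leq e^{-\e}$, and close with the inequality $\ln t\leq-\ln\alpha-1+\alpha t$ at $\alpha=\beta\e/k$, exactly as in the paper's adaptation of Lemma 3.2.2. Your explicit check that the hypothesis already forces $m\geq k$ is a small point of added care that the paper leaves implicit.
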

\begin{proof}
Let $\e,\delta,\beta,m,k,n,\{n_i\}_{i=0}^k$ be as in the statement of the lemma. Then 
$$\frac{1}{\e}\ln(\frac{n}{\delta})+k+\frac{k}{\e}\left(-\ln(k)+\ln(\frac{k}{\beta\e})-1+\frac{\beta\e}{k}m+1\right)$$
$$=\beta m + \frac{1}{\e}\ln(\frac{n}{\delta})+k+\frac{k}{\e}\ln(\frac{1}{\beta\e})\leq m.$$
We will use the fact that $\ln(m)\leq-\ln(\alpha)-1+\alpha m$ for all $\alpha>0$. 
With $\alpha=\frac{\beta\e}{k}$\\we get $$\ln(m)\leq\ln(\frac{k}{\beta\e})-1+\frac{\beta\e}{k}m$$ thus 
$$\frac{1}{\e}\ln(\frac{n}{\delta})+k+\frac{k}{\e}\left(-\ln(k)+\ln(m)+1\right)$$
$$\leq\frac{1}{\e}\ln(\frac{n}{\delta})+k+\frac{k}{\e}\left(-\ln(k)+\ln(\frac{k}{\beta\e})-1+\frac{\beta\e}{k}m+1\right)\leq m.$$ Hence we have $$\ln(\frac{n}{\delta})+k(-\ln(k)+\ln(m)+1)\leq \e(m-k)$$ and so $$n\left(\frac{em}{k}\right)^k\leq e^{\e(m-k)}\delta.$$ Therefore since $m\geq k$, 
$$\sum_{i=0}^kn_i\binom{m}{i}(1-\varepsilon)^{m-i}\leq n\binom{m}{\leq k}(1-\varepsilon)^{m-k}\leq n\left(\frac{em}{k}\right)^k(1-\varepsilon)^{m-k}\leq n\left(\frac{em}{k}\right)^ke^{-\e(m-k)}$$
$$\leq\delta.$$
\end{proof}

\begin{theo}
If $(X,\CC)$ has a $\{n_i\}_{i=0}^{k}$-copy sample compression scheme $\HH$ of size $k$ satisfying (M5), and $n=\max(\{n_i\}_{i=0}^{d})$, then for $0<\e\leq1,\ 0<\delta\leq1$, if $\LL_{\HH}$ is a learning rule (if $\LL_{\HH}$ satisfies (M2)) it has sample complexity at most
$$m_{\LL_{\HH}}(\e,\delta)\leq\frac{1}{1-\beta}\left(\frac{1}{\e}\ln(\frac{n}{\delta})+k+\frac{k}{\e}\ln(\frac{1}{\beta\e})\right),$$
for any $0<\beta<1$.
\end{theo}
\begin{proof}
Let $\HH$ be as in the statement of the theorem, fix $0<\e\leq1,\ 0<\delta\leq1,\ 0<\beta<1,\ C\in\CC,\ P\in \m{P}$, and let 
$$m\geq\frac{1}{1-\beta}\left(\frac{1}{\e}\ln(\frac{n}{\delta})+k+\frac{k}{\e}\ln(\frac{1}{\beta\e})\right).$$
Since $\HH$ is consistent and satisfies (M5), by the previous theorem
\begin{align*}
&P^m(\{A\in X^m:P(\LL_{\HH}(A,C_{|_A})\triangle C)>\e\})\\
&\leq P^m(\{A\in X^m:\exists \0\in[A]^{\leq d},\ \exists l\in\{1,...,n_{|\0|}\} \text{ where } P(\M{H}(\0\times\min(l,n_{|\0|}))\triangle C)>\varepsilon\})\\
&=P^m(\{A\in X^m:\exists \0\in[A]^{\leq d},\ \exists l\in\{1,...,n_{|\0|}\} \text{ where } P(\M{H}(\0\times\min(l,n_{|\0|}))\triangle C)>\varepsilon\\
&~~~~~~~~~~~~~~~~~~~~~~~~~~~~~~~~~~~~~~~~~~~~~~~~~~~~~~~~~~~~~~~~~~~~~\text{ and }\M{H}(\0\times\min(l,n_{|\0|}))_{|_A}=C_{|_A}\})\\
&\leq\sum_{i=0}^k n_i\binom{m}{i}(1-\varepsilon)^{m-i},
\end{align*}
and by the previous lemma
$$\sum_{i=0}^k n_i\binom{m}{i}(1-\varepsilon)^{m-i}\leq\delta.$$
\end{proof}
\cleardoublepage

%
%
%
%
%
%
\printindex

\bibTexTest{biblio}         

\end{document}